\title{Precise Asymptotic Analysis of Deep Random Feature Models}
\author[1]{David Bosch\footnote{davidbos@chalmers.se}}
\author[1]{Ashkan Panahi\footnote{ashkan.panahi@chalmers.se}}
\author[2]{Babak Hassibi\footnote{bhassibi@caltech.edu}}
\affil[1]{Department of Data Science and AI, Computer Science and Engineering, Chalmers University of Technology}
\affil[2]{Department of Electrical Engineering, California Institute of Technology}
\pgfplotsset{compat=1.17}
\begin{document}

\newcommand{\bDelta}{\bm{\Delta}}
\newcommand{\bLambda}{\bm{\Lambda}}
\newcommand{\bGamma}{\bm{\Gamma}}
\newcommand{\bSigma}{\bm{\Sigma}}
\newcommand{\bOmega}{\bm{\Omega}}
\newcommand{\bPsi}{\bm{\Psi}}

\newcommand{\balpha}{\bm{\alpha}}
\newcommand{\bdelta}{\bm{\delta}}
\newcommand{\bomega}{\bm{\omega}}
\newcommand{\bgamma}{\bm{\gamma}}
\newcommand{\bepsilon}{\bm{\epsilon}}
\newcommand{\blambda}{\bm{\lambda}}
\newcommand{\btheta}{\bm{\theta}}
\newcommand{\bphi}{\bm{\phi}}
\newcommand{\bpsi}{\bm{\psi}}
\newcommand{\bmeta}{\bm{\eta}}
\newcommand{\bzeta}{\bm{\zeta}}
\newcommand{\bmu}{\bm{\mu}}
\newcommand{\bnu}{\bm{\nu}}
\newcommand{\bpi}{\bm{\pi}}
\newcommand{\bsigma}{\bm{\sigma}}

\newcommand{\bargamma}{\bar{\gamma}}
\newcommand{\bartheta}{\bar{\theta}}

\newcommand{\tilDelta}{\tilde{\Delta}}
\newcommand{\tlDelta}{\tilde{\Delta}}

\newcommand{\tlepsilon}{\tilde{\epsilon}}
\newcommand{\tltheta}{\tilde{\theta}}
\newcommand{\tlgamma}{\tilde{\gamma}}

\newcommand{\bA}{\mathbf{A}}
\newcommand{\bD}{\mathbf{D}}
\newcommand{\bE}{\mathbf{E}}
\newcommand{\bG}{\mathbf{G}}
\newcommand{\bH}{\mathbf{H}}
\newcommand{\bI}{\mathbf{I}}
\newcommand{\bJ}{\mathbf{J}}
\newcommand{\bL}{\mathbf{L}}
\newcommand{\bM}{\mathbf{M}}
\newcommand{\bN}{\mathbf{N}}
\newcommand{\bP}{\mathbf{P}}
\newcommand{\bQ}{\mathbf{Q}}
\newcommand{\bR}{\mathbf{R}}
\newcommand{\bS}{\mathbf{S}}
\newcommand{\bT}{\mathbf{T}}
\newcommand{\bU}{\mathbf{U}}
\newcommand{\bW}{\mathbf{W}}
\newcommand{\bX}{\mathbf{X}}
\newcommand{\bY}{\mathbf{Y}}
\newcommand{\bZ}{\mathbf{Z}}

\newcommand{\ba}{\mathbf{a}}
\newcommand{\bb}{\mathbf{b}}
\newcommand{\bc}{\mathbf{c}}
\newcommand{\bd}{\mathbf{d}}
\newcommand{\be}{\mathbf{e}}
\newcommand{\mbf}{\mathbf{f}}
\newcommand{\bg}{\mathbf{g}}
\newcommand{\bh}{\mathbf{h}}
\newcommand{\bl}{\mathbf{l}}
\newcommand{\bn}{\mathbf{n}}
\newcommand{\bp}{\mathbf{p}}
\newcommand{\bq}{\mathbf{q}}
\newcommand{\br}{\mathbf{r}}
\newcommand{\bs}{\mathbf{s}}
\newcommand{\bu}{\mathbf{u}}
\newcommand{\bv}{\mathbf{v}}
\newcommand{\bw}{\mathbf{w}}
\newcommand{\bx}{\mathbf{x}}
\newcommand{\by}{\mathbf{y}}
\newcommand{\bz}{\mathbf{z}}

\newcommand{\hbeta}{\hat{\beta}}
\newcommand{\htheta}{\hat{\theta}}
\newcommand{\hsigma}{\hat{\sigma}}
\newcommand{\hmu}{\hat{\mu}}

\newcommand{\hp}{\hat{p}}
\newcommand{\hr}{\hat{r}}
\newcommand{\hs}{\hat{s}}
\newcommand{\hw}{\hat{w}}
\newcommand{\hx}{\hat{x}}

\newcommand{\hN}{\hat{N}}

\newcommand{\hbSigma}{\hat{\bm{\Sigma}}}

\newcommand{\hba}{\hat{\mathbf{a}}}
\newcommand{\hbs}{\hat{\mathbf{s}}}
\newcommand{\hbx}{\hat{\mathbf{x}}}
\newcommand{\hbv}{\hat{\mathbf{v}}}
\newcommand{\hbw}{\hat{\mathbf{w}}}

\newcommand{\hbW}{\hat{\mathbf{W}}}

\newcommand{\dif}{\text{d}}

\newcommand{\bbC}{\mathbb{C}}
\newcommand{\bbE}{\mathbb{E}}
\newcommand{\bbR}{\mathbb{R}}
\newcommand{\bbN}{\mathbb{N}}
\newcommand{\bbZ}{\mathbb{Z}}

\newcommand{\calA}{\mathcal{A}}
\newcommand{\calB}{\mathcal{B}}
\newcommand{\calC}{\mathcal{C}}
\newcommand{\calD}{\mathcal{D}}
\newcommand{\calE}{\mathcal{E}}
\newcommand{\calF}{\mathcal{F}}
\newcommand{\calG}{\mathcal{G}}
\newcommand{\calH}{\mathcal{H}}
\newcommand{\calL}{\mathcal{L}}
\newcommand{\calN}{\mathcal{N}}
\newcommand{\calM}{\mathcal{M}}
\newcommand{\calP}{\mathcal{P}}
\newcommand{\calS}{\mathcal{S}}
\newcommand{\calT}{\mathcal{T}}
\newcommand{\calV}{\mathcal{V}}
\newcommand{\calW}{\mathcal{W}}
\newcommand{\calX}{\mathcal{X}}
\newcommand{\calY}{\mathcal{Y}}

\newcommand{\calhL}{\mathcal{\hat{L}}}

\newcommand{\tlA}{\tilde{A}}
\newcommand{\tlC}{\tilde{C}}
\newcommand{\tlD}{\tilde{D}}

\newcommand{\tlf}{\tilde{f}}
\newcommand{\tlv}{\tilde{v}}
\newcommand{\tls}{\tilde{s}}
\newcommand{\tlw}{\tilde{w}}
\newcommand{\tlx}{\tilde{x}}
\newcommand{\tly}{\tilde{y}}
\newcommand{\tlz}{\tilde{z}}

\newcommand{\barb}{\bar{b}}
\newcommand{\barm}{\bar{m}}
\newcommand{\barn}{\bar{n}}
\newcommand{\barr}{\bar{r}}
\newcommand{\barv}{\bar{v}}
\newcommand{\barx}{\bar{x}}
\newcommand{\bary}{\bar{y}}
\newcommand{\barz}{\bar{z}}

\newcommand{\barA}{\bar{A}}
\newcommand{\barC}{\bar{C}}
\newcommand{\barD}{\bar{D}}
\newcommand{\barH}{\bar{H}}
\newcommand{\barK}{\bar{K}}
\newcommand{\barL}{\bar{L}}
\newcommand{\barV}{\bar{V}}
\newcommand{\barW}{\bar{W}}
\newcommand{\barX}{\bar{X}}
\newcommand{\barZ}{\bar{Z}}

\newcommand{\barba}{\bar{\ba}}
\newcommand{\barbe}{\bar{\be}}
\newcommand{\barbg}{\bar{\bg}}
\newcommand{\barbh}{\bar{\bh}}
\newcommand{\barbx}{\bar{\bx}}
\newcommand{\barby}{\bar{\by}}
\newcommand{\barbz}{\bar{\bz}}

\newcommand{\barbA}{\bar{\bA}}

\newcommand{\tlbA}{\tilde{\bA}}
\newcommand{\tlbE}{\tilde{\bE}}
\newcommand{\tlbG}{\tilde{\bG}}
\newcommand{\tlbW}{\tilde{\bW}}
\newcommand{\tlbX}{\tilde{\bX}}
\newcommand{\tlbY}{\tilde{\bY}}

\newcommand{\tlbf}{\tilde{\mbf}}
\newcommand{\tlbg}{\tilde{\bg}}
\newcommand{\tlbv}{\tilde{\bv}}
\newcommand{\tlbw}{\tilde{\bw}}
\newcommand{\tlbx}{\tilde{\bx}}
\newcommand{\tlby}{\tilde{\by}}
\newcommand{\tlbz}{\tilde{\bz}}

\newcommand{\tc}{\text{c}}
\newcommand{\td}{{\text{d}}}
\newcommand{\ter}{{\text{r}}}
\newcommand{\ts}{{\text{s}}}
\newcommand{\tw}{{\text{w}}}

\newcommand{\bzero}{\mathbf{0}}
\newcommand{\bone}{\mathbf{1}}

\newcommand{\suml}{\sum\limits}
\newcommand{\minl}{\min\limits}
\newcommand{\maxl}{\max\limits}
\newcommand{\infl}{\inf\limits}
\newcommand{\supl}{\sup\limits}
\newcommand{\liml}{\lim\limits}
\newcommand{\intl}{\int\limits}
\newcommand{\ointl}{\oint\limits}
\newcommand{\bigcupl}{\bigcup\limits}
\newcommand{\bigcapl}{\bigcap\limits}

\newcommand{\opconv}{\text{conv}}

\newcommand{\eref}[1]{(\ref{#1})}

\newcommand{\sinc}{\text{sinc}}
\newcommand{\tr}{\text{Tr}}
\newcommand{\var}{\text{Var}}
\newcommand{\cov}{\text{Cov}}
\newcommand{\tth}{\text{th}}
\newcommand{\proj}{\text{proj}}

\newcommand{\nwl}{\nonumber\\}

\newenvironment{vect}{\left[\begin{array}{c}}{\end{array}\right]}
\newtheorem{theorem}{Theorem}
\newtheorem{lemma}{Lemma}
\theoremstyle{definition}
\newtheorem{definition}{Definition}
\newtheorem{assum}{Assumption}
\newtheorem{cor}{Corollary}

\theoremstyle{definition}
\newtheorem{assump}{Assumption}

\theoremstyle{definition}
\newtheorem{remark}{Remark}

\newcommand{\polylog}{\mathrm{polylog}\ }
\newcommand{\bF}{\bm{F}}
\newcommand{\bPhi}{\bm{\Phi}}
\newcommand{\bV}{\bm{V}}
\newcommand{\bB}{\bm{B}}
\newcommand{\bC}{\bm{C}}
\newcommand{\bbeta}{\bm{\beta}}
\newcommand{\diag}[1]{\mathrm{diag}\lbrace #1 \rbrace}
\newcommand{\bxi}{\bm{\xi}}
\newcommand{\bTheta}{\bm{\Theta}}
\newcommand{\uk}{^{(k)}}
\newcommand{\ukT}{^{(k)T}}
\newcommand{\norm}[1]{\left\|#1\right\|}
\newcommand{\isp}{\frac{1}{\sqrt{p}}}
\newcommand{\bXi}{\bm{\Xi}}
\newcommand{\bPi}{\bm{\Pi}}
\newcommand{\bUpsilon}{\bm{\Upsilon}}
\newcommand{\bvarphi}{\bm{\varphi}}
\newcommand{\barphi}{\bar{\phi}}
\newcommand{\barbR}{\bar{\bR}}
\newcommand{\bt}{\bm{t}}
\newcommand{\bK}{\bm{K}}
\newcommand{\op}{\mathrm{op}}
\newcommand{\hbd}{\hat{\bd}}
\newcommand{\whp}{\quad\mathrm{w.h.p}}
\newcommand{\hbtheta}{\hat{\btheta}}
\newcommand{\poly}{\mathrm{poly}}

\maketitle

\begin{abstract}%
We provide exact asymptotic expressions for the performance of regression by an $L-$layer deep random feature (RF) model, where the input is mapped through multiple random embedding and non-linear activation functions. For this purpose, we establish two key steps: First, we prove a novel universality result for RF models and deterministic data, by which we demonstrate that a deep random feature model is equivalent to a deep linear Gaussian model that matches it in the first and second moments, at each layer. Second, we make use of the convex Gaussian Min-Max theorem multiple times to obtain the exact behavior of deep RF models.  We further characterize the variation of the eigendistribution in different layers of the equivalent Gaussian model, demonstrating that depth has a tangible effect on model performance despite the fact that only the last layer of the model is being trained. 
\end{abstract}

\section{Introduction}\label{sec:introduction}
Recent experimental and theoretical results \citep{zhang2021understanding, belkin2019reconciling} have demonstrated that the classical understanding of overparameterized machine learning (ML) models requires further examination. One model that has been studied extensively is the random features (RF) mode \citep{rahimi2007random}, which is closely related to overparameterized neural networks \citep{daniely2016toward, daniely2017sgd, jacot2018neural, liu2021random, bach2017equivalence}. In this paper, we examine an extension of the RF model, which we call the deep RF (DRF) model, being equivalent to a deep NN, but only trained in the output layer. We consider the asymptotic regime, where the number of data points, model parameters, and input dimension grow infinite at constant ratio \citep{belkin2020two, hastie2019surprises, bartlettLongLugosiTsigler_2020, Bartlett30063} and give exact expressions that characterize the deep RF model in terms of training and generalization error.

Our analysis consists of two key steps. First, we prove universality, i.e. we demonstrate that the DRF model is asymptotically equivalent to a deep Gaussian surrogate model, matching the original model in the first and second moments, at each layer \citep{panahi2017universal, oymak2018universality}. Universality for the 1-layer RF model has previously been proven, e.g. in \citep{HuUniversalityLaws}. We make use of a different proof technique to extend these results to arbitrary many layers and introduce a new Gaussian surrogate model for DRF. This universality result alleviates the general difficulty of analyzing RF or DRF models, as the non Gaussian features are in general not amenable to stardard analysis techniques such as comparison theorem \citep{gordon1985some, thrampoulidis2015gaussian}, Gaussian widths \citep{chandrasekaran2012convex} or replica methods \citep{mezard1987spin}.

Having established universality, we then make use of the Convex Gaussian Min Max Theorem (CGMT) \citep{thrampoulidis2015gaussian} to study DRFs. This theorem allows us to consider an alternative optimization problem with the same asymptotic statistics, and is a popular tool in the analysis of the asymptotic regime \citep{bosch2021double, chang2020provable, dhifallah2020precise, thrampoulidis2015regularized, Loureiro2021Learning, Bosch2022DoubleDescent}. We make use of a recursive application of the CGMT \citep{Bosch2022DoubleDescent} to obtain asymptotic expressions for square loss functions with arbitrary convex regularization for $L$-layer DRF models.

\section{Related Works}\label{sec:relatedworks}

The random features (RF) \citep{rahimi2007random} model has been extensively examined in the asymptotic regime, under a multitude of conditions. For an incomplete list see \citep{hastie2019surprises, mei2019generalization, montanari2019generalization, goldt2020gaussian, goldt2019modeling, gerace2020generalisation, dhifallah2020precise, ghorbani2021linearized, Bosch2022DoubleDescent}. In the case of ridge regression \citep{louart2018random, mei2019generalization} exact expression for the training and generalization error can be established. In other cases, exact analysis is difficult. It was observed by many authors \citep{mei2019generalization, hastie2019surprises, goldt2019modeling, gerace2020generalisation, goldt2020gaussian} that a Gaussian surrogate model that matched the first and second moments had asymptotically equivalent statistics. A concrete proof of RF universality is given in \cite{hu2022universality}. 
We utilize Lindeberg's approach \citep{lindeberg1922neue} to demonstrate universality of DRF. This approach has been used to prove universality results in many other optimization problems \citep{korada2011applications,panahi2017universal, montanari2017universality,oymak2018universality, abbasi2019universality}. \cite{hu2022universality} prove a central limit theorem between random features and their Gaussian equivalent features as a key step in demonstrating universality. We make use of a different proof technique, by instead considering the problem in a dual space, where we may directly bound the difference between the leave one out iterates.

Beside RF,  universality has been demonstrated for many other models \citep{goldt2020gaussian, seddik2020random, dhifallah2021inherent, loureiro2021capturing, gerace2022gaussian}. Recently, \cite{montanari2022universality} gave a proof for the universality of empirical risk minimization for not necessarily convex loss and regularization functions. Their result also assume that a central limit theorem similar to \citep{hu2022universality} holds.

Subject to Gaussian features, the CGMT \citep{gordon1985some, thrampoulidis2015gaussian} is a powerful tool in the determination of the asymptotic performance \citep{Loureiro2021Learning, dhifallah2020precise, thrampoulidis2015regularized, chang2020provable, bosch2021double, Bosch2022DoubleDescent}. The CGMT determines an alternative, asymptotically equivalent optimization problem in statistical properties. In the case of correlated features, such as in the RF or DRF model, the alternative optimization still remains intractable. This issue is resolved in \citep{Bosch2022DoubleDescent} by applying the CGMT twice. Relying on the particular structure of the DRF covariance matrices, we extend the method of \citep{Bosch2022DoubleDescent} where the CGMT is applied    recursively to determine a nested scalar optimization that is asymptotically equivalent to the DRF model.

The covariance matrices for the Gaussian surrogate model that we obtain are similar in structure to the kernel matrices given in \cite{lee2017deep}. The authors demonstrate an exact equivalence between an infinitely wide deep NN and a Gaussian Process with covariance kernels that are recursively defined in a similar manner to the ones discussed in this paper. However, \citep{lee2017deep} consider networks of fixed size but infinite width, while we consider the asymptotic regime, where the number of data points and the input dimensions grow as well, hence maintaining a relatively narrower network.

\subsection{Paper Outline}\label{sec:introduction:subsec:PaperOutline}
In section \ref{sec:setup}, we introduce the DRF problem and its Gaussian surrogate, and express the necessary assumptions for our results to hold.
In section \ref{sec:Universality}, we prove the main universality theorem of this paper. Our proof takes two steps, first proving universality of a single layer, and subsequently using an inductive argument to extend this result to a full DRF problem.
In section \ref{sec:CGMTanalysis}, we give an alterative scalar optimization problem derived by means of the CGMT, that is asymptotically equivalent to the DRF problem subject to square loss and arbitrary, strongly convex regularization. We demonstrate experimentally the veracity of the determined expressions.

\section{Setup and Assumptions}\label{sec:setup}
\subsection{Random Feature Model and Preliminaries}
We consider a supervised learning setup with a dataset $\calD = \lbrace(\bx_k, y_k) \in \bbR^{d}\times \bbR\rbrace_{k=1}^n$. To find a relationship between the data points $\bx_k$ and the labels $y_k$, we consider function of the following form 
\begin{eqnarray}\label{eq:GenericFeatureMap}
    Y_{\btheta}(\bx) = \frac{1}{\sqrt{p}}\btheta^T \calF(\bx), \qquad \btheta\in\bbR^p,
\end{eqnarray}
where $\calF:\bbR^d \rightarrow \bbR^p$ is a given mapping of the data, called a \textit{feature map}. We note that $Y_{\btheta}$ is dependent upon the choice of the vector $\btheta$ by a linear relation. This shows the main advantage of \eqref{eq:GenericFeatureMap}: while $Y_{\btheta}$ can represent nonlinear functions, selecting $\btheta$ amounts to a linear regression task. To find the optimal value of $\btheta$, denote $\mbf_i=\calF(\bx_i)$ and take $\bF$ as a matrix with $\mbf_i$ as columns. We consider the empirical risk minimization framework and the following optimization problem:
\begin{equation} \label{eq:OptimalPoint}
    \hat\btheta=\hat{\btheta}(\bF) = \arg\min_{\btheta \in \bbR^p} \frac{1}{n}\sum_{i = 1}^n \ell\left(\frac{1}{\sqrt{p}}\btheta^T\mbf_i, y_i\right) + R(\btheta).
\end{equation}
 Here, $\ell(\cdot, \cdot)$ is a loss function, and $R(\btheta)$ is a regularization function. To measure the performance of $\hat{\btheta}$ we make use of the two common metric for supervised learning, that being the training error $\calE_{train}(\bF)$, i.e the optimal value in \eqref{eq:OptimalPoint}, and
the generalization error
\begin{equation}\label{eq:GeneralizationLoss}
    \calE_{gen}(\bF) = \bbE\left[\ell\left(  \frac{1}{\sqrt{p}}\hat{\btheta}^T\calF(\bx_{new}),y_{new}\right)\right],
\end{equation}
where the expectation is taken over $(\bx_{new}, y_{new})$, a new datapoint drawn from the same distribution as the dataset $\calD$. 

The main purpose of this paper is to obtain exact asymptotic expressions for the supervised learning metrics $\calE_{train},\calE_{gen}$ and other properties of $\hbtheta$, when the feature map $\calF$ is a deep random feature, generalizing the \textit{random features maps} \citep{rahimi2007random}. To define the deep features, we remind the (shallow) random features are given by $\phi(\bx, \bw_j):= \sigma(\bw_j^T\bx)$, for $j=1,2,\ldots,p$, where $\sigma$ is an activation function, and $\bw_j \sim \calN(0, \frac{1}{d}\bI_d)$ are a set of random weights. In vector form, we express these relations as $\bphi(\bx, \bW) = (\sigma(\bw_j^T\bx))_{j=1}^p$,
where the matrix $\bW$ has rows $\bw_j$. Then, the deep random features are
given through the following recursion: For $p_0 = d, p_1, p_2, \ldots,p_L \in \bbN$, we define the matrices $\bW^{(l)} \in \bbR^{p_l\times p_{l-1}}$ for $l= 1, \ldots, L$, each having independent rows $\bw^{(l)}_j \sim \calN(0, \frac{1}{p_{l-1}}\bI)$. Letting $\bx^{(0)}:= \bx$ we define
\begin{equation}\label{eq:RecursiveRandomFeatures}
    \bx^{(l)} = \bphi(\bx^{(l-1)}, \bW^{(l)}) = (\sigma(\bw_j^{(l)T}\bx^{(l-1)}))_{j=1}^{p_l}, \qquad l = 1, \ldots, L,
\end{equation}
\subsection{Necessary Assumptions}
Our results rely on the following assumptions:

\begin{enumerate}
    \item[A1] For some universal positive constants $\mu, M$, the regularization function $R$ is $\mu$-strongly convex and $M$-smooth with $M$-bounded third derivative in tensor (operator) norm. Moreover $R(\btheta)$ is minimized at $\btheta = \bm{0}$
    \item[A2] $\ell$ is a $\frac{1}{Cn}-$ strongly convex function in the first argument and its third derivative with respect to the first argument is bounded by $Cn$ for some constant $C$. Moreover, there exists a vector $\balpha = (\alpha_k)$ called \textit{isolated predictions} satisfying: $ \alpha_k \in \arg\min_{\alpha} \ell(\alpha, y_k)$, and $\norm{\balpha}_2 \leq C\sqrt{n}$ for a fixed constant $C$.
    \item[A3] The activation function $\sigma$ is an odd function applied element wise, with bounded derivatives. Furthermore, let $g_1, g_2$ be Gaussian variables distributed as
    \begin{equation}
        \begin{bmatrix}
        g_1\\
        g_2
        \end{bmatrix} \sim \calN\left(\bm{0}, \begin{bmatrix}
            \alpha_1 & \rho \\
            \rho & \alpha_2
        \end{bmatrix}\right).
    \end{equation}
    Let the functions $\eta_1(\alpha_1, \alpha_2, \rho) = \bbE[\sigma(g_1)\sigma(g_2)]$ and $\eta_2(\alpha_1) = \bbE[\sigma^2(g_1)]$. Then $\eta_1, \eta_2$ should be thrice differentiable at $\alpha_1 = \alpha_2 = 1$ and $\rho = 0$
    \item[A4] The dimensions of the number of data points $n$, the size of the input $p_0 = d$ and the size of subsequent layers $p_l$, where $l=1,\ldots, L$ all grow to infinity at fixed ratios. We denote this by $n\sim p_0\sim\cdots\sim p_L$, where $a \sim b$ is defined to mean that $\frac{a}{b} \xrightarrow[a, b\rightarrow\infty]{} C$ for some constant $C$.
    \item[A5] For each layer, $l$, the weight matrix $\bW^{(l)}\in\bbR^{p_l \times p_{l-1}} = [\bw_1^{(l)}\ \bw_2^{(l)}\ \cdots\ \bw_{p_l}^{(l)}]^T$ are independent Gaussian variables $\bw_i^{(l)} \overset{i.i.d}{\sim}\calN(0, \frac{1}{p_{l-1}}\bI_{p_{l-1}})$ for $1\leq i \leq p_l$. Furthermore, $\bW^{(l)}$ are independent of the input variables $\bx$. 
\end{enumerate}

\begin{remark} For assumption 2, we note that the strong convexity assumption on the loss function becomes less restrictive as $n$ grows. In the asymptotic limit, the strong convexity is no longer a significant requirement. Furthermore, the isolated prediction vectors exist and the condition is satisfied immediately if the loss function is minimized at the labels, i.e. it is minimized at the point $\ell(y_k, y_k) < \infty$.
\end{remark}

\begin{remark}
For assumption 3, we note that the condition holds for the majority of loss function used in practice including $\tanh$ and the error function. Furthermore, if oddness is dropped, the assumption on the functions $\eta_1$ and $\eta_2$ are additionally satisfied for functions like ReLU, sigmoids, and Gaussian activations. However, we require oddness.   
\end{remark}

Finally we impose a condition upon the input vectors $\bx_i$:

\begin{definition}\label{def:regularity}
Let $d\sim n$, we call a set $\lbrace \bx_k\in\bbR^{d} \rbrace_{k=1}^n$ \textit{regular} if
\begin{enumerate}
    \item Letting $\bX = [\bx_1\ \bx_2\ \cdots\ \bx_n]$, there is a constant $c<\infty$ such that $\frac{1}{\sqrt{n}}\norm{\bX}_{\op} < c$
    \item It holds that
    \begin{equation}
        \max_{i, j}\left|\frac{1}{d}\bx_i^T\bx_j - \delta_{ij} \right| \leq \frac{\polylog n}{\sqrt{n}},
    \end{equation}
    where $\delta_{ij}$ is the Kronecker delta.
\end{enumerate}
\end{definition}
Note that the first condition for regularity is trivially satisfied for finite $n$, however the condition must also hold for a fixed $c$ in the asymptotic limit. Further, note that regularity is exhibited by $\bx$ being Gaussian with high probability.

\section{Universality}\label{sec:Universality}
In the case of a single layer, it has been proven \citep{HuUniversalityLaws} that the following Guassian feature map has asymptotically equivalent statistics to the random features given in section~\ref{sec:setup}
\begin{equation}
    \Tilde{\bphi}(\bx, \bW) = \rho_1\bW\bx + \rho_2\bg,
\end{equation}
where $\bg$ is a standard normal vector, and $\rho_1, \rho_2$ are constants depending only on the activation function, given by
$
     \rho_1  = \bbE[\sigma'(z)], \ \rho_2 = \sqrt{\bbE[\sigma^2(z)] - \rho_0^2 + \rho_1^2},\  z\sim\calN(0, 1).
$
Similarly we define a \textit{deep Gaussian equivalent feature map}, recursively. We define $\bgamma^{(0)} = \bx$, and then define
\begin{equation}\label{eq:GaussianRandomFeatures}
    \bgamma^{(l)} = \Tilde{\bphi}_l(\bgamma^{(l-1)}, \bW^{(l)}) := \rho_{1,l}\bW^{(l)}\bgamma^{(l-1)} + \rho_{2,l}\bg^{(l)}, \qquad l = 1, \ldots, L,
\end{equation}
where $\bg^{(l)}$ is an independent standard normal vector of dimension $p_l$, and the constants $\rho_{1,l},\rho_{2,l}$ are recursively defined as
$    \rho_{1,l} = \bbE[\sigma'(\alpha_{l-1}z)], \ \rho_{2,l} = \sqrt{\bbE[\sigma^2(\alpha_{l-1}z)] - \alpha_{l-1}^2\rho_{1,l}^2}$, $ \ z\sim\calN(0, 1)$.
Here, $\alpha_l$ are constants given by the following recursive definition:
 $       \alpha_0 = 1, \  \alpha_{l} = \sqrt{\rho_{1,l}^2\alpha_{l-1}^2 + \rho_{2,l}^2}$.
Now, we consider the following two feature mappings for an input vector $\bx^{(0)}$:
\begin{eqnarray}\label{eq:featureMapsDef}
    \calF(\bx^{(0)}) = \bx^{(l)},\qquad \calG(\bx^{(0)}) = \bgamma^{(l)},
\end{eqnarray}
where $\bx^{(l)}$ and $\bgamma^{(l)}$ are given in \eqref{eq:RecursiveRandomFeatures} and  \eqref{eq:GaussianRandomFeatures}, respectively.

\subsection{Revisiting Universality of a Single Layer}\label{subsec:UniversalitySingleLayer}
The proof of universality of deep random features is a specific application of a universality theorem for a single layer, which we derive in this section. This result is more general than the previous studies such as \cite{hu2022universality}. In the subsequent section, we shall demonstrate how the universality of deep random features follows from these results.

Let $\phi_{j}:\bbR^{d}\times \Omega_j\rightarrow \bbR$ for $j=1,2,\ldots, p$ be a random feature map, where $\Omega_j$ is a sample space equipped with an arbitrary probability measure, such that $\phi_j(\cdot, \omega)$ for any $\omega\in \Omega_j$ is a particular realization of the feature map. Let $\Omega = \Omega_1\times \Omega_2 \times \cdots \Omega_p$ be a product space equipped with the product measure, and let $\bphi:\bbR^{d}\times \Omega\rightarrow \bbR^p$ represent the vector of random features, such that $\bphi(\bx, \bomega) = (\phi_{j}(\bx, \omega_j))_j$, where $\bomega = (\omega_j)\in \Omega$ is a realization. 

Next we consider a $m\times p$ matrix $\bD$ with columns $\bd_j\in\bbR^m$, which we call a \textit{synthesis dictionary}. We define the re-represented random feature vectors $\mbf:\bbR^d\times \Omega \rightarrow \bbR^{m}$ given by
\begin{equation}\label{eq:RerepresentedRF}
    \mbf(\bx, \bomega):= \sum_{j=1}^p\bd_{j}\phi_j(\bx, \omega_j) = \bD\bphi(\bx, \omega).
\end{equation}
We note that if $m = p$ we can choose $\bD = \bI_p$ and retain the original set of random features. However, re-representing the features is necessary for the proof of the deep random features case. We will drop the argument $\bomega$ when there is no risk of confusion, and denote $\bphi(\bx)$, $\mbf(\bx)$ as the random features and their re-representation. Similarly let $\bphi_k = \bphi(\bx_k)$ and $\mbf_k = \mbf(\bx_k)$ for $k = 1,\ldots, n$ which are random vectors. Finally, let $\bPhi$ and $\bF$ be the matrices with $(\bphi_k), (\mbf_k)$ as columns. We assume that the random features are centered:
\begin{equation}\label{eq:RFcentered}
    \bbE_{\bomega}[\bphi(\bx_k, \bomega)] = \bm{0}\qquad k =1, 2, \ldots, n.
\end{equation}
We further define the data kernel matrices $\bK_{j} = (K_{j, kl})_{kl}$ where $\bK_j$ is the covariance matrix of the $j$th row of $\bPhi$, given by
\begin{equation}\label{eq:RFcovarianceMatrices}
    K_{j, kl} = \bbE_{\omega_j}[\phi(\bx_k, \omega_j)\phi(\bx_l, \omega_j)].
\end{equation}

Next, we introduce a $p\times n$ Gaussian matrix $\bGamma$ with independent rows, and where the $j$th row is distributed by $\calN(0, \bK'_j)$. We note that if $\bK_j = \bK_j'$ that $\bPhi$ and $\bGamma$ have the same first and second moments amongst their elements. We then define $\bG = \bD\bGamma$ and let $\bg_k$ be the $k$th column of $\bG$.

Before stating the main theorem for this section we state the conditions on the dataset and matrices $\bK_j$ and $\bD$ that must hold. We shall show in the next section that these conditions are satisfied in the case of deep random features. We remind the reader of the definition of a sub-Gaussian vector:

\begin{definition}\label{def:subgaussianity}
    We say that a random vector $\bu = (u_k)\in \bbR^n$ is $\tau-$sub-Gaussian if for any unit vector $\ba = (a_k)\in\bbR^n$ the variable $A = \ba^T\bu$ is sub-Gaussian with parameter $\tau$, i.e.
     $   \bbE\left[e^{\lambda A} \right] \leq e^{\frac{\tau^2\lambda^2}{2}}$
    for all $\lambda \in \bbR$.
\end{definition}
We state the following requisite conditions:
\begin{enumerate}
    \item[B1] There exists a positive constant $C$ such that for all $j$, it holds that $\norm{\bK_j}_{\op}\leq C$ and the $j$th random feature vector $\bphi^j = \lbrace \phi(\bx_k, \omega_j) \rbrace_{k}$ is $C$-sub-Gaussian
    \item[B2] There exists a positive constant $C$ such that $\norm{\bD}_{\op} \leq C$.
\end{enumerate}
These assumptions must hold for all values of $n, d, p, m$ and must continue to hold when they grow asymptotically.
Subject to these conditions we state the following theorem that demonstrates universality.
\begin{theorem}\label{thm:GenericUniversality}
    Suppose that assumptions A1, A2, B1 and B2 hold, and that $n\sim p \sim m$. Then,
    \begin{enumerate}
    \item For any real function $\psi$ with bounded first, second, and third derivatives, there exists a constant $c < \infty$ such that
    \begin{equation}
        \left|\bbE\psi(\calE_{train}(\bF)) - \bbE\psi(\calE_{train}(\bG)) \right| \leq  \frac{c}{n}\sum_{j =1}^p\norm{\bK_j - \bK_j'}_{\op} + \frac{c}{\sqrt{n}}.
    \end{equation}
    \item Let $\hat{\btheta}_F$ and $\hat{\btheta}_G$ be the optimal points for the optimization \eqref{eq:OptimalPoint} for $\bF$ and $\bG$ respectively. For any bounded function $h:\bbR^{m}\rightarrow\bbR$ with bounded second and third derivatives (in tensor norm), where the bounds are constant in $n, p, m$. There exists a constant $c<\infty$ such that
    \begin{equation}
        \left|\bbE h\left(\hat{\btheta}_F\right) - \bbE h\left(\hat{\btheta}_G\right) \right| \leq \frac{c}{n}\sum_{j =1}^p\norm{\bK_j - \bK_j'}_{\op} + \frac{c}{\sqrt{n}}.
    \end{equation}
    \end{enumerate}
\end{theorem}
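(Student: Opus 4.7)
I plan to use the Lindeberg row-swapping method on $\bPhi$, exploiting the fact that under B1 the rows $\bphi^j=(\phi_j(\bx_k,\omega_j))_{k=1}^n$ of $\bPhi$ are independent across $j\in\{1,\ldots,p\}$, as are the Gaussian rows $\bgamma^j$ of $\bGamma$. Define the interpolating matrices $\bPhi^{(0)}=\bPhi$, $\bPhi^{(p)}=\bGamma$, and $\bPhi^{(j)}$ obtained from $\bPhi$ by replacing its first $j$ rows with the corresponding Gaussian rows. Telescoping,
$$\bbE\psi(\calE_{train}(\bD\bPhi))-\bbE\psi(\calE_{train}(\bD\bGamma)) = \sum_{j=1}^p \bbE\bigl[\psi(\calE_{train}(\bD\bPhi^{(j-1)}))-\psi(\calE_{train}(\bD\bPhi^{(j)}))\bigr],$$
so the problem reduces to bounding each per-row swap by $O(\|\bK_j-\bK_j'\|_{\op}/n) + O(n^{-3/2})$; since $p\sim n$, summing yields the claimed rate.

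\textbf{Leave-one-row-out linearization.} For fixed $j$, let $\bPhi^{(-j)}$ be $\bPhi^{(j-1)}$ with its $j$-th row zeroed and $\hbtheta^{(-j)}$ the corresponding minimizer of \eqref{eq:OptimalPoint}; by construction $\hbtheta^{(-j)}$ is independent of both $\bphi^j$ and $\bgamma^j$. I would Taylor-expand $\psi(\calE_{train}(\bD\,\cdot))$ as a function of the $j$-th row around zero, up to third order. By the envelope theorem the gradient is proportional to $(\bD^T\hbtheta^{(-j)})_j$ times $\ell'$ evaluated at the leave-one-out predictions, and the Hessian is a quadratic form whose kernel depends on $(\bD^T\hbtheta^{(-j)})_j^2$, $\ell''$ and the leave-one-out inverse Hessian of $L$; A1 and B2 bound $\|\hbtheta^{(-j)}\|$ and $\|\bD^T\hbtheta^{(-j)}\|$ uniformly, which makes these Taylor coefficients deterministic (conditional on the other rows) with $O(1)$ operator norm. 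The centering \eqref{eq:RFcentered} and matched first moments annihilate the zeroth- and first-order terms, while the second-order discrepancy is controlled by $\|\bK_j-\bK_j'\|_{\op}$ times the trace of the relevant Hessian, which is of order $1/n$ because the predictions are rescaled by $1/\sqrt{p}$.

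\textbf{Third-order remainder: the main obstacle.} The technical heart of the argument is showing that the third-order Taylor remainder is $O(n^{-3/2})$ uniformly in the other rows. Because the genuine optimizer $\hbtheta$ depends on $\bphi^j$, the third derivative couples derivatives of $\hbtheta$ with respect to $\bphi^j$ to factors of the form $\tfrac{1}{\sqrt{p}}(\bD^T\hbtheta)^T\bphi^j$. I would handle this by (i) implicit differentiation of $\nabla L(\hbtheta;\bF)=0$, expressing $\partial\hbtheta/\partial\bphi^j$ via $(\nabla^2 L)^{-1}$, which is uniformly $\preceq \mu^{-1}\bI$ by A1; (ii) the third-derivative bounds of $\ell$ and $R$ from A1--A2; and (iii) sub-Gaussianity of $\bphi^j$ in B1 together with $\|\bD\|_{\op}\leq C$ in B2, which ensure that the scalar linear statistics entering the remainder have bounded moments of every order, and the same for the Gaussian surrogate $\bgamma^j$. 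Combining these yields $|\bbE_{\bphi^j}[\cdots]-\bbE_{\bgamma^j}[\cdots]|\leq c\|\bK_j-\bK_j'\|_{\op}/n + cn^{-3/2}$, proving assertion 1.

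\textbf{Passing to observables of the optimizer.} For assertion 2 I would perturb the regularizer: set $L_\lambda(\btheta;\bF)=L(\btheta;\bF)-\lambda h(\btheta)$ for $\lambda$ small enough that $R-\lambda h$ still satisfies A1, which is possible since $h$ has bounded first and second derivatives. Letting $v(\lambda;\bF)=\min_{\btheta} L_\lambda(\btheta;\bF)$, the envelope theorem gives $\partial_\lambda v(0;\bF)=-h(\hbtheta_F)$, so a finite-difference combination of assertion 1 applied to $v(\lambda;\cdot)$ at two small values of $\lambda$ (with $\psi$ the identity on a compact range, truncated smoothly to satisfy the derivative hypotheses) produces the stated bound on $|\bbE h(\hbtheta_F)-\bbE h(\hbtheta_G)|$ with the same rate. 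Alternatively, one may re-run the Lindeberg swap directly on $\Psi(\bPhi)=h(\hbtheta(\bD\bPhi))$, using the implicit-function-theorem derivatives already controlled in the previous step; in either case the sub-Gaussian tail control from B1 is what makes the third-order remainders summable, and this is where I expect the proof to be most delicate.
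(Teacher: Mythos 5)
Your proposal follows the same skeleton as the paper's proof: a Lindeberg telescoping over the $p$ independent rows of $\bPhi$, a leave-one-row-out expansion in which matched first and second moments kill the low-order terms up to $\norm{\bK_j-\bK_j'}_{\op}/n$, sub-Gaussianity to control the third-order remainder at rate $n^{-3/2}$ per swap, and, for part 2, exactly the paper's perturbation $R\pm\epsilon h$ combined with part 1 and an envelope/finite-difference inequality. The genuine divergence is that you carry out the per-swap analysis in the primal, differentiating $\hbtheta$ via the implicit function theorem, whereas the paper first passes to the dual, writing $\calE_{train}(\bZ)=-\min_{\bd}\frac1n\sum_k\ell^*(-d_k,y_k)+R^*(\frac1n\bZ\bd)$. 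That reformulation is not cosmetic: since $\bZ=\sum_j\bd_j\bphi^j$, the swapped row $\bu$ enters the dual objective only through the single scalar $\bu^T\bd$ and the rank-one update $\bZ_{-r}(\bu)=\bZ_{-r}(\bm{0})+\bd_r\bu^T$, so the paper can write down an explicit Newton-corrected dual point $\hat\bd_{+r}(\bu)=\hat\bd_{-r}-\frac{\delta_r}{n}\bJ_r^{-1}\bu$ and bound the objective gap purely via the $\frac1M$-strong convexity and $\frac1\mu$-smoothness of $R^*$, with all Taylor coefficients reducing to scalars like $\bu^T\hat\bd_{-r}$.

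In your primal version the swapped row perturbs all $n$ predictions simultaneously, so the quantities you must control are vector- and matrix-valued: $\partial\hbtheta/\partial\bphi^j\in\bbR^{m\times n}$ through the inverse of the full primal Hessian mixing $\ell''$ over all data points with $\nabla^2R$. Your claim that the third-order remainder is $O(n^{-3/2})$ uniformly is precisely the estimate that the dual reformulation is designed to make tractable, and in your sketch it is asserted rather than established; in particular you need the aggregated third-order term, which involves sums over all $n$ coordinates of $\bu$ weighted by entries of $(\nabla^2L)^{-1}$, to concentrate at the stated rate, and A2 only gives $\frac{1}{Cn}$-strong convexity of $\ell$, so all the curvature must come from $R$. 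This is the one place where your proposal is a plausible programme rather than a proof; I would either supply those moment bounds explicitly or adopt the dual route, which collapses the problem to scalar statistics.
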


\subsubsection{Proof Sketch}\label{sec:UniversalityTheorem:subsec:proof}
The proof is based on an application of Lindebergs argument with respect to the random features $\mbf$ in a dual space. We consider the optimization problem given in \eqref{eq:OptimalPoint} for some generic map $\bZ$ and note that by means of a splitting argument it may be expressed as
\begin{eqnarray}
    \calE_{train}(\bZ) = \min_{\btheta\in\bbR^{p}}\frac{1}{n}\sum_{k=1}^{n}\ell(\bz_k^T\btheta, y_k) + R(\btheta)\nwl
    = \min_{\btheta\in\bbR^{p}, \balpha\in\bbR^{n}}\max_{\bd\in\bbR^{n}}\frac{1}{n}\left(\sum_{k=1}^n\ell(\alpha_k, y_k) + d_k(\alpha_k -\bz_k^T\btheta) \right) + R(\btheta)\nwl
    = - \min_{\bd\in\bbR^n}\frac{1}{n}\sum_{k=1}^n \ell^*(-d_k, y_k) + R^*\left(\frac{1}{n}\bZ\bd\right)
\end{eqnarray}
where $\ell^*$ and $R^*$ are the legendre transforms of $\ell$ and $R$ respectively. We note that by assumption A1 that $R^*$ is $\frac{1}{M}-$ strongly convex and $\frac{1}{\mu}$-smooth. We then proceed in defining a series of $\bZ_r$ such that $\bZ_0 = \bPhi$ and $\bZ_p = \bGamma$. We show that the difference of the optimal value in the dual space between $\psi(\calE_{train}(\bZ_r))$ and $\psi(\calE_{train}(\bZ_{r+1}))$ is bounded by the sum of a $O(\frac{1}{n^{3/2}})$ term and the difference in operator norm between $\frac cn\norm{\bK_r - \bK'_r}_{op}$, which allows us to bound the total difference as in the given result. 

For part two, we note that $R_{\epsilon}(\btheta) = R(\btheta) \pm \epsilon h(\btheta)$ remains strongly convex for sufficiently small values of $\epsilon > 0$. As such, part 1 of the theorem holds for these cases. By bounding the difference in the values of $\calE_{train}$ at $\epsilon>0$ and at $\epsilon = 0$ the bound on $h(\btheta)$ may be obtained. The proof is given in full in appendix \ref{app:sec:UniversalityTheoremProof}.

\subsection{Multiple Layers}\label{subsec:multipleLayerUniversality}
In this section, we apply the results of the previous section to prove universality for DRF.
We shall consider the deep random features as given in eq \eqref{eq:featureMapsDef}. The proof of the equivalence relies on fixing all layers, except a single one, and demonstrating that the individual layer may be replaced by their Gaussian equivalent. This relies on an intermediate result, given in the following theorem, stating that the regularity of a dataset, as defined in definition \ref{def:regularity} is preserved under random feature mappings.
\begin{theorem}\label{thm:PreservationOfRegularity}
    Suppose that the set $\lbrace\bx_i\in\bbR^d \rbrace_{i=1}^n$ is regular and assumption 3 holds. Then define $\bz_i = \sigma(\bW\bx_i)$ where $\bW$ is a $p\times d$ matrix and has independent rows distributed by $\calN(0, \frac{1}{d}\bI)$. Then with probability higher than $1 - n^{-10}$ the set $\lbrace \bz_i\rbrace_{i =1}^n$ is regular\footnote{The exponent of $n$ is arbitrary and can be replaced by any other number}.
\end{theorem}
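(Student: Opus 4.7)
The plan is to check the two conditions of Definition~\ref{def:regularity} for $\lbrace\bz_i\rbrace$ separately, leveraging the Gaussianity of $\bW$ together with the smoothness and oddness of $\sigma$ from A3. The overall strategy is to (i) compute the first moment of $\tfrac{1}{p}\bz_i^T\bz_j$ via a Taylor expansion of $\eta_1,\eta_2$ around $(1,1,0)$, (ii) concentrate around that moment with a Bernstein bound, (iii) control $\|\bZ\|_{\op}$ by Gaussian Lipschitz concentration followed by a sub-Gaussian matrix bound, and (iv) combine with a union bound.

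For (i) and (ii), I would write
\begin{equation*}
\tfrac{1}{p}\bz_i^T\bz_j = \tfrac{1}{p}\sum_{k=1}^p \sigma(\bw_k^T\bx_i)\,\sigma(\bw_k^T\bx_j),
\end{equation*}
an average of $p$ i.i.d.\ summands indexed by $k$. Its expectation equals $\eta_1(\tfrac{1}{d}\|\bx_i\|^2,\tfrac{1}{d}\|\bx_j\|^2,\tfrac{1}{d}\bx_i^T\bx_j)$ for $i\neq j$ and $\eta_2(\tfrac{1}{d}\|\bx_i\|^2)$ for $i=j$, in the notation of A3. Regularity of $\lbrace\bx_i\rbrace$ gives $\tfrac{1}{d}\bx_i^T\bx_j=\delta_{ij}+O(\polylog n/\sqrt{n})$, and since $\sigma$ is odd and the $g_1,g_2$ of A3 are independent at $\rho=0$, one has $\eta_1(\alpha_1,\alpha_2,0)\equiv 0$. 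A Taylor expansion at $(1,1,0)$, justified by the thrice differentiability in A3, then puts this expectation within $O(\polylog n/\sqrt{n})$ of $\delta_{ij}$ (absorbing $\eta_2(1)$ into the normalisation convention of the definition). Each summand is sub-exponential because $\bw_k^T\bx_i$ is Gaussian with variance $O(1)$ by regularity and $\sigma$ is Lipschitz, so Bernstein yields $|\tfrac{1}{p}\bz_i^T\bz_j-\bbE|\leq \polylog n/\sqrt{p}$ with probability $\geq 1-n^{-C}$ for any prescribed $C$. A union bound over the $n^2$ pairs, together with $p\sim n$, gives condition~2.

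For (iii), the rows $(\sigma(\bw_k^T\bX))_{k=1}^p$ of $\bZ$ are i.i.d.\ random vectors in $\bbR^n$. For a unit $\ba\in\bbR^n$, the map $\bw\mapsto \ba^T\sigma(\bX^T\bw)$ has gradient $\bX\,\mathrm{diag}(\sigma'(\bX^T\bw))\,\ba$ of norm at most $L\|\bX\|_{\op}\leq Lc\sqrt{n}$, so after incorporating the $1/\sqrt{d}$ scale of $\bw$ it is an $O(1)$-Lipschitz function of a standard Gaussian. Gaussian concentration then makes each row $O(1)$-sub-Gaussian, and oddness of $\sigma$ yields $\bbE[\bZ]=0$. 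A standard non-asymptotic bound for matrices with i.i.d.\ sub-Gaussian rows (e.g.\ Vershynin) delivers $\|\bZ\|_{\op}\leq C(\sqrt{p}+\sqrt{n})=O(\sqrt{n})$ with probability $\geq 1-e^{-cn}$, i.e.\ condition~1. Intersecting the two high-probability events produces the required $1-n^{-10}$ guarantee.

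The hard part will be arranging the Taylor expansion of $\eta_1$ so that the leading $O(\polylog n/\sqrt{n})$ contribution comes from the off-diagonal $\rho$ rather than from the perturbation of $\alpha_1,\alpha_2$ about $1$. Oddness of $\sigma$ is precisely what secures this, since $\eta_1(\alpha_1,\alpha_2,0)\equiv 0$ kills both the value and the $\alpha$-partials at $\rho=0$; without it, the diagonal bias from $\tfrac{1}{d}\|\bx_i\|^2\neq 1$ would leak into the off-diagonals at a larger rate than $\lbrace\bx_i\rbrace$ permits. Beyond this bookkeeping, which also has to ensure the polylog factor survives the $n^2$ union bound, the remaining ingredients are standard Gaussian Lipschitz concentration and a textbook sub-Gaussian matrix estimate.
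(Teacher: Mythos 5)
Your proposal is correct and follows essentially the same route as the paper: the Gram condition is handled exactly as in the paper's Theorem~\ref{thm:OrthogonalityPreservation} (Taylor expansion of $\eta_1,\eta_2$ at $(1,1,0)$ with oddness killing the zeroth-order and $\alpha$-derivative terms, Bernstein concentration of the sub-exponential summands, then a union bound over pairs), and the operator-norm condition via Lipschitz-based sub-Gaussianity of the rows plus a standard sub-Gaussian-rows matrix bound mirrors the paper's Lemmas~\ref{lem:SubgaussianityofRF} and~\ref{lem:NormOfRandomMatrix} together with Theorem~\ref{thm:ReplacemetnOfCovarianceMatrix}. Your explicit remark about absorbing $\eta_2(1)$ into the normalisation is a point the paper leaves implicit, but it does not change the argument.
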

The main consequence of this theorem is that for $L$ layers where $n\sim p_0\sim p_1\sim \cdots \sim p_L$, with a probability converging to 1, all dataset $\bX^{(l)} = \lbrace \bx_i^{(l)} \rbrace_{i=1}^n$ for $l=1,\ldots, L$ are regular\footnote{Here we assume that the numbers of layers $L$ is fixed, but it is simple to show that the argument also holds for $L=\poly(n)$}. Now, we can state the main result of this section, which demonstrates a slightly more generic version of universality for an $l$-layered deep random feature model.

\begin{theorem}\label{thm:DeepRFUniversality}
Suppose that $n \sim p_0 \sim \cdots \sim p_l$ and take $q = \mathcal{O}(n)$ and let assumption A1-A5 hold. For a fixed final layer $l$, define noise appended features $\Tilde{\bx}^{(l)}_i, \tilde{\bgamma}_i^{(l)}$ as
\begin{equation}
    \tilde{\bx}_i^{(l)} = \begin{bmatrix}
        \bx_i^{(l)}\\
        \bv_i^{(l)}
    \end{bmatrix}\qquad 
    \tilde{\bgamma}_i^{(l)} = \begin{bmatrix}
        \bgamma_i^{(l)}\\
        \bv_i^{(l)}
    \end{bmatrix}
\end{equation}
where $\bv_i^{(l)}\in\bbR^{q}$ are independent standard Gaussian vectors. Take a $m \times (q+p_l)$ dictionary $\bD$, where $\norm{\bD}_{\op} < c$ for some constant $c<\infty$ and define the re-represented features
\begin{equation}
    \mbf_i = \bD\Tilde{\bx}_i, \quad \bg_i = \bD\tilde{\bgamma}_i
\end{equation}
and let $\bF = [\mbf_1\ \cdots\ \mbf_n]$ and let $\bG = [\bg_1\ \cdots\ \bg_n]$ be their matrix representations. Then under the assumption that $\bX$ is regular,
\begin{enumerate}
    \item For any real function $\psi$ with bounded first, second, and third derivatives, there exists a constant $c < \infty$ such that
    \begin{equation}
        \left|\bbE\psi(\calE_{train}(\bF)) - \bbE\psi(\calE_{train}(\bG)) \right| \leq \frac{\polylog n}{\sqrt{n}}
    \end{equation}
    \item Let $\hat{\btheta}_F$ and $\hat{\btheta}_G$ be the optimal solution of problem \eqref{eq:OptimalPoint} for $\bF$ and $\bG$. Then for any bounded function $h:\bbR^{p_{L}}\rightarrow\bbR$ with bounded second and third derivatives (in tensor norm), where the bounds are constant in $n, m, p_i$ for $0\leq i \leq l$. There exists a constant $c<\infty$ such that
    \begin{equation}
        \left|\bbE h\left(\hat{\btheta}_F\right) - \bbE h\left(\hat{\btheta}_G\right) \right| \leq \frac{\polylog n}{\sqrt{n}}
    \end{equation}
\end{enumerate}
\end{theorem}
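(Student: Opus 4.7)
The plan is to argue by induction on the layer index $l$. The base case $l=0$ is immediate because $\tilde{\bx}^{(0)}_i = \tilde{\bgamma}^{(0)}_i = [\bx_i;\bv_i^{(0)}]$, so $\bF = \bG$ exactly and both sides vanish. For the inductive step from $l-1$ to $l$, I would introduce a hybrid feature matrix $\bF^{\mathrm{hyb}}$ in which only layer $l$ has been replaced by its single-layer Gaussian surrogate, while layers $1,\ldots,l-1$ remain random features:
\begin{equation*}
    \bF^{\mathrm{hyb}}_i = \bD\begin{bmatrix}\rho_{1,l}\bW^{(l)}\bx^{(l-1)}_i + \rho_{2,l}\bg^{(l)}_i \\ \bv^{(l)}_i\end{bmatrix}.
\end{equation*}
The triangle inequality then reduces the target to bounding $|\bbE\psi(\calE_{train}(\bF)) - \bbE\psi(\calE_{train}(\bF^{\mathrm{hyb}}))|$ by single-layer universality (Theorem~\ref{thm:GenericUniversality}) and $|\bbE\psi(\calE_{train}(\bF^{\mathrm{hyb}})) - \bbE\psi(\calE_{train}(\bG))|$ by the inductive hypothesis.

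For the first increment, I would condition on $\bW^{(1)},\ldots,\bW^{(l-1)}$, so that $\bX^{(l-1)}$ is deterministic. Iterating Theorem~\ref{thm:PreservationOfRegularity} with a union bound over layers makes $\bX^{(l-1)}$ regular with probability at least $1-n^{-c}$ for any fixed $c$. On this event the single-layer universality applies to layer $l$ alone, with synthesis dictionary $\bD\cdot\mathrm{diag}(\bI_{p_l},\bI_q)$; the shared vectors $\bv^{(l)}_i$ contribute nothing to the kernel discrepancy. The main calculation is to bound $\|\bK_j - \bK_j'\|_{\op}$, where $\bK_j$ has entries $\eta_1(\|\bx^{(l-1)}_k\|^2/p_{l-1},\|\bx^{(l-1)}_{k'}\|^2/p_{l-1},(\bx^{(l-1)}_k)^T\bx^{(l-1)}_{k'}/p_{l-1})$. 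Using the oddness of $\sigma$ (A3), which forces $\eta_1(\cdot,\cdot,\rho)$ to be odd in $\rho$, together with third-order differentiability, a Taylor expansion about $(\alpha_{l-1}^2,\alpha_{l-1}^2,0)$ gives an off-diagonal error of $O(\polylog(n)^3/n^{3/2})$, while the diagonal cancels by the choice of $\rho_{2,l}$. A row-sum bound then yields $\|\bK_j - \bK_j'\|_{\op} = O(\polylog(n)/\sqrt{n})$, so Theorem~\ref{thm:GenericUniversality} contributes $\frac{c}{n}\sum_j\|\bK_j-\bK_j'\|_{\op} + \frac{c}{\sqrt{n}} = O(\polylog(n)/\sqrt{n})$ since $p_l \sim n$.

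For the second increment, I would rewrite $\bF^{\mathrm{hyb}}_i$ as a linear map of an extended noise-appended vector at level $l-1$:
\begin{equation*}
    \bF^{\mathrm{hyb}}_i = \bD\begin{bmatrix}\rho_{1,l}\bW^{(l)} & \rho_{2,l}\bI_{p_l} & 0 \\ 0 & 0 & \bI_q\end{bmatrix}\begin{bmatrix}\bx^{(l-1)}_i \\ \bg^{(l)}_i \\ \bv^{(l)}_i\end{bmatrix} =: \bD^{\mathrm{new}}\tilde{\bx}^{(l-1),\mathrm{new}}_i,
\end{equation*}
with the analogous identity for $\bG_i$ in terms of $\tilde{\bgamma}^{(l-1),\mathrm{new}}_i$. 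The concatenated vector $\bv^{(l-1),\mathrm{new}}_i := [\bg^{(l)}_i;\bv^{(l)}_i]$ is standard Gaussian of dimension $p_l+q = O(n)$, fitting the template of the theorem at level $l-1$. Conditional on $\bW^{(l)}$ having bounded operator norm (which holds with probability $1-e^{-cn}$ for an i.i.d.\ Gaussian matrix of aspect ratio $O(1)$), one has $\|\bD^{\mathrm{new}}\|_{\op} \leq \|\bD\|_{\op}\cdot O(1+\|\bW^{(l)}\|_{\op}) = O(1)$, and the inductive hypothesis delivers the $\polylog(n)/\sqrt{n}$ bound for this step.

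The hard part will be bookkeeping the low-probability failure events, namely irregularity at intermediate layers and spectral blow-up of the weight matrices, so that the conditional bounds aggregate into an unconditional one. Since these events have probability decaying faster than any polynomial in $n$, while $\psi$ is bounded and $\calE_{train}$ admits an $O(\poly(n))$ worst-case bound from A1--A2, their contribution is absorbed in the target rate. Part~2 of the theorem follows by the same hybrid chain, invoking the second part of Theorem~\ref{thm:GenericUniversality} at each swap and the $R_\epsilon = R \pm \epsilon h$ perturbation device of Section~\ref{sec:UniversalityTheorem:subsec:proof} to convert training-error closeness to closeness of $\bbE h(\hbtheta)$.
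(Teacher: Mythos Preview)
Your proposal is correct and follows essentially the same route as the paper: induction on $l$ with the trivial base case, a hybrid feature where only the outermost layer is Gaussianized, the first increment handled by Theorem~\ref{thm:GenericUniversality} after conditioning on the inner weights and invoking regularity preservation (Theorem~\ref{thm:PreservationOfRegularity}), and the second increment handled by absorbing $\rho_{1,l}\bW^{(l)}$ into an enlarged dictionary $\bD'$ and appending $\bg^{(l)}_i$ to the noise block so that the inductive hypothesis at level $l-1$ applies. The paper packages your Taylor-expansion step for $\|\bK_j-\bK_j'\|_{\op}$ as a standalone result (Theorem~\ref{thm:ReplacemetnOfCovarianceMatrix}), and expands about $(1,1,0)$ rather than $(\alpha_{l-1}^2,\alpha_{l-1}^2,0)$ since regularity of $\bX^{(l-1)}$ forces $\|\bx^{(l-1)}_i\|^2/p_{l-1}\approx 1$, but the mechanics and the handling of the low-probability failure events via boundedness of $\psi$ are the same.
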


Universality of the DRF problem follows directly from this theorem by choosing the final $L$th layer,  $q = 0$ and, hence adding no additional noise and $\bD = \bI_{p_L}$ such that no re-representation appears.

\subsubsection{Proof Sketch}\label{sec:RFuniversality:subsec:Proof}
The proof proceeds by means of induction. For the case that $l = 0$, ie a zero layer network the proof is immediate as $\bx^{0} = \bgamma^{(0)}$. Assuming that the induction hypothesis holds for a layer $l-1$ we may consider layer $l$.

We make use of an intermediate results which may be found in the appendix. In theorem~\ref{thm:ReplacemetnOfCovarianceMatrix} we show that if the data set $\bx^{(l)}$ is regular then covariance matrices of $\bx^{(l)}$ and $\bgamma^{(l)}$ are bounded by $\frac{c\polylog n}{\sqrt{n}}$ for some constant $c$. Then, the proof proceeds in two steps: First, we consider an intermediate vector 
\begin{eqnarray}
    \bar{\bgamma}^{(l)}_i = \begin{bmatrix}
        \rho_{1, l}\bW^{(l)}\bx_i^{(l-1)} + \rho_{2,l}\bh_i^{(l)}\\
        \bv_i^{(l)}
    \end{bmatrix}.
\end{eqnarray}
We bound the performance difference ($\calE_{train}$) between $\bx^{(l)}$ and $\bar{\bgamma}^{(l)}$  by  theorem \ref{thm:DeepRFUniversality}. Second, we observe that the difference in performance  between $\bar{\bgamma}^{(l)}$ and $\bgamma^{(l)}$ depends only on the difference between $\bx^{(l-1)}$ and $\bgamma^{(l-1)}$. As such, we may make use of the induction hypothesis to bound this difference. The full proof is given in appendix~\ref{app:sec:DeepRFUniversalityProof}.

\section{CGMT Analysis}\label{sec:CGMTanalysis}

Thanks to the universality results, we only require to analyze the deep Gaussian features $\bgamma_L$. Here, we present this analysis in one particular case where $\ell$ is the square loss, and the regularization function is generic. Additionally, we need to impose a model for the relationship between the labels $\by$ and the input variables $\bx^{(0)}$, which we specifically assume to be independent standard normal vectors. For this we make the following definition
\begin{equation}\label{eq:ymodel}
    y_i = \bx^{(L)T}_i\btheta^* + \nu_i,
\end{equation}
where $\btheta^*\in\bbR^{p_L}$ is the "true" relationship between the data and the parameters, $\nu_i\sim\calN(0, \sigma^2_{\bnu}\bI)$ is noise, and $\bx^{(L)}$ is defined in \eqref{eq:RecursiveRandomFeatures}. We let $\bnu = (\nu_i)_i$ and let $\bX^{(L)} = [\bx^{(L)}_1\ \bx_2^{(L)}\ \cdots \bx_n^{(L)}]$. Then, we consider the following optimization problem

\begin{equation}\label{eq:CGTMP1}
    P_1 = \min_{\btheta}\frac{1}{2n}\norm{\by - \bX^{(L)}\btheta}_2^2 + R(\btheta)=\min_{\be}\frac{1}{2n}\norm{\bnu - \bX^{(L)}\be}_2^2 + R(\btheta^*+\be),
\end{equation}
where $\be=\btheta-\btheta^*$ and the optimal solutions are denoted by $\hat{\btheta}_1,\hat\be_1$. We similarly consider the Gaussian equivalent model defined in eq~\eqref{eq:GaussianRandomFeatures}. In this case, the data is  generated by
\begin{equation}\label{eq:ytildemodel}
    \Tilde{\by}_i = \bgamma^{(L)T}_i\btheta^* + \nu_i.
\end{equation}
Again, we let $\Tilde{\bX}^{(L)} = [\bgamma_1^{(L)}\ \bgamma_2^{(L)}\ \cdots \bgamma_n^{(L)}]$ and define the Guassian equivalent optimization problem as
\begin{equation}\label{eq:CGTMP2}
    P_2 = \min_{\btheta}\frac{1}{2n}\norm{\tilde{\by} - \Tilde{\bX}^{(L)}\btheta}_2^2 + R(\btheta)= \min_{\be}\frac{1}{2n}\norm{\bnu - \Tilde{\bX}^{(L)}\be}_2^2 + R(\btheta^*+\be)
\end{equation}
with corresponding optimal solutions $\hat{\btheta}_2,\hat\be_2$. By applying theorem \ref{thm:DeepRFUniversality} to $\btheta\to\be$ and $\by\to\bnu$, we establish that  the statistics of $P_1$ and $P_2$ become weakly similar in the sense of their distributions.
Furthermore, for this particular choice of the relationship between the data and the labels the generalization error for the problem $P_1$ may be expressed as
\begin{eqnarray}
    \calE_{gen}(\be) = \sigma_{\bnu^2} + \be^T\bbE[\tilde{\bx}_{new}^{(L)T}\tilde{\bx}_{new}^{(L)}]\be.
\end{eqnarray}
This function satisfies the conditions on the function $h(\be)$. As such in this case, the generalization error is also universal.

For problem $P_2$, as the matrix $\Tilde{\bX}$ is Gaussian, it may be analyzed by the CGMT, (see appendix theorem~\ref{app:thm:CGMT}), which gives an asymptotic equivalence to a second alternative problem is follows:

\begin{theorem}\label{thm:CGMTsquareloss}
Let $n\sim p_0\sim \cdots \sim p_L$ and let assumptions 1-5 hold true. Consider the following optimization problem
\begin{eqnarray}\label{eq:CGTMP3}
    P_3 =     \max_{\beta >0 }\min_{q} T_L + \max_{\xi_L > 0, \chi_L > 0} \min_{t_L > 0, k_L > 0} T_{L-1} + \min_{\be} \frac{a}{2pl}\norm{\be}^2 + \frac{b}{p_L}\be^T\bg + R(\btheta + \btheta^*) + \nwl   \max_{\xi_{L-1} > 0, \chi_{L-1} > 0} \min_{t_{L-1} > 0, k_{L-1} > 0} \cdots \max_{\xi_0 \geq 0, \chi_0 \geq 0} \min_{t_0 > 0, k_0 > 0} \sum_{i=1}^{L-2} T_l(\be) 
\end{eqnarray}
Where $T_L$ is a function of $\beta, q$; $T_{L-1}, a, b$ are functions of $\beta, q, \xi_L, \chi_L, t_L, k_L$;  $\bg\in\bbR^{p_L}$ is a standard normal and $T_l$ are functions of $\be , \beta, q, \xi_i, \chi_i, t_i, k_i$ for $L \geq i \geq l$. The exact expressions for the functions $a, b, T_i$ are complicated and are given in the appendix equation \eqref{CGMTFullEquations}. 

Then, 
\begin{enumerate}
    \item Then the values of $P_2$ and $P_3$ become close, in sense that if $P_3$ converges to come value $c$ then $P_2$ will converge to the same value. 

    \item Let $\hat{\btheta}_3$ be the optimal point of $P_3$. Then for any bounded function $h:\bbR^{p_L}\rightarrow \bbR$ with bounded second and third derivatives (in tensor norm), where the bounds are constant in $n, p_i$. for $0\leq i\leq L$, then

    \begin{eqnarray}
        \Pr\left(|h(\hat{\be}_2) - h(\hat{\be}_3)| > \epsilon \right) \rightarrow 0 \quad \mathrm{as}\quad  n,p_0,\ldots,p_L \rightarrow\infty
    \end{eqnarray}

\end{enumerate}
\end{theorem}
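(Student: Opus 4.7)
The plan is to derive $P_3$ from $P_2$ by recursively applying the CGMT, peeling off one Gaussian layer at a time, from the outermost layer $L$ down to the input. First I would dualize the square loss in $P_2$ to obtain
\[
P_2 \;=\; \min_{\be}\max_{\bu}\ \frac{1}{n}\bu^T\bnu - \frac{1}{2n}\norm{\bu}^2 - \frac{1}{n}\bu^T\tilde{\bX}^{(L)T}\be + R(\btheta^*+\be),
\]
which is already in the convex--concave form required by CGMT, with strong convexity in $\be$ provided by A1 and strong concavity in $\bu$ by the quadratic $-\norm{\bu}^2/(2n)$. The outermost pair $\max_{\beta>0}\min_q$ in $P_3$ will arise once $\norm{\bu}$ and $\bu^T\bnu/n$ are collapsed to scalars $\beta$ and $q$.

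Next, substituting the recursion $\tilde{\bX}^{(L)} = \rho_{1,L}\bW^{(L)}\tilde{\bX}^{(L-1)} + \rho_{2,L}\bG^{(L)}$ exposes two independent Gaussian bilinear forms $\be^T\bW^{(L)}(\tilde{\bX}^{(L-1)}\bu)$ and $\be^T\bG^{(L)}\bu$; conditional on layers $1,\ldots,L-1$, both $\bW^{(L)}$ and $\bG^{(L)}$ are fresh and independent of each other, so CGMT is applicable to each separately. This replaces the bilinear forms by linear expressions of the type $\norm{\be}\,\tilde{\bh}_L^T\bu + \norm{\bu}\,\tilde{\bg}_L^T\be$ and $p_{L-1}^{-1/2}\bigl(\norm{\tilde{\bX}^{(L-1)}\bu}\,\bh_L^T\be + \norm{\be}\,\bg_L^T\tilde{\bX}^{(L-1)}\bu\bigr)$. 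To turn the residual coupling through $\norm{\tilde{\bX}^{(L-1)}\bu}$ and $\bg_L^T\tilde{\bX}^{(L-1)}\bu$ into linear functionals of a \emph{new} dual vector at layer $L-1$, I would introduce the four scalar auxiliary variables $\xi_L,\chi_L>0$ (outer max) and $t_L,k_L>0$ (inner min) via Young-type identities applied so as to preserve the saddle orientation. This produces the block $(\xi_L,\chi_L,t_L,k_L)$ and the contribution $T_{L-1}$, and leaves the reduced problem in exactly the same structural form as the starting one, with $\bu$ replaced by a new dual vector and $\tilde{\bX}^{(L)}$ replaced by $\tilde{\bX}^{(L-1)}$.

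Iterating this reduction for $l=L-1,L-2,\ldots,1$ produces successive scalar blocks $(\xi_l,\chi_l,t_l,k_l)$ and contributions $T_l$. The regularizer never interacts with any Gaussian matrix, so $R(\btheta^*+\be)$ is carried unchanged to the innermost $\min_{\be}$, together with the quadratic $\frac{a}{2p_L}\norm{\be}^2 + \frac{b}{p_L}\be^T\bg$ accumulated from the layer-$L$ step. At the bottom of the recursion $\tilde{\bX}^{(0)}=\bX$ has i.i.d.\ standard Gaussian columns by hypothesis, so the final CGMT application combined with concentration of $\norm{\bX\bv}^2/d$ around $\norm{\bv}^2$ closes the recursion and produces the innermost $\max_{\xi_0,\chi_0}\min_{t_0,k_0}$ block. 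Reading off the nested scalar optimization recovers exactly $P_3$, and the value-level equivalence (part 1) follows from the two-sided CGMT bound applied at every recursion step. For part 2, I would invoke the strengthened CGMT conclusion that, whenever the auxiliary scalar problem has a unique asymptotic minimizer $\hat{\be}_3$, every smooth bounded test function of the primary minimizer $\hat{\be}_2$ concentrates at $h(\hat{\be}_3)$, delivering the required convergence.

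The main obstacle is maintaining the CGMT hypotheses --- joint convex--concave structure with the correct saddle orientation, together with compactness of the feasible sets --- through every level of the recursion. Each Young-type decoupling and each introduction of a new dual variable can flip signs or convexity if placed on the wrong side of the min--max, and CGMT implicitly requires a priori bounds on all variables involved. Assumptions A1--A2 deliver such bounds for the outermost $\be$ and $\bu$, while the per-layer operator-norm control of $\tilde{\bX}^{(l)}$ guaranteed by Theorem~\ref{thm:PreservationOfRegularity} prevents the intermediate scalars $(\xi_l,\chi_l,t_l,k_l)$ from blowing up as $l$ decreases; verifying all of these bookkeeping conditions layer by layer is where most of the technical effort will sit.
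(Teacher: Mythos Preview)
Your high-level plan—dualize, then peel off one Gaussian layer at a time via recursive CGMT—is right, but your decomposition differs from the paper's in a way that creates a genuine gap.

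The paper does \emph{not} substitute the layer recursion $\tilde{\bX}^{(L)}=\rho_{1,L}\bW^{(L)}\tilde{\bX}^{(L-1)}+\rho_{2,L}\bG^{(L)}$ into the bilinear form. Instead it uses the covariance representation $\tilde{\bX}^{(L)}=\bU^{(L)}(\bR^{(L)})^{1/2}$ with a \emph{single} fresh i.i.d.\ Gaussian $\bU^{(L)}\in\bbR^{n\times p_L}$ and the deterministic-given-weights matrix $\bR^{(l)}=\rho_{1,l}^2\bW^{(l)}\bR^{(l-1)}\bW^{(l)T}+\rho_{2,l}^2\bI$. The first CGMT is applied to $\bU^{(L)}$ alone; this collapses the $n$-dimensional dual $\blambda$ to the scalars $(\beta,q)$ and leaves an inner optimization over $\be$ involving only $\be^T\bR^{(L)}\be$ and $\bg^T(\bR^{(L)})^{1/2}\be$. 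The recursion is then on $\bR^{(l)}$: unrolling one layer of the covariance exposes a single fresh Gaussian $\bW^{(l)}$, and one CGMT step (plus completing the square, a vector Lagrange multiplier $\bmu$ for the constraint $\bv=\bR^{(l-1)/2}\bs$, and two square-root tricks) returns the problem to the same form with $\bR^{(l)}$ replaced by $\bR^{(l-1)}$.

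Your route exposes \emph{two} independent Gaussian matrices $\bW^{(L)}$ and $\bG^{(L)}$ simultaneously, and the assertion that ``CGMT is applicable to each separately,'' producing two independent Gordon auxiliary forms, is not supported by Theorem~\ref{app:thm:CGMT}. Gordon's comparison treats a single bilinear form $\bx^T\bG\by$; once you replace $\be^T\bG^{(L)}\bu$ by $\norm{\be}\,\tilde\bh_L^T\bu+\norm{\bu}\,\tilde\bg_L^T\be$, the resulting objective is no longer jointly convex--concave in $(\be,\bu)$, so the convex-side of CGMT cannot be invoked a second time for $\bW^{(L)}$. The legitimate fix—stacking $[\sqrt{p_{L-1}}\bW^{(L)},\bG^{(L)}]$ into a single i.i.d.\ block and applying CGMT once—produces a \emph{combined} norm $\sqrt{\rho_{1,L}^2\norm{\tilde\bX^{(L-1)}\bu}^2/p_{L-1}+\rho_{2,L}^2\norm{\bu}^2}$, not the two separate norms you wrote down. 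Even after repairing this with a square-root trick, your recursion keeps the $n$-dimensional dual $\bu$ alive and continues to see the random product $\tilde\bX^{(l-1)}\bu$ at every level; the mechanism by which ``$\bu$ is replaced by a new dual vector'' is never specified, and it is not clear the problem returns to the same structural form. The paper's covariance-based recursion is precisely what sidesteps both difficulties: after the first CGMT the sample dimension disappears, and each subsequent step faces exactly one Gaussian matrix.
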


A proof of this theorem may be found in the Appendix Section \ref{app:sec:CGMTproof}. Furthermore, if all layers, except the input have the same dimension $p$ the CGMT result can be simplified substantially, these results may be seen in theorem \ref{app:CGMT:allLayersSameSize}. It can be clearly seen that by the triangle inequality and the results of theorem \ref{thm:DeepRFUniversality} that $P_3$ and $P_1$ will similarly asymptotically become weakly similar; as will $h(\hat{\btheta}_3)$ and $h(\hat{\btheta}_1)$.

\subsection{Experimental Results}\label{sec:DepthBetter:subsec:ExperimentalResults}
We now demonstrate the validity of our results experimentally. We consider two regularization functions that satisfy assumption A1: the $\ell_2^2$ regularization and elastic net regularization, where $R(\btheta) = \lambda_1\norm{\btheta}_1 + \frac{1}{2}\lambda_2\norm{\btheta}_2^2$.

We consider standard Gaussian input of dimension $d$ and examine a 2-Layer RF model where both layers are of dimension $p$ and a 1 layer RF model with hidden layer of dimension $p$. The ratio $\frac{n}{d}$ was fixed to 1.5 for all experiments. The activation function was chosen to be $\tanh$.

In figure~\ref{fig:L2reglarization} we show the training and generalization error for $\ell_2^2$ regularization for 3 different regularization values as a function of the ratio $\frac{p}{n}$. We note that in the 1-Layer case $\frac{p}{n}$ is a measure of the under or overparameterization of the network. This relationship does not hold in the two layer case, however as may be seen from the figure this ratio is still useful in comparing the two models. In figure~\ref{fig:L2reglarization} the solid line represents the 2-layer case and the dashed line represents the 1-layer case. The triangles are our theoretical predictions for 2-layers, and squares similarly for 1-layer. For the Elastic net case we fix $\lambda_2$ to be $10^{-5}$ and vary only $\lambda_1$ these results are similarly shown in figure~\ref{fig:elacticNet}.

We note that in both types of regularization functions, for all values of $\frac{p}{n}$, the 2-layer deep RF model has consistently lower generalization error. With respect to training error the two layer case only outperforms 1-layer at large values of regularization. This suggests that even when training of the layer is not performed there can be a benefit to a deeper embedding of the input data. 
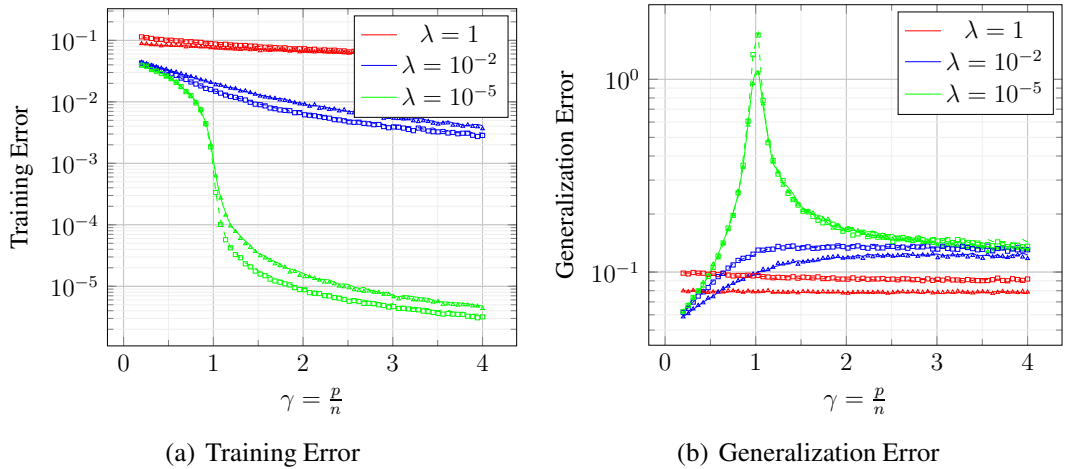
\begin{figure}[!hb]
    \centering
    
    \subfigure [Training Error] {
    \label{subfig:trainingErrorL2Synthetic}
    \resizebox{0.45 \textwidth}{!}{%
        \begin{tikzpicture}
        \begin{axis}[
          xlabel={$\gamma = \frac{p}{n}$},
          ylabel=Training Error,
          grid = both,
            minor tick num = 1,
            major grid style = {lightgray},
            minor grid style = {lightgray!25},
            ymode=log
          ]
        \addplot[ smooth, thin, red] table[ y=TrainActivation, x=gamma]{Data/SyntheticL2lam=1Gamma02-4_70_Delta1.5_100averages_sum300.dat};
        \addlegendentry{$\lambda = 1$}
        \addplot[ smooth, thin, blue] table[ y=TrainActivation, x=gamma]{Data/SyntheticL2lam=1e-2Gamma02-4_70_Delta1.5_100averages_sum300.dat};
        \addlegendentry{$\lambda = 10^{-2}$}
        \addplot[ smooth, thin, green] table[ y=TrainActivation, x=gamma]{Data/SyntheticL2lam=1e-5Gamma02-4_70_Delta1.5_100averages_sum300.dat};
        \addlegendentry{$\lambda = 10^{-5}$}
        \addplot[ dashed, thin, red] table[ y=TrainActivation, x=gamma]{Data/SyntheticL21Layerlam=1Gamma02-4_70_Delta1.5_100averages_sum300.dat};
        \addplot[ dashed, thin, blue] table[ y=TrainActivation, x=gamma]{Data/SyntheticL21Layerlam=1e-2Gamma02-4_70_Delta1.5_100averages_sum300.dat};
        \addplot[ dashed, thin, green] table[ y=TrainActivation, x=gamma]{Data/SyntheticL21Layerlam=1e-5Gamma02-4_70_Delta1.5_100averages_sum300.dat};
        \addplot[red, mark = triangle, mark size = 1pt, only marks] table[y=TrainGaussian, x=gamma]{Data/SyntheticL2lam=1Gamma02-4_70_Delta1.5_100averages_sum300.dat};
       \addplot[blue, mark = triangle, mark size = 1pt, only marks] table[y=TrainGaussian, x=gamma]
        {Data/SyntheticL2lam=1e-2Gamma02-4_70_Delta1.5_100averages_sum300.dat};
        \addplot[green, mark = triangle, mark size = 1pt, only marks] table[y=TrainGaussian, x=gamma]
        {Data/SyntheticL2lam=1e-5Gamma02-4_70_Delta1.5_100averages_sum300.dat};
        \addplot[red, mark = square, mark size = 1pt, only marks] table[y=TrainGaussian, x=gamma]{Data/SyntheticL21Layerlam=1Gamma02-4_70_Delta1.5_100averages_sum300.dat};
       \addplot[blue, mark = square, mark size = 1pt, only marks] table[y=TrainGaussian, x=gamma]
        {Data/SyntheticL21Layerlam=1e-2Gamma02-4_70_Delta1.5_100averages_sum300.dat};
        \addplot[green, mark = square, mark size = 1pt, only marks] table[y=TrainGaussian, x=gamma]
        {Data/SyntheticL21Layerlam=1e-5Gamma02-4_70_Delta1.5_100averages_sum300.dat};
        \end{axis}
    \end{tikzpicture}
        } 
    }
 \subfigure [Generalization Error] {
    \label{subfig:GeneralizationErrorL2Synthetic}
    \resizebox{0.45\textwidth}{!}{%
        \begin{tikzpicture}
        \begin{axis}[
          xlabel={$\gamma = \frac{p}{n}$},
          ylabel=Generalization Error,
          grid = both,
            minor tick num = 1,
            major grid style = {lightgray},
            minor grid style = {lightgray!25},
            ymode=log
          ]
        \addplot[ smooth, thin, red] table[ y=TestActivation, x=gamma]{Data/SyntheticL2lam=1Gamma02-4_70_Delta1.5_100averages_sum300.dat};
        \addlegendentry{$\lambda = 1$}
        \addplot[ smooth, thin, blue] table[ y=TestActivation, x=gamma]{Data/SyntheticL2lam=1e-2Gamma02-4_70_Delta1.5_100averages_sum300.dat};
        \addlegendentry{$\lambda = 10^{-2}$}
        \addplot[ smooth, thin, green] table[ y=TestActivation, x=gamma]{Data/SyntheticL2lam=1e-5Gamma02-4_70_Delta1.5_100averages_sum300.dat};
        \addlegendentry{$\lambda = 10^{-5}$}
        \addplot[ dashed, thin, red] table[ y=TestActivation, x=gamma]{Data/SyntheticL21Layerlam=1Gamma02-4_70_Delta1.5_100averages_sum300.dat};
        \addplot[ dashed, thin, blue] table[ y=TestActivation, x=gamma]{Data/SyntheticL21Layerlam=1e-2Gamma02-4_70_Delta1.5_100averages_sum300.dat};
        \addplot[ dashed, thin, green] table[ y=TestActivation, x=gamma]{Data/SyntheticL21Layerlam=1e-5Gamma02-4_70_Delta1.5_100averages_sum300.dat};
        \addplot[red, mark = triangle, mark size = 1pt, only marks] table[y=TestGaussian, x=gamma]{Data/SyntheticL2lam=1Gamma02-4_70_Delta1.5_100averages_sum300.dat};
       \addplot[blue, mark = triangle, mark size = 1pt, only marks] table[y=TestGaussian, x=gamma]
        {Data/SyntheticL2lam=1e-2Gamma02-4_70_Delta1.5_100averages_sum300.dat};
        \addplot[green, mark = triangle, mark size = 1pt, only marks] table[y=TestGaussian, x=gamma]
        {Data/SyntheticL2lam=1e-5Gamma02-4_70_Delta1.5_100averages_sum300.dat};
        \addplot[red, mark = square, mark size = 1pt, only marks] table[y=TestGaussian, x=gamma]{Data/SyntheticL21Layerlam=1Gamma02-4_70_Delta1.5_100averages_sum300.dat};
       \addplot[blue, mark = square, mark size = 1pt, only marks] table[y=TestGaussian, x=gamma]
        {Data/SyntheticL21Layerlam=1e-2Gamma02-4_70_Delta1.5_100averages_sum300.dat};
        \addplot[green, mark = square, mark size = 1pt, only marks] table[y=TestGaussian, x=gamma]
        {Data/SyntheticL21Layerlam=1e-5Gamma02-4_70_Delta1.5_100averages_sum300.dat};
        \end{axis}
    \end{tikzpicture}
        } 
    }
       \caption{Comparison of 1-Layer and 2-Layer RFs, with square loss function, $\ell_2^2$ regularization with regularization strength $\lambda$. Solid lines represent 2 layer and dashed lines 1-Layer. Triangles are the CGMT results for 2-layers and squares for 1-layer}
    \label{fig:L2reglarization}
\end{figure}

\begin{figure}[!hb]
    \centering
    
    \subfigure [Training Error] {
    \label{subfig:trainingErrorElasticSynthetic}
    \resizebox{0.45 \textwidth}{!}{%
        \begin{tikzpicture}
        \begin{axis}[
          xlabel={$\gamma = \frac{p}{n}$},
          ylabel=Training Error,
          grid = both,
            minor tick num = 1,
            major grid style = {lightgray},
            minor grid style = {lightgray!25},
            ymode=log
          ]
        \addplot[ smooth, thin, red] table[ y=TrainActivation, x=gamma]{Data/SyntheticElasticL21e-5_L1lam=1Gamma02-4_70_Delta1.5_100averages_sum300.dat};
        \addlegendentry{$\lambda = 1$}
        \addplot[ smooth, thin, blue] table[ y=TrainActivation, x=gamma]{Data/SyntheticElasticL21e-5_L1lam=1e-2Gamma02-4_70_Delta1.5_100averages_sum300.dat};
        \addlegendentry{$\lambda = 10^{-2}$}
        \addplot[ smooth, thin, green] table[ y=TrainActivation, x=gamma]{Data/SyntheticElasticL21e-5_L1lam=1e-5Gamma02-4_70_Delta1.5_100averages_sum300.dat};
        \addlegendentry{$\lambda = 10^{-5}$}
        \addplot[ dashed, thin, red] table[ y=TrainActivation, x=gamma]{Data/SyntheticElastic1LayerL21e-5_L1lam=1Gamma02-4_70_Delta1.5_100averages_sum300.dat};
        \addplot[ dashed, thin, blue] table[ y=TrainActivation, x=gamma]{Data/SyntheticElastic1LayerL21e-5_L1lam=1e-2Gamma02-4_70_Delta1.5_100averages_sum300.dat};
        \addplot[ dashed, thin, green] table[ y=TrainActivation, x=gamma]{Data/SyntheticElastic1LayerL21e-5_L1lam=1e-5Gamma02-4_70_Delta1.5_100averages_sum300.dat};
        \addplot[red, mark = triangle, mark size = 1pt, only marks] table[y=TrainGaussian, x=gamma]{Data/SyntheticElasticL21e-5_L1lam=1Gamma02-4_70_Delta1.5_100averages_sum300.dat};
       \addplot[blue, mark = triangle, mark size = 1pt, only marks] table[y=TrainGaussian, x=gamma]
        {Data/SyntheticElasticL21e-5_L1lam=1e-2Gamma02-4_70_Delta1.5_100averages_sum300.dat};
        \addplot[green, mark = triangle, mark size = 1pt, only marks] table[y=TrainGaussian, x=gamma]
        {Data/SyntheticElasticL21e-5_L1lam=1e-5Gamma02-4_70_Delta1.5_100averages_sum300.dat};
        \addplot[red, mark = square, mark size = 1pt, only marks] table[y=TrainGaussian, x=gamma]{Data/SyntheticElastic1LayerL21e-5_L1lam=1Gamma02-4_70_Delta1.5_100averages_sum300.dat};
       \addplot[blue, mark = square, mark size = 1pt, only marks] table[y=TrainGaussian, x=gamma]
        {Data/SyntheticElastic1LayerL21e-5_L1lam=1e-2Gamma02-4_70_Delta1.5_100averages_sum300.dat};
        \addplot[green, mark = square, mark size = 1pt, only marks] table[y=TrainGaussian, x=gamma]
        {Data/SyntheticElastic1LayerL21e-5_L1lam=1e-5Gamma02-4_70_Delta1.5_100averages_sum300.dat};
        \end{axis}
    \end{tikzpicture}
        } 
    }
 \subfigure [Generalization Error] {
    \label{subfig:GeneralizationErrorElasticSynthetic}
    \resizebox{0.45\textwidth}{!}{%
        \begin{tikzpicture}
        \begin{axis}[
          xlabel={$\gamma = \frac{p}{n}$},
          ylabel=Generalization Error,
          grid = both,
            minor tick num = 1,
            major grid style = {lightgray},
            minor grid style = {lightgray!25},
            ymode=log
          ]
        \addplot[ smooth, thin, red] table[ y=TestActivation, x=gamma]{Data/SyntheticElasticL21e-5_L1lam=1Gamma02-4_70_Delta1.5_100averages_sum300.dat};
        \addlegendentry{$\lambda = 1$}
        \addplot[ smooth, thin, blue] table[ y=TestActivation, x=gamma]{Data/SyntheticElasticL21e-5_L1lam=1e-2Gamma02-4_70_Delta1.5_100averages_sum300.dat};
        \addlegendentry{$\lambda = 10^{-2}$}
        \addplot[ smooth, thin, green] table[ y=TestActivation, x=gamma]{Data/SyntheticElasticL21e-5_L1lam=1e-5Gamma02-4_70_Delta1.5_100averages_sum300.dat};
        \addlegendentry{$\lambda = 10^{-5}$}
        \addplot[ dashed, thin, red] table[ y=TestActivation, x=gamma]{Data/SyntheticElastic1LayerL21e-5_L1lam=1Gamma02-4_70_Delta1.5_100averages_sum300.dat};
        \addplot[ dashed, thin, blue] table[ y=TestActivation, x=gamma]{Data/SyntheticElastic1LayerL21e-5_L1lam=1e-2Gamma02-4_70_Delta1.5_100averages_sum300.dat};
        \addplot[ dashed, thin, green] table[ y=TestActivation, x=gamma]{Data/SyntheticElastic1LayerL21e-5_L1lam=1e-5Gamma02-4_70_Delta1.5_100averages_sum300.dat};
        \addplot[red, mark = triangle, mark size = 1pt, only marks] table[y=TestGaussian, x=gamma]{Data/SyntheticElasticL21e-5_L1lam=1Gamma02-4_70_Delta1.5_100averages_sum300.dat};
       \addplot[blue, mark = triangle, mark size = 1pt, only marks] table[y=TestGaussian, x=gamma]
        {Data/SyntheticElasticL21e-5_L1lam=1e-2Gamma02-4_70_Delta1.5_100averages_sum300.dat};
        \addplot[green, mark = triangle, mark size = 1pt, only marks] table[y=TestGaussian, x=gamma]
        {Data/SyntheticElasticL21e-5_L1lam=1e-5Gamma02-4_70_Delta1.5_100averages_sum300.dat};
        \addplot[red, mark = square, mark size = 1pt, only marks] table[y=TestGaussian, x=gamma]{Data/SyntheticElastic1LayerL21e-5_L1lam=1Gamma02-4_70_Delta1.5_100averages_sum300.dat};
       \addplot[blue, mark = square, mark size = 1pt, only marks] table[y=TestGaussian, x=gamma]
        {Data/SyntheticElastic1LayerL21e-5_L1lam=1e-2Gamma02-4_70_Delta1.5_100averages_sum300.dat};
        \addplot[green, mark = square, mark size = 1pt, only marks] table[y=TestGaussian, x=gamma]
        {Data/SyntheticElastic1LayerL21e-5_L1lam=1e-5Gamma02-4_70_Delta1.5_100averages_sum300.dat};
        \end{axis}
    \end{tikzpicture}
        } 
    }
       \caption{Comparison of 1-Layer and 2-Layer RFs, with square loss function and $\ell_1 + \ell_2^2$ regularization with regularization strength $\lambda$ for the $\ell_1$ term and fixed $\ell_2$ regularization strength. Solid lines represent 2 layer and dashed lines are 1-Layer. Triangles are the CGMT result for 2-layers and squares for 1-layer}
    \label{fig:elacticNet}
\end{figure}
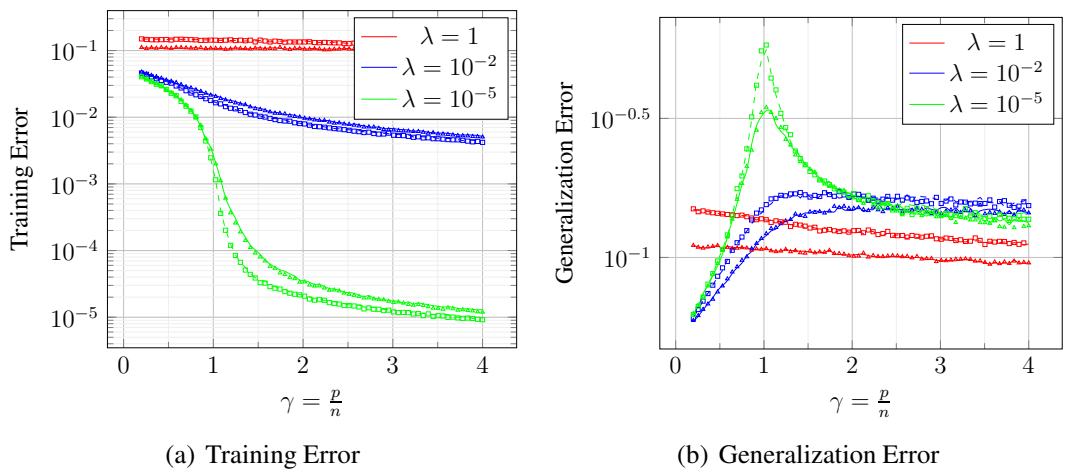

\subsection{Eigendistribution of the Covariance Matrix}
In the CGMT analysis performed above, where the input data is Gaussian, the Gaussian equivalent features $\gamma^{(L)}$ are distributed as $\calN(\bm{0}, \bR^{(L)})$ where $\bR^{(L)}$ is a covariance matrix defined recursively as
\begin{eqnarray}\label{eq:RlyapunovDefinition}
    \bR^{(0)} = \bI\qquad \bR^{(l)} = \rho_{1, l}^2\bW^{(l)}\bR^{(l-1)}\bW^{(l)T} + \rho_{2, l}^2\bI_{p_{l}},
\end{eqnarray}
where each $\bW^{(l)}$ has rows $\bw_j^{(l)}\sim\calN(\bm{0}, \frac{1}{p_{l-1}}\bI_{p_{l-1}})$. In the case of ridge regression of linear models, or any rotationally invariant setup, the optimal value is directly dependent upon the eigenvalues of the covariance matrix. As the covariance matrix is random we consider its eigendistribution, the marginal probability distribution over the eigenvalues.

We note that the type of recursion for $\bR^{(l)}$ is a form of a Lyapanov recursion, which has been studied in the literature \citep{vakili2011RandomMatrix, emery2007lectures}. 
We denote the eigendistribution of the matrix $\bR^{(l)}$ as $f_{\bR^{(l)}}(\lambda)$ for eigenvalues $\lambda$. In the case of $l = 1$, the matrix $\bR^{(1)}$ is a scaled Wishart matrix plus an identity, whose eigendistribution is given by a shifted version of the Marchenko–Pastur distribution. In figure~\ref{fig:HistEigDistribution} we consider the empirical eigendistribution of $\bR^{(2)}$, corresponding to the two layer case studied above. We choose $p_0 = 1000$ and $p_2= 1500$ fixing the input and output dimensions of the layers, and vary the size of the hidden layer $p_1$. We note as the size of the hidden layer grows the more concentrated the eigendistribution become around zero, while decreasing it results in a more flat structure. In the case of ridge regression, the decreased in the support of the eigenvalues could represent in an increase in model uncertainty at large sizes of the hidden layers.   

We also examine the eigendistribution analytically. We make use of the Stieltjes transform $S_{\bR^{(l)}}(z)$ of the distribution $f_{\bR^{(l)}}$. This transform and its inverse are give by 
\begin{eqnarray}
    S_{\bR^{(l)}}(z) = \int \frac{f_{\bR^{(l)}}(\lambda)}{\lambda - z}\mathrm{d}\lambda\qquad f_{\bR^{(l)}}(\lambda) = \frac{1}{\pi}\lim_{\omega \rightarrow 0^+}\mathrm{Im}[S(\lambda + i \omega)]
\end{eqnarray}
where $i$ is the imaginary unit, and $z$ is complex. We can demonstrate that the Stieltjes transform of the matrices $\bR^{(l)}$ follows the following recursion.
\begin{theorem}
    Let $\beta_l = \frac{p_l}{p_{l-1}}$, then the Stieltjes transform $S_l(z)$ of $\bR^{(l)}$ in \eqref{eq:RlyapunovDefinition} is given recursively by
    \begin{eqnarray}
        S_{l+1}(z) = \frac{1}{\rho_{1, l+1}^2}\Omega_{l}\left(\frac{z - \rho_{2, l+1}^2}{\rho_{1,l+1}^2} \right)\\
        \Omega_{l}(z) = \frac{1}{1 - \beta - \beta z\Omega_l(z)}S_l\left(\frac{z}{1 - \beta - \beta z\Omega_l(z)}\right)
    \end{eqnarray}
    Where $\Omega_0$ is the Stieltjes transform of a Wishart matrix, given by
    \begin{eqnarray}
        \Omega_0 = \frac{1-\beta_1 - z + \sqrt{z^2 - 2(\beta_1 +1)z + (\beta_1 -1)^2}}{2\beta_1z}
    \end{eqnarray}
\end{theorem}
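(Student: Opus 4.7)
The plan is to split the recursion into two conceptually distinct steps: an affine shift and a generalised sample-covariance operation. Define the auxiliary matrix $\bA_l := \bW^{(l+1)}\bR^{(l)}\bW^{(l+1)T}$, so that (\ref{eq:RlyapunovDefinition}) reads $\bR^{(l+1)} = \rho_{1,l+1}^2\,\bA_l + \rho_{2,l+1}^2\,\bI$. This piece is purely deterministic: each eigenvalue $\lambda$ of $\bA_l$ contributes an eigenvalue $\rho_{1,l+1}^2\lambda + \rho_{2,l+1}^2$ to $\bR^{(l+1)}$, so a change of variables in $\int(\lambda-z)^{-1}f_{\bR^{(l+1)}}(\lambda)\,\mathrm{d}\lambda$ immediately yields
\begin{equation*}
S_{l+1}(z) \;=\; \frac{1}{\rho_{1,l+1}^2}\,\Omega_l\!\left(\frac{z-\rho_{2,l+1}^2}{\rho_{1,l+1}^2}\right),
\end{equation*}
with $\Omega_l$ identified as the Stieltjes transform of $\bA_l$. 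This is the first displayed formula in the theorem.

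The second step is to derive the self-consistent equation for $\Omega_l$. By assumption A5, $\bW^{(l+1)}$ has independent $\calN(0,1/p_l)$ entries and is independent of $\bR^{(l)}$, so $\bA_l$ is a generalised sample-covariance matrix in the Marchenko--Pastur regime with aspect ratio $\beta = p_{l+1}/p_l$ and ``population'' covariance $\bR^{(l)}$. The classical Silverstein/Bai fixed-point equation then gives
\begin{equation*}
\Omega_l(z) \;=\; \int \frac{\mathrm{d}H_l(\tau)}{\tau\bigl(1-\beta-\beta z\,\Omega_l(z)\bigr) - z},
\end{equation*}
where $H_l$ denotes the limiting empirical spectral distribution of $\bR^{(l)}$. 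Factoring $q(z):=1-\beta-\beta z\,\Omega_l(z)$ out of the denominator, the integral collapses to $q(z)^{-1}\int(\tau-z/q(z))^{-1}\,\mathrm{d}H_l(\tau) = q(z)^{-1}\,S_l(z/q(z))$, which is exactly the advertised recursion. For the base case, $\bR^{(0)}=\bI$ makes $\bA_0 = \bW^{(1)}\bW^{(1)T}$ a scaled Wishart matrix with aspect ratio $\beta_1$; plugging $S_0(z)=1/(1-z)$ into the equation above yields the standard Marchenko--Pastur quadratic in $\Omega_0$, and selecting the root that vanishes as $z\to\infty$ produces the stated closed form for $\Omega_0$.

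The chief technical obstacle is justifying the iterative use of Silverstein's theorem layer by layer. At each step one needs (i) weak almost-sure convergence of the empirical spectral distribution of $\bR^{(l)}$ to a deterministic limit $H_l$, and (ii) uniform-in-$l$ boundedness of $\norm{\bR^{(l)}}_{\op}$ with overwhelming probability. Item (i) is furnished inductively by the previous step of the recursion, once (ii) is in hand. For (ii), the recursion yields $\norm{\bR^{(l+1)}}_{\op}\leq \rho_{1,l+1}^2\,\norm{\bW^{(l+1)}}_{\op}^{2}\,\norm{\bR^{(l)}}_{\op}+\rho_{2,l+1}^2$, and standard Gaussian singular-value concentration gives $\norm{\bW^{(l+1)}}_{\op}\leq (1+\sqrt{\beta})+o(1)$ with exponentially high probability; chained with the induction hypothesis this keeps all operator norms bounded. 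This inductive bookkeeping across the $L$ layers, rather than any individual computation, is where the genuine care is required.
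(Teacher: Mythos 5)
Your argument is correct, and the first step (the affine map $\lambda\mapsto\rho_{1,l+1}^2\lambda+\rho_{2,l+1}^2$ acting on the spectrum, hence $S_{l+1}(z)=\rho_{1,l+1}^{-2}\Omega_l((z-\rho_{2,l+1}^2)/\rho_{1,l+1}^2)$) is identical to the paper's. Where you genuinely diverge is in deriving the self-consistent equation for $\Omega_l$: you invoke the Silverstein--Bai fixed-point equation for a generalised sample-covariance matrix with population spectral distribution $H_l$, and then factor out $q(z)=1-\beta-\beta z\Omega_l(z)$ to recover $\Omega_l=q^{-1}S_l(z/q)$. The paper instead works in free probability: it observes that $\bW^{(l+1)}\bR^{(l)}\bW^{(l+1)T}/p_l$ has the same nonzero spectrum as the product of a Wishart matrix with $\bR^{(l)}$, uses multiplicativity of the $S$-transform for (asymptotically free) unitarily invariant factors, and then converts back to the Stieltjes transform via the functional relation between the two transforms. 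The two derivations are well known to be equivalent -- free multiplicative convolution with the Marchenko--Pastur law reproduces exactly the Silverstein equation -- and both land on the same recursion and the same base case $\Omega_0$. What your route buys is that the hypotheses needing verification are explicit and probabilistic (weak convergence of the ESD of $\bR^{(l)}$ to a deterministic $H_l$, independence from $\bW^{(l+1)}$, and the operator-norm control you supply inductively, which mirrors the paper's Lemma~\ref{lem:BoundingRL}); the paper's route is a cleaner algebraic manipulation but silently relies on asymptotic freeness of the Wishart factor and $\bR^{(l-1)}$, which itself requires an inductive justification of unitary invariance that the paper only gestures at. Your version is, if anything, the more self-contained of the two; the only caveat is that the iterated application of Silverstein's theorem should be stated with the induction hypothesis (existence of the deterministic limit $H_l$) made explicit, exactly as you flag in your final paragraph.
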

\begin{proof}
The proof is given in appendix~\ref{app:sec:LyapanovProofs}.
\end{proof}
The recursive definitions given are difficult to compute empirically, as such we will leave visualizing these results to future work. However the recursive structure suggests that there exists a limiting distribution over the eigenvalues in the limit of infinite depth characterized by the different ratio in size between the various layers.

 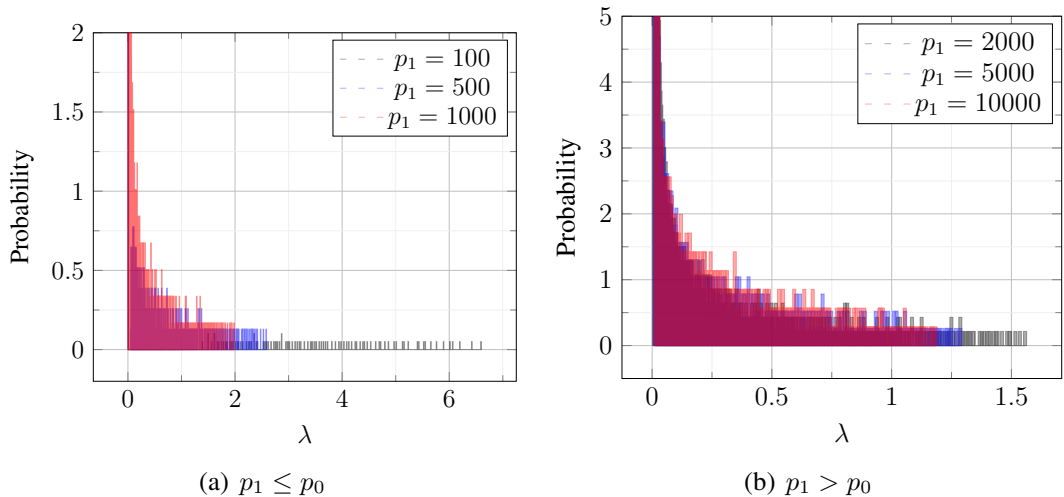
\begin{figure}[!hb]
    \centering
    \label{fig:HistEigDistribution}
    \subfigure [$p_1 \leq p_0$] {
    \label{subfig:StieltjesTransforms1}
    \resizebox{0.45 \textwidth}{!}{%
        \begin{tikzpicture}
        \begin{axis}[
          xlabel={$\lambda$},
          ylabel=Probability,
          grid = both,
            minor tick num = 1,
            major grid style = {lightgray},
            minor grid style = {lightgray!25},
            bar width = 0.01320055239333573,
            ymax =2
          ]
        \addplot[ybar,fill, black, opacity=0.33] table [x, y, col sep=space] {StieltjesData/HistVarmiddle=100.dat};
                \addlegendentry{$p_1= 100$}
        \addplot[ybar,fill ,blue, opacity=0.33] table [x, y, col sep=space] {StieltjesData/HistVarmiddle=500.dat};
                \addlegendentry{$p_1= 500$}
        \addplot[ybar,fill, red, opacity=0.33] table [x, y, col sep=space] {StieltjesData/HistVarmiddle=1000.dat};
                \addlegendentry{$p_1= 1000$}
        \end{axis}
        \end{tikzpicture}
         } 
        }
    \subfigure [$p_1 > p_0$] {
    \label{subfig:StieltjesTransforms2}
    \resizebox{0.45 \textwidth}{!}{%
        \begin{tikzpicture}
        \begin{axis}[
          xlabel={$\lambda$},
          ylabel=Probability,
          grid = both,
            minor tick num = 1,
            major grid style = {lightgray},
            minor grid style = {lightgray!25},
            bar width = 0.01320055239333573,
            ymax =5
          ]
        \addplot[ybar,fill, black, opacity=0.33] table [x, y, col sep=space] {StieltjesData/HistVarmiddle=2000.dat};
        \addlegendentry{$p_1= 2000$}
        \addplot[ybar,fill ,blue, opacity=0.33] table [x, y, col sep=space] {StieltjesData/HistVarmiddle=5000.dat};
                \addlegendentry{$p_1= 5000$}
        \addplot[ybar,fill, red, opacity=0.33] table [x, y, col sep=space] {StieltjesData/HistVarmiddle=10000.dat};
                \addlegendentry{$p_1= 10000$}
        \end{axis}
    \end{tikzpicture}
        } 
    }
    \caption{Empirical Eigendistribution of $\bR^{(l)}$ for various sizes $p_1$ of the 1st hidden layer}
\end{figure}

\section{Conclusion}\label{sec:Conclusion}
In this paper, we prove an asymptotic equivalence between deep random feature models and linear Gaussian models with respect to the training and generalization error. As a result of this universality, we can study a Gaussian equivalent model to the DRF model, in the asymptotic limit. We use this fact to provide an exact asymptotic analysis by means of the convex Gaussian min max theorem for an $L$-layer deep random feature model with Gaussian inputs. We further demonstrate that depth has an effect on training and generalization error both experimentally and by studying the eigendistribution of the Gaussian equivalent model's Covariance matrix.

\bibliographystyle{abbrv}
\bibliography{bib}

\appendix

\section{Technical Lemmas and Theorem}\label{app:sec:TechnicalLemmas}
In this section we give a number of Lemmas and Theorems that will be used in the proofs below.

In the following lemma we demonstrate that passing the input through an activation function with Gaussian weights result in a subgaussian random variable under mild assumptions.
\begin{lemma}\label{lem:SubgaussianityofRF}
Consider $\bX = [\bx_1\ \bx_2\ \cdots\ \bx_n]$, where $\bx_i\in\bbR^{d}$ and define $r = \norm{\bX}_{\op}$. Suppose that the derivative $\sigma'$ of the activation function $\sigma$ is bounded, i.e. $\norm{\sigma'}_{\infty} \leq \tau$. Let $\bw\sim\calN(\bm{0}, \frac{1}{d}\bI)$. Then, the random vector $(\sigma(\bx_k^T\bw))_k$ is $\frac{\tau r}{\sqrt{d}}-$sub-Gaussian
\end{lemma}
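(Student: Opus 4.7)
The plan is to recognize $A = \ba^T \bu = \sum_k a_k \sigma(\bx_k^T\bw)$ as a Lipschitz function of the Gaussian vector $\bw$ and then invoke the standard concentration inequality for Lipschitz functionals of Gaussians (Tsirelson--Ibragimov--Sudakov), which upgrades a deviation bound to the required moment generating function bound. So first, fix any unit vector $\ba \in \bbR^n$ and define $F:\bbR^d\to\bbR$ by $F(\bw)=\sum_k a_k\sigma(\bx_k^T\bw)$; the goal reduces to showing that $F(\bw)$ is sub-Gaussian with parameter $\tau r/\sqrt{d}$.

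Second, I would differentiate to get
\[
\nabla F(\bw) = \sum_k a_k\sigma'(\bx_k^T\bw)\bx_k = \bX\,\bD(\bw)\,\ba,
\]
where $\bD(\bw)$ is the diagonal matrix with entries $\sigma'(\bx_k^T\bw)$. Using $\|\bD(\bw)\|_{\op}\le \|\sigma'\|_\infty\le\tau$ and $\|\bX\|_{\op}=r$, I obtain
\[
\|\nabla F(\bw)\|_2 \le \|\bX\|_{\op}\|\bD(\bw)\|_{\op}\|\ba\|_2 \le \tau r,
\]
so $F$ is $\tau r$-Lipschitz on $\bbR^d$. Rewriting $\bw=\bg/\sqrt{d}$ with $\bg\sim\calN(\bm 0,\bI_d)$, the composition $\tilde F(\bg):=F(\bg/\sqrt d)$ is $(\tau r/\sqrt d)$-Lipschitz with respect to the standard Euclidean norm on $\bbR^d$.

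Third, by the Gaussian Lipschitz concentration inequality, any $L$-Lipschitz function of a standard Gaussian vector is sub-Gaussian around its mean with parameter $L$, i.e.\ $\bbE\bigl[e^{\lambda(\tilde F(\bg)-\bbE \tilde F(\bg))}\bigr]\le e^{L^2\lambda^2/2}$ for all $\lambda\in\bbR$. Applied with $L=\tau r/\sqrt d$, this yields precisely the required sub-Gaussian bound for $A-\bbE A$. Finally, to conclude that $A$ itself satisfies the bound of Definition~\ref{def:subgaussianity}, I would note that $\bx_k^T\bw$ is a centered Gaussian and invoke the oddness of $\sigma$ from assumption A3 (this lemma is used inside the DRF analysis where A3 is in force), which gives $\bbE[\sigma(\bx_k^T\bw)]=0$ and hence $\bbE A=0$. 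Since the unit vector $\ba$ was arbitrary, the vector $(\sigma(\bx_k^T\bw))_k$ is $(\tau r/\sqrt d)$-sub-Gaussian in the sense of Definition~\ref{def:subgaussianity}.

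There is no real obstacle here; the only subtlety is correctly tracking the scaling $1/\sqrt d$ from the covariance of $\bw$ into the Lipschitz constant and then into the sub-Gaussian parameter, and noting that the centering step relies on $\sigma$ being odd (otherwise one only obtains sub-Gaussianity around the mean, which is the standard and often equally useful notion).
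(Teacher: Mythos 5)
Your proof is correct and follows essentially the same route as the paper's: bound the gradient of $A(\bw)=\sum_k a_k\sigma(\bx_k^T\bw)$ by $\norm{\bX}_{\op}\cdot\tau$ and invoke Gaussian Lipschitz concentration. You are in fact slightly more careful than the paper in making the $1/\sqrt{d}$ rescaling explicit and in noting that oddness of $\sigma$ (assumption A3) is what centers $A$ so that the uncentered moment-generating-function bound of Definition~\ref{def:subgaussianity} holds.
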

\begin{proof}
Take a unit vector $\ba = (a_j)\in\bbR^{n}$. We show that $\bA(\bw):=\sum_{k=1}^n\ba_k\sigma(\bw^T\bx_k)$ is sub-Gaussian with parameter $\tau r/\sqrt{d}$. For this, we show that the function $A(\bw)$ is $\tau r-$Lipschitz continuous, which implies the desired result (see \citep{boucheron2013concentration}). For this, observe that
\begin{eqnarray}
    \nabla A = \sum_{k=1}^n\bx_k\sigma'(\bw^T\bx_k)a_k = \bX\bsigma,
\end{eqnarray}
where $\bsigma = (\sigma'(\bw^T\bx_k)a_k)_k$ and hence by assumption $\norm{\bsigma} \leq \tau$. We conclude that 
\begin{eqnarray}
    \norm{\nabla A}\leq \norm{\bX}_{\op}\norm{\bsigma} \leq \tau r.
\end{eqnarray}
This concludes the proof.
\end{proof}
Here we give a lemma that gives a high probability bound on the norm of a random matrix.
\begin{lemma}\label{lem:NormOfRandomMatrix}
    Consider a $p\times n$ random matrix $\bS$ where each row is independent and $\tau$-sub-Gaussian. Moreover, the covariance of each row is bounded by $\tau$ in operator norm. Then, there exists constants $c_0, \kappa$ only depending on $\tau$ such that for any $c>c_0$ the following holds:
    \begin{eqnarray}
        \Pr\left[ \norm{\bS} > c(\sqrt{p} + \sqrt{n}) \right] \leq e^{-\kappa c n}.
    \end{eqnarray}
\end{lemma}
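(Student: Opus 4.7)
The plan is to carry out the standard $\epsilon$-net argument for the operator norm of a matrix with independent sub-Gaussian rows, and then linearize the resulting Gaussian-type tail in $c$ by restricting to $c \geq c_0$.

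First I would use the variational characterization $\|\bS\|_{\op} = \sup_{\bu \in S^{p-1},\, \bv \in S^{n-1}} \bu^T \bS \bv$. For a fixed pair $(\bu,\bv)$ on the unit spheres, write $\bu^T \bS \bv = \sum_{i=1}^{p} u_i (\bs_i^T \bv)$, where $\bs_i$ is the $i$th row. Since each $\bs_i$ is $\tau$-sub-Gaussian, $\bs_i^T \bv$ is a $\tau$-sub-Gaussian scalar, and since the rows are independent, the $(\bs_i^T \bv)$ are independent $\tau$-sub-Gaussians. Weighted sums of independent sub-Gaussians are sub-Gaussian with parameter $\tau\|\bu\|_2=\tau$, so
\begin{equation}
\Pr\bigl[\, |\bu^T \bS \bv|>t \,\bigr] \leq 2 \exp\!\bigl(-t^2/(2\tau^2)\bigr).
\end{equation}

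Next I would deploy a standard $\epsilon$-net argument with $\epsilon = 1/4$: take nets $\calN \subset S^{p-1}$, $\calM \subset S^{n-1}$ of cardinalities at most $9^{p}$ and $9^{n}$ respectively, and use the well-known inequality $\|\bS\|_{\op} \leq 2\sup_{\bu \in \calN,\, \bv \in \calM} \bu^T \bS \bv$. Applying a union bound gives
\begin{equation}
\Pr\bigl[\,\|\bS\|_{\op}>2t\,\bigr] \leq 2\cdot 9^{p+n}\exp\!\bigl(-t^2/(2\tau^2)\bigr).
\end{equation}
Setting $2t = c(\sqrt{p}+\sqrt{n})$ and using $(\sqrt{p}+\sqrt{n})^2 \geq p+n \geq n$, the exponent becomes
\begin{equation}
(p+n)\log 9 - \tfrac{c^{2}}{8\tau^{2}}(p+n).
\end{equation}

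The only subtlety, which I expect to be the main (but mild) technical point, is turning this Gaussian-type bound $\exp(-\kappa c^{2} n)$ into the advertised linear-in-$c$ bound $\exp(-\kappa c n)$. For this I would fix $c_0$ large enough (depending only on $\tau$) so that for all $c \geq c_0$ one has $c^{2}/(16\tau^{2}) \geq \log 9$, which allows the $\log 9$ term to be absorbed into half of the quadratic term, leaving $(p+n)c^{2}/(16\tau^{2}) \geq n c^{2}/(16\tau^{2}) \geq \kappa c n$ for $\kappa := 1/(16\tau^{2})$ whenever $c \geq 1$. Adjusting $c_0$ once more to swallow the leading constant $2$, one obtains the claimed bound $\Pr[\|\bS\|_{\op} > c(\sqrt{p}+\sqrt{n})] \leq e^{-\kappa c n}$ for all $c \geq c_0$, with $c_0$ and $\kappa$ depending only on $\tau$. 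The hypothesis on the operator norm of the row covariances is not explicitly used in this argument, since it is automatically implied (up to an absolute constant) by sub-Gaussianity of the rows; it simply makes the statement self-contained.
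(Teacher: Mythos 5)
Your proof is correct and follows exactly the route the paper intends: the paper does not spell out a proof of this lemma but simply cites the standard $\epsilon$-net argument, which is precisely what you carry out, including the right way to trade the quadratic exponent $c^2 n$ for the stated linear $cn$ by restricting to $c \geq c_0(\tau)$. Your closing observation that the covariance hypothesis is redundant for this bound (sub-Gaussianity already controls the variance of every one-dimensional marginal) is also accurate.
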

\begin{proof}
    The proof is based on the standard $\epsilon-$net argument. Hence we do not give it here. See, for example, \citep{baraniuk2008simple} for a similar proof.
\end{proof}

Next for completeness we state the Convex Gaussian Min Max Theorem \citep{gordon1985some, gordon1988milman, thrampoulidis2015gaussian}. We make heavy use of this theorem in the proof of theorem~\ref{thm:CGMTsquareloss}.

\begin{theorem}[Convex Gaussin Min Max Theorem (CGMT)]\label{app:thm:CGMT} Let $\bG \in \bbR^{n\times m}, \bg\in\bbR^{m},$ and $\bh\in\bbR^{n}$ be independent of each other and have entries distributed according to $\calN(0, 1)$. Let $\calS_1\subset\bbR^{n}$ and $\calS_2\subset\bbR^{m}$ be non empty compact sets. Let $f(\cdot, \cdot)$ be a continuous function on $\calS_1\times\calS_2$. We define the primary and alternative optimization problems as follows:
\begin{eqnarray}
    P(\bG):=& \min_{\bx\in\calS_1}\max_{\by\in\calS_2} \bx^T\bG\by + f(\bx, \by)\\
    A(\bg, \bh):=& \min_{\bx\in\calS_1}\max_{\by\in\calS_2}\norm{\bx}_2\bg^T\by + \norm{\by}_2\bh^T\bx + f(\bx, \by),
\end{eqnarray}
Then for any $c_1\in\bbR$ we have that
\begin{eqnarray}
    \Pr(P(\bG)<c_1) \leq 2\Pr(A(\bg, \bh)\leq c_1).
\end{eqnarray}
Under the further assumption that $\calS_1$ and $\calS_2$ are convex sets, and $f$ is concave-convex on $\calS_1\times \calS_2$ then for all $c_2\in\bbR$ we have that
\begin{eqnarray}
    \Pr(P(\bG)>c_2) \leq 2\Pr(A(\bg, \bh)\geq c_2).
\end{eqnarray}
\end{theorem}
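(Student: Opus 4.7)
The plan is to derive the CGMT as a consequence of Gordon's classical Gaussian comparison inequality \citep{gordon1985some,gordon1988milman}, following the approach of \citep{thrampoulidis2015gaussian}. Gordon's inequality asserts that for two centered Gaussian processes $(X_{u,v})$ and $(Y_{u,v})$ indexed on $U\times V$ with matching diagonal variances, dominating same-$u$ covariances of $Y$, and dominating cross-$u$ covariances of $X$, one obtains the probabilistic comparison $\Pr(\min_u\max_v Y_{u,v} \geq t) \leq \Pr(\min_u\max_v X_{u,v} \geq t)$ for every $t$. The strategy is to set up two such processes whose $\min$-$\max$ values coincide, up to an easily removed auxiliary term, with $P(\bG)$ and $A(\bg,\bh)$.

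Introduce the auxiliary Gaussian processes
\begin{eqnarray}
    X_{\bx,\by} = \bx^T\bG\by + \norm{\bx}_2\norm{\by}_2\, g_0, \quad Y_{\bx,\by} = \norm{\bx}_2\,\bg^T\by + \norm{\by}_2\,\bh^T\bx,
\end{eqnarray}
where $g_0\sim\calN(0,1)$ is an independent scalar added solely to equate variances to $2\norm{\bx}_2^2\norm{\by}_2^2$. A direct computation shows that same-$\bx$ covariances coincide, whereas the cross-$\bx$ difference reduces to
\begin{eqnarray}
    \bbE[X_{\bx,\by} X_{\bx',\by'}] - \bbE[Y_{\bx,\by} Y_{\bx',\by'}] = \big(\bx^T\bx' - \norm{\bx}_2\norm{\bx'}_2\big)\big(\by^T\by' - \norm{\by}_2\norm{\by'}_2\big) \geq 0,
\end{eqnarray}
by Cauchy--Schwarz. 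Shifting both processes by the deterministic term $f(\bx,\by)$ preserves all covariance inequalities. A standard $\epsilon$-net argument combined with continuity of $f$ on the compact sets $\calS_1,\calS_2$ extends the comparison from a finite skeleton to the full index space, yielding
\begin{eqnarray}
    \Pr\Big(\min_{\bx\in\calS_1}\max_{\by\in\calS_2}\big(X_{\bx,\by}+f(\bx,\by)\big) < c_1\Big) \leq \Pr\Big(\min_{\bx\in\calS_1}\max_{\by\in\calS_2}\big(Y_{\bx,\by}+f(\bx,\by)\big) \leq c_1\Big).
\end{eqnarray}
The extraneous term $\norm{\bx}_2\norm{\by}_2 g_0$ is then eliminated by conditioning on the sign of $g_0$: the event $\{g_0 \geq 0\}$ has probability $1/2$, and on it the $\min$-$\max$ of $X_{\bx,\by}+f$ dominates $P(\bG)$. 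This conditioning produces the factor $2$ appearing in the first bound.

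For the second assertion, the convexity of $\calS_1,\calS_2$ together with the saddle-point structure inherited from the hypothesis on $f$ allows one to invoke Sion's minimax theorem, so that the order of the min and max in $A(\bg,\bh)$ may be interchanged. Apply part one to the negated primary problem: $-P(\bG) = \min_{\by\in\calS_2}\max_{\bx\in\calS_1}\big(-\bx^T\bG\by - f(\bx,\by)\big)$ is itself a Gaussian min-max of the same form, with $\bx$ and $\by$ swapped and with payoff that remains saddle-type. The one-sided comparison from part one applied to this negated problem, followed by a negation of the threshold $c_2 \to -c_2$, delivers the reverse direction $\Pr(P(\bG) > c_2) \leq 2\Pr(A(\bg,\bh) \geq c_2)$.

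The principal technical obstacles are (i) lifting Gordon's finite-index inequality to the continuous compact sets $\calS_1,\calS_2$ while respecting the strict-versus-weak inequality distinction appearing in the statement, which requires a careful uniform continuity and discretization argument, and (ii) justifying the min-max interchange in the convex case, since the terms $\norm{\bx}_2\bg^T\by$ and $\norm{\by}_2\bh^T\bx$ couple convex norms with linear inner products of arbitrary sign and therefore only yield the required convexity/concavity after the standard reparameterization $(\bx,\by)\mapsto (\bx,\norm{\by}_2,\bu)$ with $\bu=\by/\norm{\by}_2$ (and similarly for $\bx$) that separates the radial and angular components before invoking Sion's theorem.
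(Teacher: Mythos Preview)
The paper does not actually prove this theorem: it is stated in Appendix~A purely for completeness, with the proof deferred to the cited references \citep{gordon1985some,gordon1988milman,thrampoulidis2015gaussian}. Your outline is the standard argument from those references, so there is nothing to contrast against the paper itself.

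That said, your sketch of the second inequality contains a genuine confusion. Sion's theorem is not needed---and cannot in general be applied---to interchange the $\min$ and $\max$ in $A(\bg,\bh)$: the auxiliary objective $\norm{\bx}_2\bg^T\by+\norm{\by}_2\bh^T\bx+f(\bx,\by)$ is not convex--concave in $(\bx,\by)$, and the polar reparameterization you propose in obstacle (ii) does not restore a saddle structure. The correct route is: (a) apply Sion to the \emph{primary} problem, whose objective $\bx^T\bG\by+f(\bx,\by)$ \emph{is} convex--concave under the stated hypotheses, to obtain $-P(\bG)=\min_{\by}\max_{\bx}\bigl(-\bx^T\bG\by-f(\bx,\by)\bigr)$; (b) apply part one to this swapped primary, yielding $\Pr\bigl(P(\bG)>c_2\bigr)\le 2\,\Pr\bigl(\max_{\by}\min_{\bx}[\text{aux}]\ge c_2\bigr)$ after relabeling the Gaussians; (c) invoke only \emph{weak} duality, $\max_{\by}\min_{\bx}\le\min_{\bx}\max_{\by}$, to bound this by $2\,\Pr\bigl(A(\bg,\bh)\ge c_2\bigr)$. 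No strong duality for the auxiliary is ever required. A minor sign slip in part one: you should condition on $g_0\le 0$ rather than $g_0\ge 0$, since you need $\min_{\bx}\max_{\by}(X_{\bx,\by}+f)\le P(\bG)$ to get the containment in the right direction.
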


We note this theorem demonstrates that if $A(\bg, \bh)$ concentrates on a particular value $c$, ie
\begin{eqnarray}
    \Pr(|A(\bg, \bh) - c| > \epsilon) \xrightarrow[n, m \rightarrow \infty]{P} 0, \qquad \forall \epsilon > 0
\end{eqnarray}
then $P(\bG) $ will concentrate on the same limit.

\section{Proof of Theorem \ref{thm:GenericUniversality}}\label{app:sec:UniversalityTheoremProof}
Our proof is based on an application of Lindeberg's argument to the sequence of features $\phi_j$ for $j = 1,\ldots, p$. We will adopt the following notation for this section. For a matrix $\bA$ we denote its $i$th row by means of superscript $\ba^i$ and its $j$th column by means of subscript $\ba_j$.

For simplicity, for any $m\times n$ matrix $\bZ$ with columns $(\bz_k)$ we define
\begin{eqnarray}
    L(\bz) = \min_{\btheta\in\bbR^{m}}\frac{1}{n}\sum_{k=1}^n \ell(\bz_k^T\btheta, y_k) + R(\btheta).
\end{eqnarray}
By means of a splitting technique, we may express this as
\begin{eqnarray}
    L(\bZ) = \min_{\btheta\in\bbR^{m}, \balpha\in\bbR^{n}}\max_{\bd\in\bbR^{n}} \frac{1}{n}\left( \sum_{k=1}^n \ell(\alpha_k, y_k) + d_k(\alpha_k - \bz_k^T\btheta) \right) + R(\btheta)\nwl 
    = - \min_{\bd\in\bbR^{n}}\underbrace{\frac{1}{n}\sum_{k=1}^n \ell^*(-d_k, y_k) + R^*(\frac{1}{n}\bZ\bd)}_{\Lambda(\bd, \bZ)}
\end{eqnarray}
where $\ell^*, R^*$ are the Legendre transforms of $\ell$ and $R$ respectively. We note that $L(\bF) = \calE_{train}(\bF)$ and $L(\bG) = \calE_{train}(\bG)$. Furthermore, we define $\bZ_r$ for $r = 0, 1, \ldots, m$ as
\begin{eqnarray}
    \bZ_r = \sum_{j=1}^r\bd_j\bgamma^j + \sum_{j=r+1}^p \bd_j\bphi^j,
\end{eqnarray}
where $\bgamma^j, \bphi^j$ are the $j$th row of $\bPhi$ and $\bGamma$ respectively. We note that, $\bd_j\bgamma^j$ and $\bd_j\bphi^j$ are outer (tensor) products, resulting in matrices. As a result $\bZ_0 = \bF$ and $\bZ_m = \bG$. We have that
\begin{eqnarray}\label{app:eq:lindebergstep}
    \left|\bbE\psi(L(\bF)) - \bbE\psi(L(\bG)) \right| \leq \sum_{r=1}^m\left| \bbE\psi(L(\bZ_r)) - \bbE\psi(L(\bZ_{r-1})) \right|.
\end{eqnarray}

Now , for $r =1, 2, \ldots, m$ for any vector $\bu \in\bbR^{n}$, define
\begin{eqnarray}
    \bZ_{-r}(\bu) = \sum_{j=1}^{r-1}\bd_j\bgamma^j + \bd_r\bu^T + \sum_{j=r+1}^p\bd_j\bphi^j.
\end{eqnarray}
We note that $\bZ_r = \bZ_{-r}(\bgamma^r)$ and that $\bZ_{r-1} = \bZ_{-r}(\bphi^r)$, as such
\begin{eqnarray}
    \bbE\psi(L(\bZ_r)) - \bbE\psi(L(\bZ_{r-1})) = \nwl
    \left[\bbE\psi(L(\bZ_{-r}(\bgamma^r))) - \bbE\psi(L(\bZ_{-r}(\bm{0}))) \right] - \left[\bbE\psi(L(\bZ_{-r}(\bphi^r))) - \bbE\psi(L(\bZ_{-r}(\bm{0}))) \right].
\end{eqnarray}

We now define $\hat{\bd}_r$ and $\hat{\bd}_{-r}$ as the minimal solutions of $\Lambda(\bd, \bZ_r)$ and $\Lambda(\bd, \bZ_{-r}(\bm{0}))$ respectively. We note that $\bgamma^r, \bphi^r$ are $\tau-$sub-Gaussian and independent of $\bZ_{-r}(\bm{0})$. Hence, we examine the following term:
\begin{eqnarray}
    \bbE\psi(L(\bZ_{-r}(\bu))) - \bbE\psi(L(\bZ_{-r}(\bm{0}))),
\end{eqnarray}
for a generic $\tau$-sub-Gaussian independent random vector $\bu$.

We recall that $\bR$ is $\mu$-strongly convex and $M$-smooth, we have that $R^*$ is $\frac{1}{M}-$strongly convex and $\frac{1}{\mu}$ smooth for any $\bZ$, \citep{kakade2009duality}[theorem 6]. The optimal solution $\hat{\bd}$ is therefore uniquely identified by the first order optimiality condition
\begin{eqnarray}
    \bh(\bd, \bZ):= \nabla_\bd\Lambda(\bd, \bZ) = \bzeta(\bd) + \frac{1}{n}\bZ^T\nabla R^*\left(\frac{1}{n}\bZ\bd\right) = \bm{0}
\end{eqnarray}
where $\bzeta(\bd)$ is the vector of values $(\ell(d_k, y_k))_k$ with $\ell'$ being the partial derivative of $\ell^*$ with respect to the first argument. In particular, $\bh(\hat{\bd}_{-r}, \bZ_{-r}(\bm{0})) = \bm{0}$. We can therefore conclude that for every $\bu = (u_k)_{k=1}^n$ that
\begin{eqnarray}\label{App:eq:firstorderexpansion}
    \bh(\hat{\bd}_{-r}, \bZ_{-r}(\bu)) = \frac{1}{n}\bu\bd_{r}^T\nabla R^*\left(\frac{1}{n}\bZ_{-r}\hat{\bd}_{-r} \right) + \nwl
    \frac{1}{n}\bZ^T_{-r}(\bu)\left(\nabla R^*\left(\frac{1}{n}\bZ_{-r}(\bu)\hat{\bd}_{-r}\right) - \nabla R^*\left(\frac{1}{n}\bZ_{-r}(\bm{0})\hat{\bd}_{-r} \right) \right).
\end{eqnarray}
Where we have used the fact that
\begin{eqnarray}
    \bZ_{-r}(\bu) = \bZ_{-r}(\bm{0}) + \bd_r\bu^T.
\end{eqnarray}

We can further conclude that
\begin{eqnarray}
    \frac{1}{n}\bZ_{-r}(\bu)\hat{\bd}_{-r} = \frac{1}{n}\bZ_{-r}(\bm{0})\hat{\bd}_{-r} + \bd_r\frac{\bu^T\hat{\bd}_{-r}}{n}.
\end{eqnarray}

\subsection{Bounding the terms in (49)}
Now, we introduce a series of bounds and approximations on the terms involved in \ref{App:eq:firstorderexpansion}. For ease of notation, we introduce the following:

\begin{definition}
We say than an expression including the parameter $c$ holds \textit{with high probability (w.h.p)} if there are constants $c_0, \kappa$ such that for any $c>c_0$, the expression holds with probability higher than $1-\kappa e^{-\kappa c n}$. We also denote $C:=\poly(c)$.
\end{definition}

Recall that we have assumed that $\bu$ is a $\tau$-sub-Gaussian vector. We now note that all the matrices $\bZ_r, \bZ_{-r}(\bm{0})$ and $\bZ_{-r}(\bu)$ can be expressed as $\bZ = \bD\bS$ where each row of $\bS$ is independent an associated with either a random feature, a replaced Gaussian feature, or $\bu$. Hence, for $p\sim n$, by assumption A1 and lemma \ref{lem:NormOfRandomMatrix}, we have that $\norm{\bS}_2 \leq C(\sqrt{p} + \sqrt{n}) \leq C\sqrt{n}$ holds with high probability, and by the conditions on $\bD$ assumed for the theorem the matrices $\bZ_r, \bZ_{-r}(\bm{0})$ and $\bZ_{-r}(\bu)$ are also bounded in operator norm by $C\sqrt{n}$ with high probability.

Next we  note by assumption A2 that $\norm{\bzeta(\bm{0})}\leq C\sqrt{n}$ and by assumption A1, that $\nabla R^*(\bm{0}) =\bm{0}$. Moreover, as $R^*$ is $\frac{1}{M}-$strongly convex, we obtain that
\begin{eqnarray}
    \norm{\hat{\bd}_{-r}} \leq M\norm{\bh(\bm{0}, \bZ_{-r}(\bm{0}))} \leq C\sqrt{n}.
\end{eqnarray}
By the $\frac{1}{\mu}-$ smoothness of $R^*$, we also obtain that:
\begin{eqnarray}
    \norm{\nabla R^*\left(\frac{1}{n}\bZ_{-r}(\bm{0})\hat{\bd}_{-r} \right)}\leq \frac{1}{\mu n}\norm{\bZ_{-r}(\bm{0})\hat{\bd}_{-r}} \leq C\whp
\end{eqnarray}

and 
\begin{eqnarray}
\norm{\nabla R^*\left(\frac{1}{n}\bZ_{-r}(\bu)\hat{\bd}_{-r}\right) - \nabla R^*\left(\frac{1}{n}\bZ_{-r}(\bm{0}\hat{\bd}_{-r}) \right)} \leq \frac{1}{\mu n}\norm{\bd_r}\frac{\left|\bu^T\hat{\bd}_{-r}\right|}{n} \leq \frac{C}{n^2}\left|\bu^T\hat{\bd}_{-r}\right|.   
\end{eqnarray}

Recalling that $\bu$ is $\tau$-sub-Gaussian, hence:
\begin{eqnarray}
    \Pr\left[\left|\bu^T\hat{\bd}_{-r} \right| > c\sqrt{n}\norm{\hat{\bd}_{-r}} \right]\leq e^{-kcn},
\end{eqnarray}
where $\kappa$ only depends on $\tau$. From this we conclude that,
\begin{eqnarray}
    \norm{\nabla R^*\left( \frac{1}{n}\bZ_{-r}(\bu)\hat{\bd}_{-r}\right) - \nabla R^*\left( \frac{1}{n}\bZ_{-r}(\bm{0})\hat{\bd}_{-r}\right)}\leq \frac{C}{n} \whp.
\end{eqnarray}

Finally, applying lemma \ref{lem:NormOfRandomMatrix} to $\bS = \bu^T$ (with $p=1$) shows that $\norm{\bu} \leq C\sqrt{n}$ with high probability. As such we can make the following conclusion about \eqref{App:eq:firstorderexpansion}:
\begin{eqnarray}
    \norm{\bh(\hat{\bd}_{-r}, \bZ_{-r}(\bu)) -\frac{\delta_r}{n}\bu} \leq \frac{C}{n^{3/2}} \whp
\end{eqnarray}
where $\delta_r = \bd_r^T\nabla R^*\left(\frac{1}{n}\bZ_{-r}(\bm{0})\hat{\bd}_{-r} \right)$. Hence, $|\delta_r|\leq C \whp$ and
\begin{eqnarray}
    \norm{\bh(\hat{\bd}_r, \bZ_{-r}(\bu))} \leq \frac{C}{\sqrt{n}} \whp.
\end{eqnarray}

\subsection{Approximating Loss Function Difference}
In this section we approximate the value of  $L(\bZ_{-r}(\bu))$.

We denote $\bJ_r = \frac{\partial \bh}{\partial \bd}(\hat{\bd}_{-r}, \bZ_{-r}(\bm{0}))$ and introduce the following point:
\begin{eqnarray}
    \hat{\bd}_{+r}(\bu) = \hat{\bd}_{-r} - \frac{\delta_r}{n}\bJ^{-1}_r\bu.
\end{eqnarray}
We note that $\frac{\delta_r}{n}\bu + \bJ_r(\hat{\bd}_{+r}(\bu) - \hat{\bd}_{-r}) =\bm{0}$ and by strong convexity that $\bJ_{r}  \succeq  \frac{1}{M}L$. Furthermore, by the assumption on the third derivatives,
\begin{eqnarray}
    \norm{\bh(\hat{\bd}_{+r}, \bZ_{-r}(\bu))} \nwl
    = \norm{\hat{\bd}_{+r}, \bZ_{-r}(\bu) - \frac{\delta_r}{n} -\bJ_r\left(\hat{\bd}_{+r}(\bu) - \hat{\bd}_{-r} \right)}\nwl
    \leq \norm{\bh(\hat{\bd}_{+r}, \bZ_{-r}(\bu)) - \bh(\hat{\bd}_{+r}, \bZ_{-r}(\bu)) - \bJ_r\left(\hat{\bd}_{+r}(\bu) - \hat{\bd}_{-r} \right)} + \frac{C}{n^{3/2}}\nwl 
    \leq C\norm{\hat{\bd}_{+r}(\bu) - \hat{\bd}_{-r}}^2 + \frac{C}{n^{3/2}} =\frac{C\delta-r^2}{n^2}\norm{\bJ_r^{-1}\bu}^2 + \frac{C}{n^{3/2}} \leq \frac{C}{n} \whp.
\end{eqnarray}
Finally, from strong convexity, we conclude that
\begin{eqnarray}
    0 \leq \Lambda(\hat{\bd}_{+r}(\bu), \bZ_{-r}(\bu)) + L(\bZ_{-r}(\bu)) \leq \frac{M}{2}\norm{\bh(\hat{\bd}_{+r}(\bu), \bZ_{-r}(\bu))}^2 \leq \frac{C}{n^2} \whp.
\end{eqnarray}
On the other hand, we note that
\begin{eqnarray}
    \frac{1}{n}\bZ_{-r}(\bu)\hat{\bd}_{+r} = \frac{1}{n}\bZ_{-r}(\bm{0})\bd_{-r} + \frac{1}{n}\bd_r\bu^T\bd_{-r} - \frac{\delta_r}{n^2}\bZ_{-r}(\bm{0})\bJ^{-1}_r\bu - \frac{\delta_r}{n^2}\bd_r\bu^T\bJ_r^{-1}\bu.
\end{eqnarray}

We now define
\begin{eqnarray}
    B_r(\bu):= \bmeta^T_r\left[\frac{1}{n}\bd_r\bu^T\bd_{-r} - \frac{\delta_r}{n^2}\bu^T\bJ^{-1}_r\bu \right] + \frac{1}{2n^2}\bd_r^T\bH_r\bd_r\left(\bu^T\bd_{-r}\right)^2 + \frac{\delta_r^2}{2n^3}\bu^T\bJ_r^{-1}\bLambda_r\bJ^{-1}_r\bu,
\end{eqnarray}
where $\bmeta_r$ and $\bH_r$ are the gradient and Hessian of $\bR^*$ respectively at $\frac{1}{n}\bZ_{-r}(\bm{0})\bd_{-r}$ and $\bLambda_r$ is the diagonal matrix of elements $(\ell''(d_k, y_k))$ where $\ell''$ is the second derivative of $\ell^*$ with respect to the first argument. From the previous bounds we conclude that
\begin{eqnarray}
    \left|\Lambda(\hat{\bd}_{+r}(\bu), \bZ_{-r}(\bu)) + L(\bZ_{-r}(\bm{0})) - B_r(\bu) \right|\leq \frac{C}{n^{3/2}} \whp
\end{eqnarray}
from which we find that
\begin{eqnarray}
    \left|L(\bZ_{-r}(\bu)) - L(\bZ_{-r}(\bm{0})) - B_r(\bu) \right| \leq \frac{C}{n^{3/2}}\whp.
\end{eqnarray}

Hence, by the bounded derivatives of $\psi$ we have:
\begin{eqnarray}
        \left|\psi(L(\bZ_{-r}(\bu))) - \psi(L(\bZ_{-r}(\bm{0}))) - B_r(\bu) \right| \leq \frac{C}{n^{3/2}}\whp.
\end{eqnarray}

From the mean value theorem, we have that
\begin{eqnarray}
    \left|\psi(L(\bZ_{-r}(\bm{0}))) + B_r(\bu) - \psi(L(\bZ_{-r}(\bm{0}))) \right.\nwl \left.- \psi'(L(\bZ_{-r}(\bm{0})))B_r(\bu) - \frac{1}{2}\psi''(L(\bZ_{-r}(\bm{0})))B_r^2(\bu) \right| \leq C\left|B_r(\bu)\right|^3
\end{eqnarray}

Again, making use of the previous bounds, under the product measure $\Omega$ we observe that

\begin{eqnarray}
    \left| |B_r(\bu)|^2 - \frac{(\bmeta^T\bd_r)^2(\bu^T\bd_{-r})^2}{n^2} \right| \leq \frac{C}{n^{3/2}}, \qquad |B_r(\bu)| \leq \frac{C}{\sqrt{n}} \whp
\end{eqnarray}
and hence
\begin{eqnarray}
        \left|\psi(L(\bZ_{-r}(\bm{0}))) + B_r(\bu) - \psi(L(\bZ_{-r}(\bm{0}))) \right.\nwl \left.- \psi'(L(\bZ_{-r}(\bm{0})))B_r(\bu) - \frac{1}{2}\psi''(L(\bZ_{-r}(\bm{0})))\frac{(\bmeta^T\bd_r)^2(\bu^T\bd_{-r})^2}{n^2} \right| \leq \frac{C}{n^{3/2}} \whp.
\end{eqnarray}
Combining all of the steps together, we obtain that
\begin{eqnarray}
\left|\psi(L(\bZ_{-r}(\bu))) - \psi(L(\bZ_{-r}(\bm{0}))) \right.\nwl \left.- \psi'(L(\bZ_{-r}(\bm{0})))B_r(\bu) - \frac{1}{2}\psi''(L(\bZ_{-r}(\bm{0})))\frac{(\bmeta^T\bd_r)^2(\bu^T\bd_{-r})^2}{n^2} \right| \leq \frac{C}{n^{3/2}} \whp
\end{eqnarray}

\subsection{Bounding the Increments of (44) and Final Steps}
We now employ the following observation:
\begin{lemma}\label{lem:BoundedExpectationValues}
    Suppose that $A$ is a non-negative random variable such that $A\leq C\whp$ with $C=\poly(c)$. There exists a universal constant $c_1$ such that $\bbE[A] \leq c_1$.
\end{lemma}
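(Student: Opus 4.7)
The plan is a short tail-integration argument. Using non-negativity of $A$ I would start from the layer-cake identity
\begin{equation}
\bbE[A] \;=\; \int_0^\infty \Pr[A > t]\,dt,
\end{equation}
and split the integral at a threshold $t_0$ chosen so that the w.h.p.\ bound becomes usable. Unpacking the hypothesis, the definition of w.h.p.\ says there are constants $c_0,\kappa$ and a polynomial $p(c)$ (coming from the $\poly(c)$ in $C$) such that for every $c>c_0$
\begin{equation}
\Pr[A > p(c)] \;\leq\; \kappa\, e^{-\kappa c n}.
\end{equation}
After possibly replacing $p$ by a dominating monomial $K c^k$ (any polynomial is eventually dominated by its leading term), I may assume $p$ is strictly increasing on $(c_0,\infty)$ and set $t_0 := p(c_0)$, so the map $t = p(c)$ is invertible on $(t_0,\infty)$.

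Next I would write
\begin{equation}
\bbE[A] \;\leq\; \int_0^{t_0} 1\,dt \;+\; \int_{t_0}^\infty \Pr[A > t]\,dt \;\leq\; t_0 \;+\; \int_{t_0}^\infty \Pr[A > t]\,dt.
\end{equation}
The first term is a universal constant since $t_0=p(c_0)$ depends only on the constants in the w.h.p.\ hypothesis. For the second term I would change variables via $t = p(c)$, which yields
\begin{equation}
\int_{t_0}^\infty \Pr[A > t]\,dt \;=\; \int_{c_0}^\infty \Pr[A > p(c)]\,p'(c)\,dc \;\leq\; \kappa \int_{c_0}^\infty p'(c)\,e^{-\kappa c n}\,dc.
\end{equation}
Since $p'$ is a polynomial and the exponent decays linearly in $c$, this last integral is finite and uniformly bounded in $n\geq 1$; a routine estimate (e.g.\ splitting $e^{-\kappa c n} = e^{-\kappa c n/2} e^{-\kappa c n/2}$ and absorbing the polynomial into the first factor, or repeated integration by parts) gives a bound by a constant depending only on $\kappa$ and the degree and coefficients of $p$. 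Adding the two pieces yields $\bbE[A] \leq c_1$ for a universal constant $c_1$.

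There is essentially no serious obstacle. The only minor care is ensuring monotonicity of $p$ so that the change of variables is legitimate, which is why I would first dominate $p$ by a monomial; once this is done the rest is bookkeeping. Note in particular that the whole argument is uniform in $n$ (indeed the tail integral even tends to zero as $n\to\infty$), so $c_1$ depends only on $c_0$, $\kappa$ and the polynomial implicit in $C=\poly(c)$, as required.
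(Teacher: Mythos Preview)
Your proposal is correct and follows essentially the same approach as the paper: both start from the layer-cake identity $\bbE[A]=\int_0^\infty\Pr[A>t]\,dt$, dominate $C=\poly(c)$ by a monomial, split the integral at the threshold corresponding to $c_0$, and bound the tail using the exponential decay. The only cosmetic difference is that the paper inverts the monomial to write the tail bound as $\Pr[A>C]\leq\kappa e^{-(\kappa/\alpha)nC^{1/\beta}}$ and integrates in $C$, whereas you perform the change of variables $t=p(c)$ and integrate in $c$; these are the same substitution carried out in opposite directions.
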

\begin{proof}
    Note that the assumptions imply that there exist universal constants $c_0, \kappa$ such that for $c>c_0$ 
    \begin{eqnarray}
        \Pr[A > C] \leq \kappa e^{-\kappa n c}
    \end{eqnarray}
    Note that $C=\poly(c)\leq (\alpha c)^{\beta}$ for some constants $\alpha, \beta>0$. Hence for $C>C_0:= (\alpha c_0)^\beta$, we have
    \begin{eqnarray}
        \Pr[A > C] \leq \kappa e^{-\frac{\kappa}{\alpha}nC^{\frac{1}{\beta}}}.
    \end{eqnarray}
    As such, by making use of Tonelli's theorem we have that
    \begin{eqnarray}
        \bbE[A] = \int_{0}^{\infty} \Pr[A > C]dC \leq C_0 + \kappa\int_{C_0}^\infty e^{-\frac{\kappa}{\alpha}nC^{\frac{1}{\beta}}}.
    \end{eqnarray}
    It is simple to check that the right hand side is bounded by a universal constant.
\end{proof}

According to lemma \ref{lem:BoundedExpectationValues} we have that
\begin{eqnarray}
\left|\bbE\psi(L(\bZ_{-r}(\bu))) - \bbE\psi(L(\bZ_{-r}(\bm{0}))) \right.\nwl \left.- \bbE\psi'(L(\bZ_{-r}(\bm{0})))B_r(\bu) - \frac{1}{2}\bbE\psi''(L(\bZ_{-r}(\bm{0})))\frac{(\bmeta^T\bd_r)^2(\bu^T\bd_{-r})^2}{n^2} \right| \leq \frac{c_1}{n^{3/2}},
\end{eqnarray}
for some universal constant $c_1$. Now we note that each expectation can be carried out y first conditioning on $\bZ_{-r}(\bm{0})$ and then taking the expectation with respect to it. Accordingly, we denote $\bbE_{\bu}:= \bbE[\cdot|\bZ_{-r}(\bm{0})]$ as this expectation is only over $\bu$, which is independent of $\bZ_{-r}(\bm{0})$. Furthermore, we repeat the above bound for $\bu = \bgamma^r$ and $\bu = \bphi^r$, from which we obtain:
\begin{eqnarray}
    \left|\bbE\psi(L(\bZ_r)) - \bbE\psi(L(\bZ_{r-1})) - \bbE\psi'(L(\bZ_{-r}(\bm{0})))[\bbE_\bu B_r(\bgamma^r) - \bbE_\bu B_r(\bphi^r)] \right.\nwl \left. 
    - \frac{1}{2}\bbE\psi''(L(\bZ_{-r}(\bm{0})))\frac{(\bmeta^T\bd_r)^2}{n^2}\left[\bbE_\bu (\bd^T_{-r}\bphi^r)^2 - \bbE_{\bu}(\bd^T_{-r}\bgamma^r)^2\right] \right| \leq \frac{2c_1}{n^{3/2}}.
\end{eqnarray}

Making use of the bounds on the derivatives of $\psi$, we obtain:
\begin{eqnarray}
    \left|\bbE\psi(L(\bZ_r)) - \bbE\psi(L(\bZ_{r-1})) \right| \leq c\bbE\left|\bbE_{\bu}B_r(\bgamma^r) - \bbE_\bu B_r(\bphi^r) \right| + \nwl
    \frac{c}{2}\bbE\left|\frac{(\bmeta^T\bd_r)^2}{n^2}[\bbE_\bu (\bd_{-r}^T\bphi^r)^2 - \bbE_{\bu}(\bd_{-r}^T\bgamma^r) ] \right| + \frac{2c_1}{n^{3/2}}.
\end{eqnarray}
By the previous bounds, it is straightforward to see that
\begin{eqnarray}
    \bbE\left|\bbE_{\bu}B_r(\bgamma^r) - \bbE_\bu B_r(\bphi^r) \right| \leq \frac{C}{n}\norm{\bK_r - \bK_r'}_{\op}\whp,
\end{eqnarray}
and
\begin{eqnarray}
    \frac{(\bmeta^T\bd_r)^2}{n^2}[\bbE_\bu (\bd_{-r}^T\bphi^r)^2 - \bbE_{\bu}(\bd_{-r}^T\bgamma^r) ] \leq \frac{C}{n}\norm{\bK_r - \bK_{r}'}_{\op} \whp.
\end{eqnarray}
Hence by lemma \ref{lem:BoundedExpectationValues} and \eqref{app:eq:lindebergstep} we conclude that there exists a universal constant $c_1$ such that
\begin{eqnarray}
    \left|\bbE\psi(L(\bF)) - \bbE\psi(L(\bG))\right| \leq \frac{c_1}{n}\sum_{r=1}^p\norm{\bK_r - \bK'_r}_{\op} + \frac{c_1}{\sqrt{n}}
\end{eqnarray}
This concludes the proof of part 1 of the theorem

\subsection{Proof of part 2}
For $\epsilon \in \bbR$, we define $R_{\epsilon}:= R + \epsilon h$. Define $L_\epsilon(F), L_\epsilon(G)$ as the optimal values with $R_\epsilon$ and note that for all $\epsilon$
\begin{eqnarray}
    \epsilon h(\hat{\btheta}_F) \geq L_\epsilon(F) - L(F),
\end{eqnarray}
and 
\begin{eqnarray}
    \epsilon h(\hat{\btheta}_G) \geq L_\epsilon(G) - L(G).
\end{eqnarray}
We then note that for sufficiently (but finitely) small $\epsilon$ the conditions of the theorem are satisfied, ie $R_\epsilon$ remains strongly convex and smooth. Choose $\epsilon >0$ such that both $\epsilon$ and $-\epsilon$ statisfy these conditions. Then we have
\begin{eqnarray}
    \frac{L_\epsilon(G) - L(G)}{\epsilon} + \frac{L_\epsilon(F) - L(F)}{\epsilon} \leq h(\hat{\btheta}_{F}) - h(\hat{\btheta}_G) \leq -\frac{L_{\epsilon}(G) - L(G)}{\epsilon} - \frac{L_\epsilon(F) - L(F)}{\epsilon}.
\end{eqnarray}
Taking the expectation, and making use of the results of part 1, with $\psi(x) = x$ we conclude that
\begin{eqnarray}
    \left|\bbE h(\hat{\btheta}_F) - \bbE h(\hat{\btheta}_G) \right| \leq \bbE\left[-\frac{L_{\epsilon}(G) - L(G)}{\epsilon} - \frac{L_\epsilon(F) - L(F)}{\epsilon} \right] + \frac{c}{n}\sum_{r=1}^p \norm{\bK_r - \bK_{r}'} + \frac{c}{\sqrt{n}}.
\end{eqnarray}
Noting that $\epsilon$ can be arbitraily small, and hence letting $\epsilon \rightarrow 0$ we observe that
\begin{eqnarray}
    -\frac{L_{\epsilon}(G) - L(G)}{\epsilon} - \frac{L_\epsilon(F) - L(F)}{\epsilon} \rightarrow 0,
\end{eqnarray}
Moreover, $-\frac{L_{\epsilon}(G) - L(G)}{\epsilon} - \frac{L_\epsilon(F) - L(F)}{\epsilon}$ is bounded by twice the bound $h$. Then, we may invoke the dominated convergence theorem and conclude that
\begin{eqnarray}
    \bbE\left[-\frac{L_{\epsilon}(G) - L(G)}{\epsilon} - \frac{L_\epsilon(F) - L(F)}{\epsilon} \right] \rightarrow 0
\end{eqnarray}
Which concludes the proof.

\section{Proof of Theorems \ref{thm:PreservationOfRegularity} and \ref{thm:DeepRFUniversality}}\label{app:sec:DeepRFUniversalityProof}
The proof of these theorem relies on two intermediate results, we shall prove both of these first. Firstly consider the following theorem:

\begin{theorem}\label{thm:ReplacemetnOfCovarianceMatrix}
    Assume that $\sigma$ is odd and that assumption A4 holds. Take
    \begin{eqnarray}
        \mu:= \sup_{i, j} \left|\frac{\bx_i^T\bx_j}{d} -\delta_{ij} \right|,
    \end{eqnarray}
    and $\bw\sim\calN(\bm{0}, \frac{1}{d}\bI)$. Consider the random vector $\bphi =(\sigma(\bx_k^T\bw))_k$ and denote its covariance matrix by $\bK$. Then,
    \begin{eqnarray}
        \norm{\bK - \left(\frac{\rho_1^2}{d}\bX^T\bX + \rho_2^2\bI \right)}_{\op} \leq c\left(\mu^3n + \mu + \mu\frac{\norm{\bX}_{\op}^2}{d} \right),
    \end{eqnarray}
    where $c$ is a universal constant.
\end{theorem}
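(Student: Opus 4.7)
The plan is to expand each entry of $\bK$ in a Taylor series around the reference values, critically using the oddness of $\sigma$ to force certain coefficients to vanish, and then split the resulting error matrix into three pieces whose operator norms account for the three terms in the bound. Set $\alpha_k = \|\bx_k\|^2/d = 1+\tilde{\mu}_k$ with $|\tilde{\mu}_k|\leq\mu$, and $\rho_{kl}=\bx_k^T\bx_l/d$ with $|\rho_{kl}|\leq\mu$ for $k\neq l$. Let $\bM$ denote the target matrix with $M_{kl}=(\rho_1^2/d)\bx_k^T\bx_l + \rho_2^2\delta_{kl}$. For $k\neq l$, $K_{kl}=\eta_1(\alpha_k,\alpha_l,\rho_{kl})$, and on the diagonal $K_{kk}=\eta_2(\alpha_k)$.

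The structural fact I would first establish is that, because $\sigma$ is odd, when $\rho=0$ the Gaussians $g_1,g_2$ become independent and $\eta_1(\alpha_1,\alpha_2,0) = \bbE[\sigma(\sqrt{\alpha_1}z)]\bbE[\sigma(\sqrt{\alpha_2}z)] = 0$ identically in $(\alpha_1,\alpha_2)$. Consequently $\eta_1(\alpha_1,\alpha_2,\rho) = \rho\cdot g(\alpha_1,\alpha_2,\rho)$ for some smooth $g$. Price's theorem yields $g(1,1,0) = \partial_\rho \eta_1|_{(1,1,0)} = \bbE[\sigma'(z_1)]\bbE[\sigma'(z_2)] = \rho_1^2$, and a second application (using that $\sigma''$ is still odd) gives $\partial_\rho g(1,1,0) = \tfrac12 \partial_\rho^2 \eta_1|_{(1,1,0)} = 0$. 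Taylor-expanding $g$ around $(1,1,0)$, and using the bounded derivatives of $\sigma$ to control the third derivatives of $\eta_1$ uniformly on a $\mu$-neighborhood, gives, for $k\neq l$, the expansion $K_{kl} - \rho_1^2\rho_{kl} = A\rho_{kl}(\tilde{\mu}_k+\tilde{\mu}_l) + O(\mu^3)$, where $A = \partial_{\alpha_1} g(1,1,0)$ and symmetry of $\eta_1$ in its first two arguments forces the coefficients of $\tilde{\mu}_k$ and $\tilde{\mu}_l$ to be equal. A simpler Taylor expansion of $\eta_2$ at $\alpha=1$, together with the identity $\eta_2(1)=\bbE[\sigma^2(z)]=\rho_1^2+\rho_2^2$, shows $|K_{kk}-M_{kk}|\leq c\mu$ (note that $\eta_2'(1)$ need not equal $\rho_1^2$, so this linear-in-$\mu$ error is genuine).

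Write $\bK-\bM = \bD + \bE^{(1)} + \bE^{(2)}$, where $\bD$ is diagonal with $|D_{kk}|\leq c\mu$, $E^{(1)}_{kl} = A\rho_{kl}(\tilde{\mu}_k+\tilde{\mu}_l)\mathbf{1}[k\neq l]$, and $\bE^{(2)}$ collects the $O(\mu^3)$ off-diagonal remainders. Trivially $\|\bD\|_{\op}\leq c\mu$. Next, factor $\bE^{(1)} = A(\mathrm{diag}(\tilde{\bmu})\bR + \bR\,\mathrm{diag}(\tilde{\bmu}))$ where $\bR_{kl}=\rho_{kl}\mathbf{1}[k\neq l]$; since $\bR$ differs from $d^{-1}\bX^T\bX$ only on the diagonal, whose entries are $O(1+\mu)$, one has $\|\bR\|_{\op}\leq \|\bX\|_{\op}^2/d + 1+\mu$, yielding $\|\bE^{(1)}\|_{\op}\leq c\mu(1+\|\bX\|_{\op}^2/d)$. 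Finally, bounding the operator norm of $\bE^{(2)}$ by its Frobenius norm, and using that each of its at-most-$n^2$ entries has magnitude $O(\mu^3)$, gives $\|\bE^{(2)}\|_{\op}\leq c n\mu^3$. Summing the three contributions produces the claimed bound.

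The main obstacle is controlling the Taylor remainders for $\eta_1$ uniformly over the $\mu$-neighborhood of $(1,1,0)$ with constants independent of $n,d$. This reduces to bounding the third derivatives of $\eta_1$ at all points in that neighborhood, which follows by differentiating under the Gaussian expectation and invoking Assumption A3's bound on the derivatives of $\sigma$, combined with standard Gaussian moment estimates. A minor bookkeeping point is verifying that all cross terms with mixed $\rho_{kl},\tilde{\mu}$ dependence of total order at least two in $\mu$ go into the $\bE^{(2)}$ bucket and do not contaminate $\bE^{(1)}$, so that the $\|\bX\|_{\op}^2/d$ factor appears only once, multiplied by $\mu$ and not $\mu^2$.
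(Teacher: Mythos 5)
Your proposal is correct and follows essentially the same route as the paper's proof: Taylor-expand $\eta_1$ and $\eta_2$ around $(1,1,0)$ and $1$, use oddness of $\sigma$ (via $\eta_1(\cdot,\cdot,0)\equiv 0$ and the vanishing of $\bbE[\sigma''(z)]$) to kill the zeroth-, the pure-$\alpha$, and the $\rho^2$ terms, and then split the error into a diagonal $O(\mu)$ part, a second-order cross term controlled by $\mu\norm{\bX}_{\op}^2/d$, and an entrywise $O(\mu^3)$ remainder bounded by $n\mu^3$ via the Frobenius norm. Your treatment is, if anything, slightly more explicit than the paper's (the factorization $\eta_1=\rho\, g$ and the symmetric factoring of $\bE^{(1)}$), but it is the same argument.
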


\begin{proof}
    Note that $K_{ij} = \bbE_{\bw}[\sigma(\bw^T\bx_i)\sigma(\bw^T\bx_j)]$. For $i\neq j$, we have that $K_{ij} = \eta_1\left(\frac{\norm{\bx_i}^2}{d}, \frac{\norm{\bx_j}^2}{d}, \frac{\bx_i^T\bx_j}{d} \right)$ where $\eta_1$ and for $i= j$, we have that $K_{ii} = \eta_2\left(\frac{\norm{\bx_i}^2}{d}\right)$. Where $\eta_1$ and $\eta_2$ are defined in assumption A4. Note that by oddness of the activation function
    \begin{eqnarray}
        \eta_1(1,1,0) = 0, \qquad \eta_2(1) = \bbE\sigma^2(g)\\
        \nabla\eta_1(1,1,0) = (0, 0, \bbE[g\sigma(g)]^2),
    \end{eqnarray}
    where $g$ is a standard normal. We also note that the hessian of $\eta_1$
    \begin{eqnarray}
        H_{\eta_1}(1,1, 0) = \begin{bmatrix}
            0 & 0 & -\bbE[g\sigma(g)]^2\\
            0 & 0 & 0 \\
            -\bbE[g\sigma(g)]^2 & 0 & 0
        \end{bmatrix}.
    \end{eqnarray}
    Then, by the mean value theorem and assumption A4 we have that
    \begin{eqnarray}
        |K_{ij} - K_{ij}'| \leq \begin{cases}
        c\mu^3 & i\neq j\\
        c\mu & i=j
        \end{cases},
    \end{eqnarray}
    where 
    \begin{eqnarray}
        K'_{ij} = \begin{cases}
            \frac{\bx_i^T\bx_j}{d}[\bbE[g\sigma(g)]]^2\left(1 - \frac{\norm{\bx_i}^2}{d}  \right) & i\neq j\\
            \bbE[\sigma^2(g)] & i = j
        \end{cases}.
    \end{eqnarray}

    From this we conclude that
    \begin{eqnarray}
        \norm{\bK - \bK'}_{\op} \leq c(\mu^3n + \mu).
    \end{eqnarray}
    It can also straightforwardly be checked that $\norm{\bK' - \left(\frac{\rho_1^2}{d}\bX^T\bX + \rho_2^2\bI \right)}_{\op} \leq c\mu\frac{\norm{\bX}_{op}^2}{d}$, from which the desired result can be obtained.
\end{proof}

The second intermediate result is shown in the following theorem.

\begin{theorem}\label{thm:OrthogonalityPreservation}
    Suppose that $\sigma$ is odd with bounded derivatives and assumption A4 holds. Moreover, the set $\lbrace \bx_i \in\bbR^d \rbrace_{i=1}^n$ satisfies:
    \begin{eqnarray}
        \sup_{i, j}\left|\frac{\bx_i^T\bx_j}{d} -\delta_{ij}  \right| \leq \frac{\polylog n}{\sqrt{n}}.
    \end{eqnarray}
    Define $\bz_i = \sigma(\bW\bx_i)$ where $\bW\in\bbR^{p\times d}$ has independent row distributed by $\calN(0, \frac{1}{d}\bI)$. Then, with a probability higher than $1-n^{-10}$ it holds that\footnote{The exponent is arbitrary and can be replaced by any other number}:
    \begin{eqnarray}
        \sup_{i, j}\left|\frac{\bz_i^T\bz_j}{p}-\delta_{ij}\right|\leq \frac{\polylog n}{\sqrt{n}}.
    \end{eqnarray}
\end{theorem}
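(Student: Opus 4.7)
My plan is to treat each entry $\frac{1}{p}\bz_i^T\bz_j$ as an empirical average of $p$ independent, identically distributed random variables---one for each weight vector $\bw_k$---decompose it into a mean piece plus a fluctuation piece, bound each separately, and finally take a union bound over the $n^2$ pairs $(i,j)$. Concretely, write
\begin{eqnarray}
\frac{\bz_i^T\bz_j}{p} \;=\; \frac{1}{p}\sum_{k=1}^p \sigma(\bw_k^T\bx_i)\,\sigma(\bw_k^T\bx_j),
\end{eqnarray}
so that $\bbE[\bz_i^T\bz_j/p]$ coincides with the covariance entry $K_{ij}$ studied in Theorem~\ref{thm:ReplacemetnOfCovarianceMatrix}.

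For the \emph{mean} piece, I would use the functions $\eta_1,\eta_2$ from assumption A3. For $i\neq j$, $\bbE[\bz_i^T\bz_j/p] = \eta_1(\|\bx_i\|^2/d,\|\bx_j\|^2/d,\bx_i^T\bx_j/d)$; by oddness of $\sigma$ we have $\eta_1(1,1,0)=0$, and a Taylor expansion about $(1,1,0)$ together with the regularity hypothesis $|\bx_i^T\bx_j/d-\delta_{ij}|\le \polylog n/\sqrt n$ and the third-derivative bound in A3 yields $|\bbE[\bz_i^T\bz_j/p]|\le C\,\polylog^2 n/n$. For $i=j$, $\bbE[\bz_i^T\bz_i/p]=\eta_2(\|\bx_i\|^2/d)$, which is within $C\,\polylog n/\sqrt n$ of $\eta_2(1)=\bbE[\sigma^2(z)]$; under the implicit normalization $\bbE[\sigma^2(z)]=1$ (consistent with the recursion $\alpha_l^2=\bbE[\sigma^2(\alpha_{l-1}z)]$ appearing in the Gaussian equivalent model), this matches $\delta_{ii}=1$.

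For the \emph{fluctuation} piece, I would exploit the fact that $\sigma(\bw_k^T\bx_i)$ is a Lipschitz function of the standard Gaussian $\sqrt{d}\,\bw_k$ with Lipschitz constant $\|\sigma'\|_\infty\|\bx_i\|/\sqrt d = O(1)$ by regularity; standard Gaussian concentration (essentially Lemma~\ref{lem:SubgaussianityofRF}) makes each factor $O(1)$-sub-Gaussian, and hence each summand $S_k=\sigma(\bw_k^T\bx_i)\sigma(\bw_k^T\bx_j)$ is $O(1)$-sub-exponential. Bernstein's inequality for i.i.d.\ sub-exponentials then gives
\begin{eqnarray}
\Pr\!\left[\,\left|\frac{1}{p}\sum_{k=1}^p (S_k-\bbE S_k)\right|>t\,\right] \;\le\; 2\exp\bigl(-c\,p\min(t^2,t)\bigr);
\end{eqnarray}
taking $t = C_0(\log n)/\sqrt{n}$ with $C_0$ large and using $p\sim n$ bounds the right-hand side by $n^{-30}$, so a union bound over the $n^2$ index pairs still leaves failure probability $\le n^{-10}$. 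Combining with the mean bound gives the claim.

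The main obstacle is careful bookkeeping rather than any isolated difficulty: I have to make sure the Lipschitz/sub-Gaussian parameters, the Taylor remainders, and the Bernstein constants can all be made uniform in $i,j$ and absorbed into a single $\polylog n /\sqrt{n}$ envelope, using only the operator-norm and inner-product bounds provided by regularity of $\{\bx_i\}$. The other small subtlety is the normalization $\bbE[\sigma^2(z)]=1$ implicit in the literal $\delta_{ij}$ on the right-hand side; if this is replaced by $\eta_2(1)\delta_{ij}$ (a mere rescaling of $\bz_i$), the same argument goes through unchanged.
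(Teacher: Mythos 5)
Your proposal is correct and follows essentially the same route as the paper's proof: the same decomposition of $\frac{1}{p}\bz_i^T\bz_j$ into an empirical average of i.i.d.\ sub-exponential summands, concentration via a Bernstein-type bound with $t \asymp \log n$ followed by a union bound over the $n^2$ pairs, and control of the mean term through the differentiability of $\eta_1,\eta_2$ at $(1,1,0)$ together with oddness of $\sigma$. Your closing remark about the implicit normalization $\bbE[\sigma^2(z)]=1$ is a fair observation --- the paper's proof also silently relies on $\eta_2(1)$ matching the $\delta_{ii}=1$ target --- but it does not affect the validity of the argument, only the scaling convention.
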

\begin{proof}
    We note that
    \begin{eqnarray}
        \frac{1}{p}\bz_i^T\bz_j = \frac{1}{p}\sum_{r}\sigma(\bw_r^T\bx_i)\sigma(\bw_r^T\bx_j),
    \end{eqnarray}
    and by the assumptions $\sigma(\bw_r^T\bx_i)\sigma(\bw_r^T\bx_j)$ are i.i.d and sub-exponential. Hence, there exists a constant $c$ such that for every $t = o(\sqrt{n})$:
    \begin{eqnarray}
        \Pr\left[\left|\frac{1}{p}\bz_i^T\bz_j - \bbE_{\bw}[\sigma(\bw^T\bx_i)\sigma(\bw^T\bx_j)] \right| > \frac{t}{\sqrt{n}} \right] \leq 2e^{-ct}.
    \end{eqnarray}
    In particular, we may take $t = c \log n$ for a sufficiently large c, which by the union bound leads to
    \begin{eqnarray}
        \sup_{i, j}\left|\frac{1}{p}\bz_i^T\bz_j - \bbE_{\bw}[\sigma(\bw^T\bx_i)\sigma(\bw^T\bx_j)]  \right| < \frac{c\log n}{\sqrt{n}}
    \end{eqnarray}
    with the desired probability. 
    On the other hand $\bbE_{\bw}[\sigma(\bw^T\bx_i)\sigma(\bw^T\bx_j)]$ equals either
    $\eta_1\left(\frac{\norm{\bx_i}^2}{d}, \frac{\norm{\bx_j}^2}{d}, \frac{\bx_i^T\bx_j}{d} \right)$ for $i\neq j$ or $\eta_2\left( \frac{\norm{\bx_i}^2}{d}\right)$ for $i = j$. Then by assumption A4 the result holds.
\end{proof}

\subsection{Proof of theorem \ref{thm:PreservationOfRegularity}}
Two prove the theorem we need to show two proprieties. Firstly,
\begin{eqnarray}
    \max_{i, j}\left|\frac{\bz_i^T\bz_j}{p} - \delta_{ij}\right| \leq \frac{\polylog n}{\sqrt{n}}.
\end{eqnarray}
This has been shown by theorem \ref{thm:OrthogonalityPreservation}. Next, we need to show that
\begin{eqnarray}
    \norm{\bZ} \leq c\sqrt{n}.
\end{eqnarray}
For this we note that the rows of $\bZ$ are independent. Moreover, by lemma \ref{lem:SubgaussianityofRF} and the assumptions, each row is $c-$sub-Gaussian for a constant $c$. Finally, by theorem \ref{thm:ReplacemetnOfCovarianceMatrix}, we have that
\begin{eqnarray}
    \norm{\bK}_{\op} \leq \frac{\polylog n}{\sqrt{n}} + \norm{\frac{\rho_1^2}{d}\bX\bX^T + \rho_2^2\bI}_{\op} \leq c.
\end{eqnarray}
Then by lemma \ref{lem:NormOfRandomMatrix} the result follows.

\subsection{Proof of theorem \ref{thm:DeepRFUniversality}}
The proof is by induction. For $l=0$, the claim is trivially holds. For a given $l$, note that $\bx_i^{(l)} = \phi(\bx_i^{(l-1)},\bW^{(l)})$. Furthermore, $\lbrace \bx_i^{(l-1)} \rbrace_{i=1}^n$ is regular with a probability higher than $1-n^{-10}$ and hence by lemma \ref{lem:SubgaussianityofRF}, each row of $\Tilde{\bX}^{(l)} = [\Tilde{\bx}_i\ \Tilde{\bx}_2\ \cdots \Tilde{\bx}_n]$ is $c-$sub-Gaussian. Moreover, by theorem \ref{thm:ReplacemetnOfCovarianceMatrix}, we have
\begin{eqnarray}
    \norm{\bK}_{\op} \leq \frac{\polylog n}{\sqrt{n}} + \norm{\frac{\rho_1^2}{d}\bX^T\bX + \rho_2^2\bI}_{\op} \leq c
\end{eqnarray}
and hence by assumption the first condition for Theorem \ref{thm:GenericUniversality} holds true with a probability higher than $1-n^{-10}$. As a result, defining
\begin{eqnarray}
    \Tilde{\bg}_i^{(l)'} = \begin{bmatrix}
        \rho_1\bW^{(l)}\bx_i^{(l-1)} + \rho_2\bh_i^{(l)}\\
        \bv_i^{(l)}
    \end{bmatrix},
\end{eqnarray}
then theorem \ref{thm:GenericUniversality} holds for $\lbrace \Tilde{\bx}_i^{(l)} \rbrace_{i=1}^n$ and $\lbrace \Tilde{\bg}_i^{(l)'} \rbrace_{i=1}^n$. Denoting the optimal value and the optimal point for the latter by $L', \hat{\btheta'}$, we note that
\begin{eqnarray}
    \left|\bbE_{\bW^{(l)}}[\psi(L(\bF))] - \bbE_{\bW^{(l)}}[\psi(L')] \right| \leq \frac{\polylog n}{\sqrt{n}}
\end{eqnarray}
with probability $1-n^{-10}$. Note that $L(\bF)$ and $L'$ are bounded, hence:
\begin{eqnarray}
    \left|\bbE[\psi(L(\bF))] - \bbE[\psi(L')] \right| \leq \frac{\polylog n}{\sqrt{n}},
\end{eqnarray}
Where $\bbE[\psi(L')] = \bbE[\bbE[\psi(L;)|\bW^{(l)}]]$. On the other hand,
\begin{eqnarray}
    \bD\Tilde{\bg}^{(l)'}_i = \underbrace{\bD\begin{bmatrix}
        \rho_1\bW^{(l)} & \rho_2\bI &\\
        0 & 0 & \bI
    \end{bmatrix}}_{\bD'}
    \begin{bmatrix}
        \bx_i^{(l-1)}\\
        \bh_i^{(l)}\\
        \bv_i^{(l)}
    \end{bmatrix}.
\end{eqnarray}

Now we observe that with probability higher than $1-e^{-cn}$ it holds that $\norm{\bD'}\leq C$ and hence we may invoke the induction hypothesis for layer $l-1$ with $\bD'$ and $\bv_i^{(l-1)} = [\bh_i^{(l)}\ \bv_i^{(l)}]$ to conclude that
\begin{eqnarray}
    \left| \bbE[\psi(L(\bG))|\bW^{(l)}] - \bbE[\psi(L')|\bW^{(l)}]  \right| \leq \frac{\polylog n}{\sqrt{n}},
\end{eqnarray}
with a probability higher than $1-e^{-cn}$. Again using the fact that the optimal value is bounded, we conclude that
\begin{eqnarray}
    \left| \bbE[\psi(L(\bG))] - \bbE[\psi(L')]  \right| \leq \frac{\polylog n}{\sqrt{n}}.
\end{eqnarray}
Which concludes the claim for part 1. Part 2 is proven exactly by the same argument.

\section{Proof of Theorem \ref{thm:CGMTsquareloss}}\label{app:sec:CGMTproof}
To prove Theorem \ref{thm:CGMTsquareloss}, our goal is to make use of the CGMT (theorem~\ref{app:thm:CGMT}) to obtain an alternative optimization problem to \eqref{eq:CGTMP2}. Upon simplification we note that this problem relies entirely upon $\bR^{(L)}$ and note that is can once again be expressed as another CGMT style optimization. Applying the CGMT again results in a problem dependent upon $\bR^{(L-1)}$. Repeating the processes iteratively eventually results in the alternative optimization problem given in \eqref{eq:CGTMP3}. We adopt the same process for a recursive CGMT solution as in \citep{Bosch2022DoubleDescent}, and follow the direction of their proof.

To begin this processes we first recall the definition of problem $P_2$ given in \eqref{eq:CGTMP2}. We fix $\bW^{(1)}, \ldots, \bW^{(L)}$ and make a change of variables. Recalling the definition of $\by$, given in \eqref{eq:ytildemodel}, we introduce the error vector $\be = \btheta - \btheta^*$:
\begin{eqnarray}
    P_2 = \min_{\be\in\bbR^{p_L}} \frac{1}{2n}\norm{\bnu - \frac{1}{\sqrt{p_L}}\Tilde{\bX}^{(L)}\be}_2^2 + R(\be + \btheta^*).
\end{eqnarray}
We now recall that the rows $\Tilde{\bx}_i^{(L)}$ of $\Tilde{\bX}^{(L)}$ are i.i.d normally distributed with covariance $\bR^{(L)}$. As such we can express $\Tilde{\bX}^{(L)} = \bU^{(L)}\left(\bR^{(L)}\right)^{1/2}$ where $\bU^{(L)}\in\bbR^{n\times p_L}$ and has i.i.d normal Guassian entries and $\bR^{(L)}$ is given by
\begin{eqnarray}\label{eq:LlayerRmatrixDefinition}
    \bR^{(0)} = \bI \qquad \bR^{(l)} = \rho_{1,l}^2\bW^{(l)}\bR^{(l-1)}\bW^{(l)T} + \rho_{2,l}^2\bI \quad 1\leq l\leq L.
\end{eqnarray}
For the sake of notational simplicity we will express $\left(\bR^{(L)}\right)^{1/2}$ as $\bR^{(L)/2}$ when there is no chance of confusion. 

Next we make use of the Legendre transform of the $2$-norm. We obtain
\begin{eqnarray}
    P_2 = \min_{\be\in\bbR^{{p_L}}}\max_{\blambda\in\bbR^{n}} \frac{1}{n}\blambda^T\bnu - \frac{1}{n\sqrt{p_L}}\blambda^T\bU^{(L)}\bR^{(L)/2}\be - \frac{1}{2n}\norm{\blambda}_2^2 + R(\be + \btheta^*).
\end{eqnarray}

We note that the problem is now in the correct form to apply the CGMT. However, the CGMT requires that the optimizations over $\be$ and $\blambda$ are over compact and convex sets. In the subsequent lemmas we show that we can restrict the problem to compact and convex subsets of $\bbR^{p_L}$ and $\bbR^{n}$.

Firstly, we show that $\bR^{(l)}$ for all $0\leq l\leq L$ can be bounded above by a constant in operator norm with high probability.

\begin{lemma}\label{lem:BoundingRL}
Let $\bR^{(l)}$ be defined as in \eqref{eq:LlayerRmatrixDefinition}, then for each $0\leq l \leq L$ there exists a constant $C_{\bR^{(l)}}$ such that
\begin{eqnarray}
    \Pr\left(\norm{\bR^{(l)}}_2 < C_{\bR^{(l)}}\right) \geq 1 - \sum_{j=1}^{l}2e^{-cp_l}.
\end{eqnarray}
For some universal constant $c > 0$. By $\norm{\cdot}_2$ we mean the spectral norm.
\end{lemma}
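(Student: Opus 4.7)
The plan is to proceed by induction on the layer index $l$, using the submultiplicativity of the operator norm together with a standard concentration estimate for the norm of a Gaussian random matrix.

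\textbf{Base case.} For $l=0$ we have $\bR^{(0)} = \bI$, so $\|\bR^{(0)}\|_2 = 1$ deterministically; we set $C_{\bR^{(0)}} = 1$ and the claimed probability bound (an empty sum) holds trivially.

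\textbf{Inductive step.} Suppose $\|\bR^{(l-1)}\|_2 \leq C_{\bR^{(l-1)}}$ holds on an event $\calA_{l-1}$ of probability at least $1 - \sum_{j=1}^{l-1} 2e^{-c p_j}$. From the recursion
\begin{equation*}
    \bR^{(l)} = \rho_{1,l}^2 \bW^{(l)} \bR^{(l-1)} \bW^{(l)T} + \rho_{2,l}^2 \bI,
\end{equation*}
submultiplicativity and the triangle inequality give $\|\bR^{(l)}\|_2 \leq \rho_{1,l}^2 \|\bW^{(l)}\|_2^2 \|\bR^{(l-1)}\|_2 + \rho_{2,l}^2$. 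Note that by Assumption A3 (bounded derivatives of $\sigma$, and the recursive definition of $\alpha_l$) the constants $\rho_{1,l}, \rho_{2,l}, \alpha_l$ are uniformly bounded in $l$, so it suffices to bound $\|\bW^{(l)}\|_2$.

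Since $\bW^{(l)} \in \bbR^{p_l \times p_{l-1}}$ has i.i.d.\ rows distributed as $\calN(0, \tfrac{1}{p_{l-1}} \bI)$, a standard Gaussian matrix concentration bound (this is exactly the content of Lemma \ref{lem:NormOfRandomMatrix} applied with $\tau = 1$, or equivalently the Davidson–Szarek estimate) gives constants $\tilde c, \kappa > 0$ such that
\begin{equation*}
    \Pr\!\left[\|\bW^{(l)}\|_2 > \tilde c \right] \leq 2 e^{-\kappa p_l},
\end{equation*}
where we have used the ratio assumption A4 ($p_l \sim p_{l-1}$) so that $\frac{1}{\sqrt{p_{l-1}}}(\sqrt{p_l} + \sqrt{p_{l-1}})$ is uniformly bounded. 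Call this event $\calB_l$ and set $c = \min_{1 \leq j \leq L} \kappa$ (a positive constant since $L$ is fixed and the $\rho_{i,j}$ and dimension ratios are all constants). On $\calA_{l-1} \cap \calB_l$ we obtain
\begin{equation*}
    \|\bR^{(l)}\|_2 \leq \rho_{1,l}^2 \tilde c^{\,2} C_{\bR^{(l-1)}} + \rho_{2,l}^2 =: C_{\bR^{(l)}},
\end{equation*}
and a union bound yields
\begin{equation*}
    \Pr\!\left[\|\bR^{(l)}\|_2 \leq C_{\bR^{(l)}}\right] \geq 1 - \sum_{j=1}^{l-1} 2e^{-c p_j} - 2 e^{-c p_l} = 1 - \sum_{j=1}^{l} 2 e^{-c p_j},
\end{equation*}
closing the induction.

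\textbf{Main obstacle.} There is no serious obstacle: the argument is a routine induction combining submultiplicativity with Gaussian concentration for $\|\bW^{(l)}\|_2$. The only small point of care is bookkeeping — propagating the failure probabilities through the layers by a union bound and verifying that the constants $C_{\bR^{(l)}}$ remain finite, which is guaranteed because $L$ is fixed, the dimension ratios are bounded by A4, and the constants $\rho_{1,l},\rho_{2,l}$ are bounded by A3.
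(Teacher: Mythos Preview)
Your proposal is correct and follows essentially the same approach as the paper: induction on $l$, with the base case $\bR^{(0)}=\bI$, the inductive step via submultiplicativity and the triangle inequality applied to the recursion, Gaussian concentration for $\|\bW^{(l)}\|_2$, and a union bound to accumulate the failure probabilities across layers. The only cosmetic difference is that the paper invokes the Davidson--Szarek bound directly with the specific choice $t=\sqrt{p_l/p_{l-1}}$, whereas you cite Lemma~\ref{lem:NormOfRandomMatrix}; both yield the same conclusion.
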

\begin{proof}
    The proof is by induction. For $\bR^{(0)} = \bI$ it is clear that $\norm{\bR^{(0)}}_2 = 1$. Now assume that the following event holds
    \begin{eqnarray}\label{app:eq:Rboundevent}
        \left\lbrace\norm{\bR^{(l-1)}}_2 \leq C_{\bR^{(l-1)}}\right\rbrace,
    \end{eqnarray}
    then by the definition of $\bR^{(l)}$ we have that
    \begin{eqnarray}
        \norm{\bR^{(l)}}_2 = \norm{\rho_{1,l}^2\bW^{(l)}\bR^{(l-1)}\bW^{(l)T} + \rho_{2,l}^2\bI} \leq \rho_{1,l}^2\norm{\bW^{(l)}}^2_2\norm{\bR^{(l-1)}}_2 + \rho_{2,l}^2 \nwl \leq \rho_{1,l}^2C_{\bR^{(l-1)}}\norm{\bW^{(l)}}^2_2 + \rho_{2,l}^2
    \end{eqnarray}
    Now we recall that the elements of $\bW^{(l)}$ are i.i.d normally distributed with variance $\frac{1}{p_{l-1}}$. Standard results from Random matrix theory (see for example \citep{papaspiliopoulos2020high}[corollary 7.3.3]) demonstrate that
    \begin{eqnarray}
        \Pr\left(\norm{\bW^{(l)}}_2 \geq 1 + \sqrt{p_{l}/p_{l-1}} + t \right) \leq 2e^{cp_{l-1}t^2}.
    \end{eqnarray}
    We choose $t = \sqrt{p_l/p_{l-1}}$ from which we obtain
    \begin{eqnarray}
        \Pr\left(\norm{\bW^{(l)}}_2 \geq 1 + 2\sqrt{p_{l}/p_{l-1}}\right) \leq 2e^{cp_l}.
    \end{eqnarray}
    As such we can choose
    \begin{eqnarray}
        \bC_{\bR^{(l)}} = \rho_{1,l}^2C_{\bR^{(l-1)}}(1+ 2\sqrt{p_l/p_{l-1}})^2 + \rho_{2,l}^2
    \end{eqnarray}
    Now we note that the probability of the event \eqref{app:eq:Rboundevent} hols true with probability
    \begin{eqnarray}
        \Pr\left(\norm{\bW^{(1)}}_2 < 1 + 2\sqrt{p_1/p_0}, \cdots, \norm{\bW^{(l-1)}}_2 < 1+2\sqrt{p_{l-1}/p_{l-2}}  \right) \geq 1 - \sum_{j}^{l-1}2e^{cp_{j}}
    \end{eqnarray}
    where we have made use of the union bound. As such we can say that with high probability $\norm{\bR^{(l)}}_2$ is bounded.
\end{proof}

Next, we show that the optimizations over $\be$ and $\blambda$ can be restricted to compact sets

\begin{lemma}\label{app:CGMT:compactsetlemma1}
    Consider the following two optimization problems, which correspond to the problem $P_2$ and the alternative problem after applying the CGMT:
    \begin{eqnarray}\label{app:CGMT:lemFirstBounds:eq:P21}
        P_{2,1} = \min_{\be\in\bbR^{{p_L}}}\max_{\blambda\in\bbR^{n}} \frac{1}{n}\blambda^T\bu - \frac{1}{n\sqrt{p_L}}\blambda^T\bU^{(L)}\bR^{(L)/2}\be - \frac{1}{2n}\norm{\blambda}_2^2 + R(\be + \btheta^*),\\
        \label{app:CGMT:lemFirstBounds:eq:P22}
        P_{2, 2} = \min_{\be\in\bbR^{{p_L}}}\max_{\blambda\in\bbR^{n}} \frac{1}{n}\blambda^T\bu - \frac{1}{n\sqrt{p_L}}\norm{\blambda}_2\bg^T\bR^{(L)/2}\be - \frac{1}{n\sqrt{p_L}}\norm{\bR^{(L)/2}\be}_2\bh^T\blambda \nwl - \frac{1}{2n}\norm{\blambda}_2^2 + R(\be + \btheta^*).
    \end{eqnarray}
    where $\bg\in\bbR^{p_L},\bh\in\bbR^n$ are standard normal vectors. We define $\hat{\be}_{1}$ and $\hat{\be}_2$ to be the optimal solutions of $P_{2,1}$ and $P_{2,2}$ respectively. Furthermore, let $\hat{\blambda}_1(\be), \hat{\blambda}_2(\be)$ be the optimal solutions of the inner optimization of $P_{2,1}$ and $P_{2,2}$ respectively as functions of $\be$. Let $R$ be $\mu$-strongly convex and let $\norm{\nabla R(\btheta^*)} = \mathcal{O}(\sqrt{p_L})$. Then there exist positive constants $C_{\be}$ and $C_{\blambda}$ that depend only on $\mu$ such that
    \begin{itemize}
        \item The solutions $\hat{\be}_1, \hat{\be}_2$ are
        \begin{eqnarray}
            \lim_{p_{L}\rightarrow \infty}\Pr\left(\max\lbrace \norm{\hat{\be}_1}, \norm{\hat{\be}}_2 \rbrace \leq C_{\be}\sqrt{p_L}\right) = 1
        \end{eqnarray}
        \item and
        \begin{eqnarray}
            \lim_{n\rightarrow \infty}\Pr\left(\sup_{\be: \norm{\be} \leq C_{\be}\sqrt{m}}\max\lbrace \norm{\hat{\blambda}_1}, \norm{\hat{\blambda}_2} \rbrace \leq C_{\blambda}\sqrt{n}\right) = 1
        \end{eqnarray}
    \end{itemize}
\end{lemma}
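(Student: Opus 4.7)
The plan is to handle the four norm bounds in two stages: first the error vectors $\hat\be_1,\hat\be_2$, then the dual vectors $\hat\blambda_1,\hat\blambda_2$. The common tool throughout is to compare the value of the (inner-maximized) objective at the optimum with its value at $\be=0$, and then invoke $\mu$-strong convexity of $R$ together with the scaling assumption on $\nabla R(\btheta^*)$. The key external inputs will be Lemma \ref{lem:BoundingRL} (to control $\norm{\bR^{(L)/2}}_{\op}$), the standard concentration $\norm{\bnu}=O(\sqrt n)$, the norm bound $\norm{\bU^{(L)}}_{\op}=O(\sqrt n+\sqrt{p_L})$ from Lemma \ref{lem:NormOfRandomMatrix}, and the Gaussian tail $\norm{\bh},\norm{\bg}=O(\sqrt n),O(\sqrt{p_L})$.

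\emph{Step 1 (bounding $\hat\be_1,\hat\be_2$).} I would carry out the inner maximization over $\blambda$ in both problems in closed form. For $P_{2,1}$ this yields the classical least-squares reduction
\begin{equation*}
 P_{2,1}=\min_{\be}\; \tfrac{1}{2n}\bigl\|\bnu-\tfrac{1}{\sqrt{p_L}}\bU^{(L)}\bR^{(L)/2}\be\bigr\|^2+R(\be+\btheta^*),
\end{equation*}
while for $P_{2,2}$ the max is over $\blambda=\alpha\bu$ with $\bu$ a unit vector, and the max over $\bu$ produces a positive part, so the value is always at least $R(\be+\btheta^*)$. In both cases the objective is $\ge R(\be+\btheta^*)$. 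Plugging $\be=0$ gives the upper bound $\tfrac{1}{2n}\norm{\bnu}^2+R(\btheta^*)$. Using $\mu$-strong convexity of $R$ around $\btheta^*$, $R(\btheta^*+\hat\be)\ge R(\btheta^*)+\nabla R(\btheta^*)^{T}\hat\be+\tfrac{\mu}{2}\norm{\hat\be}^2$, and comparing gives the quadratic inequality
\begin{equation*}
 \tfrac{\mu}{2}\norm{\hat\be}^2 \;\le\; \tfrac{1}{2n}\norm{\bnu}^2 + \norm{\nabla R(\btheta^*)}\,\norm{\hat\be}.
\end{equation*}
Since $\tfrac{1}{n}\norm{\bnu}^2\to\sigma_{\bnu}^2$ w.h.p.\ and $\norm{\nabla R(\btheta^*)}=\mathcal O(\sqrt{p_L})$ by hypothesis, solving this quadratic yields $\norm{\hat\be_i}\le C_{\be}\sqrt{p_L}$ with a constant $C_{\be}$ depending only on $\mu$ and on the universal bound on $\sigma_{\bnu}^2$.

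\emph{Step 2 (bounding $\hat\blambda_1,\hat\blambda_2$ uniformly for $\norm{\be}\le C_{\be}\sqrt{p_L}$).} For $P_{2,1}$ the inner maximum is attained at $\hat\blambda_1(\be)=\bnu-\tfrac{1}{\sqrt{p_L}}\bU^{(L)}\bR^{(L)/2}\be$; a triangle inequality plus Lemma \ref{lem:NormOfRandomMatrix} for $\bU^{(L)}$ and Lemma \ref{lem:BoundingRL} for $\bR^{(L)/2}$ gives $\norm{\hat\blambda_1}\le\norm{\bnu}+\tfrac{1}{\sqrt{p_L}}\norm{\bU^{(L)}}\,\sqrt{C_{\bR^{(L)}}}\,C_{\be}\sqrt{p_L}=\mathcal O(\sqrt n)$ uniformly in $\be$, using $n\sim p_L$. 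For $P_{2,2}$ I would write $\blambda=\alpha\bu$ with $\alpha\ge 0$, perform the trivial max over the unit vector $\bu$, and reduce the problem to
\begin{equation*}
 \max_{\alpha\ge 0}\;\alpha A(\be)-\tfrac{\alpha^2}{2n},\qquad A(\be)=\tfrac{1}{n}\bigl\|\bnu-\tfrac{\norm{\bR^{(L)/2}\be}}{\sqrt{p_L}}\bh\bigr\|-\tfrac{1}{n\sqrt{p_L}}\bg^{T}\bR^{(L)/2}\be,
\end{equation*}
whose optimizer is $\alpha=n\,[A(\be)]_+$. Bounding each term as above shows $|A(\be)|=\mathcal O(1/\sqrt n)$ uniformly over $\norm{\be}\le C_{\be}\sqrt{p_L}$, hence $\norm{\hat\blambda_2}=\alpha=\mathcal O(\sqrt n)$.

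\emph{Main obstacle.} The arithmetic in Step 1 is straightforward; the more delicate point is ensuring that the $\be$-bound in Step 2 is truly \emph{uniform} over the allowed ball $\norm{\be}\le C_{\be}\sqrt{p_L}$ rather than pointwise. This forces me to bound $\sup_{\norm{\be}\le C_{\be}\sqrt{p_L}}\tfrac{1}{n\sqrt{p_L}}|\bg^{T}\bR^{(L)/2}\be|$ and the analogous term involving $\bh$ by operator-norm arguments (Cauchy--Schwarz plus $\norm{\bR^{(L)/2}}_{\op}\le\sqrt{C_{\bR^{(L)}}}$ from Lemma \ref{lem:BoundingRL} and Gaussian concentration of $\norm{\bg},\norm{\bh}$), rather than by any Gaussian concentration at a single $\be$. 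Assembling these with the high-probability events of Lemmas \ref{lem:NormOfRandomMatrix} and \ref{lem:BoundingRL} via the union bound, and sending $n,p_L\to\infty$, yields the two probability-one statements claimed in the lemma.
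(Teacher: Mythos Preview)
Your proposal is correct and follows essentially the same route as the paper: lower-bound each inner-maximized objective by $R(\be+\btheta^*)$ via $\blambda=\mathbf 0$, upper-bound the optimal value by evaluating at $\be=\mathbf 0$, combine with $\mu$-strong convexity to get the quadratic inequality on $\norm{\hat\be_i}$, and then bound $\hat\blambda_i(\be)$ uniformly from the explicit inner optimality conditions together with Lemmas \ref{lem:NormOfRandomMatrix} and \ref{lem:BoundingRL}. The only cosmetic difference is that for $P_{2,2}$ the paper upper-bounds $F_2(\mathbf 0)$ by first dropping the constraint $\beta\ge 0$ and then solving the resulting quadratic, whereas you simply observe that at $\be=\mathbf 0$ all $\be$-dependent terms vanish and the inner max equals $\tfrac{1}{2n}\norm{\bnu}^2+R(\btheta^*)$ exactly---your shortcut is in fact cleaner.
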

\begin{proof}
    We recall that $R$ is $\mu$ strongly convex, and we let the function $B(\be) = R(\be + \btheta^*)$. Solving for $\blambda$ in both optimizations, we may expressed the resultant optimization over $\be$ as 
    \begin{eqnarray}
        \min_{\be}F_i(\be)\qquad i =1, 2.
    \end{eqnarray}
    Such that $F_i(\be)$ is the optimal value over the parameter $\blambda$. Next, we note if we set $\blambda = \bm{0}$, both optimizations yield $F_i(\be) \geq R(\be)$. Then we note that
    \begin{eqnarray}
        B(\be)  \geq B(\bm{0}) + \bd^T\be + \mu\norm{\be}_2^2,
    \end{eqnarray}
    from the strong convexity of $R$, where $\bd = \nabla B(\bm{0}) = \nabla R(\btheta^*)$. We note that by assumption $\norm{\bd} = \mathcal{O}(\sqrt{p_L})$. 

    For the first optimization $P_1$, we note that
    \begin{eqnarray}
        F(\bm{0}) = B(\bm{0}) + \frac{1}{2n}\norm{\bnu}_2^2.
    \end{eqnarray}
    From this we note that for the optimal solution $\hat{\be}$ we have
    \begin{eqnarray}\label{app:CGMT:lemFirstBounds:eqKeyIneq}
        B(\bm{0}) + \frac{1}{2n}\norm{\bnu}_2^2 = F(\bm{0}) \geq F(\hat{\be}_1) \geq R(\bm{0}) + \bd^T\hat{\be}_1 + \mu\norm{\hat{\be}}_2^2,
    \end{eqnarray}
    from which we obtain
    \begin{eqnarray}
        \mu\norm{\hat{\be}_1 + \frac{1}{\mu}\bd} \leq \frac{1}{2n}\norm{\bnu}_2^2 + \frac{1}{4\mu}\norm{\bd}_2^2.
    \end{eqnarray}
    As such
    \begin{eqnarray}
        \norm{\hat{\be}_1}_2 \leq \norm{\frac{1}{\mu}\bd}_2 + \sqrt{\frac{1}{2n\mu}\norm{\bnu}_2^2 + \frac{1}{\mu^2}\norm{\bd}_2^2}
    \end{eqnarray}
    
    We recall that from standard random matrix theory \citep{papaspiliopoulos2020high}[Theorem 2.8.1] we know that $\norm{\bnu}_2^2 \leq cn$ for some $n$ with high probability. We may therefore observe that exists a constant $C_{\be_1}$ such that
    \begin{eqnarray}
        \lim_{p_{L}\rightarrow \infty}\Pr(\norm{\hat{\be}_1}_2\geq C_{\be_1}\sqrt{p_{L}}) = 0.
    \end{eqnarray}

    We can now consider problem \eqref{app:CGMT:lemFirstBounds:eq:P21}. We make use of the same strategy in this case. We note that, when we let $\beta = \norm{\blambda}$, the optimization over $\blambda$ with fixed norm can be solved to obtain:
    \begin{eqnarray}
        F(\be) = \max_{\beta \geq 0} \frac{\beta}{n}\norm{\bnu - \frac{1}{\sqrt{p_L}}\norm{\bR^{(L)/2}\be}_2\bg} - \frac{\beta}{n\sqrt{m}}\bh^T\bR^{(L)/2}\be - \frac{\beta^2}{2nm} + B(\be).
    \end{eqnarray}

    We note that this optimization is constrained to the set $\beta \geq 0$, as such dropping the constraint can only increase the optimal value. Dropping the constrains results in a quadratic optimizations which may be solved. We obtain the following inequality
    \begin{eqnarray}
        F(\be) \leq B(\be) + \frac{1}{2n}\left(\norm{\bnu - \frac{1}{\sqrt{p_L}}\norm{\bR^{(L)/2}\be}_2\bg}_2 - \frac{\beta}{\sqrt{m}}\bh^T\bR^{(L)/2}\be \right)^2,
    \end{eqnarray}
    and in particular
    \begin{eqnarray}
    F(\bm{0}) \leq B(\bm{0}) + \frac{1}{2n}\norm{\be}_2^2.
    \end{eqnarray}

    Now making use of the same inequality as in equation~\eqref{app:CGMT:lemFirstBounds:eqKeyIneq} from which we may find that
    \begin{eqnarray}
        \norm{\hat{\be}_2}_2 \leq \norm{\frac{1}{\mu}\bd}_2 + \sqrt{\frac{1}{2n\mu}\norm{\bnu}_2^2 + \frac{1}{\mu^2}\norm{\bd}_2^2}.
    \end{eqnarray}
    As such we can demonstrate that
    \begin{eqnarray}
        \lim_{p_{L}\rightarrow \infty}\Pr(\norm{\hat{\be}_2}_2\geq C_{\be_2}\sqrt{p_L}) = 0.
    \end{eqnarray}
    We let $C_{\be} = \max(C_{\be_1}, C_{\be_2})$, and we make use of this constant to define $A_{\be} = \lbrace \be \in \bbR^{p_L}|\ \norm{\be}_2\leq C_{\be}\sqrt{m} \rbrace$

    Making use of the optimiality condition of the inner optimization in equation~\eqref{app:CGMT:lemFirstBounds:eq:P21}, we see that
    \begin{eqnarray}
        \hat{\blambda}_1(\be) = \bnu - \frac{1}{\sqrt{m}}\bU\bR^{(L)/2}\be.
    \end{eqnarray}
    As such, for all $\be\in A_{\be}$
    \begin{eqnarray}
        \norm{\hat{\blambda}_1(\be)}_2 \leq \norm{\bnu}_2 + \norm{\frac{1}{\sqrt{m}}\bU\bR^{(L)/2}}_2\norm{\be}_2 \leq \norm{\bnu}_2 + \norm{\frac{1}{\sqrt{m }}\bU}_2\norm{\bR^{(L)/2}}_2\norm{\be}_2.
    \end{eqnarray}

    We can then note by lemma \ref{lem:BoundingRL} that $\norm{\bR^{(L)/2}}_2$ is bounded. Furthermore, by standard random matrix theory results we can conclude that $\norm{\frac{1}{\sqrt{m}}\bU}_2 < C$ for some constant $C$ with high probability. Then, using the same arguments as above, we can conclude that t here must exist a constant $C_{\blambda_1}$ such that for all $\be \in A_{\be}$:
    \begin{eqnarray}
        \lim_{n\rightarrow\infty}\Pr\left( \sup_{\be\in A_{\be}}\norm{\hat{\blambda}_1(\be)}_2 \geq C_{\blambda_1}\sqrt{n} \right) = 0
    \end{eqnarray}

    Finally, consider the optimality condition over $\beta$ of problem \ref{app:CGMT:lemFirstBounds:eq:P22} we see that for all $\be \in A_{\be}$ that
    \begin{eqnarray}
        \hat{\beta} = \norm{\hat{\blambda}_2(\be)}_2 = \norm{\bnu - \frac{1}{\sqrt{m}}\norm{\bR^{(L)/2}\be}_2\bg}_2 - \frac{1}{\sqrt{m}}\bR^{(L)/2}\bh\nwl
        \leq \norm{\bnu}_2 + \frac{1}{\sqrt{m}}\norm{\bg}_2\norm{\bR^(L)/2}_2\norm{\be}_2 + \frac{1}{\sqrt{m}}\norm{\bR^{(L)/2}}_2\norm{\bh}_2
    \end{eqnarray}
    With high probability we note that $\norm{\bnu}_2 < C\sqrt{n}, \norm{\bg}_2 < C\sqrt{n}$ and $\norm{\bh} < C\sqrt{p_L}$. As such we can find a constant $C_{\blambda_2}$ with
    \begin{eqnarray}
        \lim_{n\rightarrow \infty} \Pr\left( \sup_{\be \in A_{\be}}\norm{\hat{\blambda}_{2}(\be)}_2 \geq C_{\blambda_2}\sqrt{n}\right) = 0.
    \end{eqnarray}

    Choosing $C_{\blambda} = \max(C_{\blambda_1}, C_{\blambda_2})$, the proof is complete.
 \end{proof}

Making use of this lemma we can define the sets $S_1 = \lbrace\be|\ \norm{\be} \leq C_{\be}\sqrt{m} \rbrace$ and $S_2 = \lbrace \blambda|\ \norm{\blambda}\leq C_{\blambda}\sqrt{n} \rbrace$ and note that these sets are compact and convex. We can with high probability restrict ourselves to the problem 
\begin{eqnarray}
    P_{2}' = \min_{\be\in S_1}\max_{\blambda\in S_2} \frac{1}{n}\blambda^T\bnu - \frac{1}{n\sqrt{p_L}}\blambda^T\bU^{(L)}\bR^{(L)/2}\be - \frac{1}{2n}\norm{\blambda}_2^2 + R(\be + \btheta^*)
\end{eqnarray}
and note that the optimal value of $P_2'$ will be close that of $P_2$. We now statify the conditions for applying the CMGT. Applying it we obtain the following problem:
\begin{eqnarray}
    A_2 = \min_{\be\in S_1}\max_{\blambda\in S_2} \frac{1}{n}\blambda^T\bnu - \frac{1}{n\sqrt{p_L}}\norm{\blambda}_2\bg^T\bR^{(L)/2}\be - \frac{1}{n\sqrt{p_L}}\norm{\bR^{(L)/2}\be}_2\bh^T\blambda \nwl - \frac{1}{2n}\norm{\blambda}_2^2 + R(\be + \btheta^*).
\end{eqnarray}
Where $\bg\in\bbR^{p_L}, \bh\in\bbR^{n}$ have elements that are i.i.d standard normals. By theorem~\ref{app:thm:CGMT} we know that the optimal values of $A_2$ and $P_{2}'$ will be asymptotically equal if $A_2$ converges to a finite value. Next we let $\beta = \frac{1}{\sqrt{n}}\norm{\blambda}$. We note that $0\leq\beta\leq \beta_{max}$, where $\beta_{max} \in \bbR$ is some constant, whose value can be chosen arbitrarily larger than $C_{\blambda}$. We can now solve the optimization over the vector $\blambda$ fixing its length to $\beta$. We obtain
\begin{eqnarray}
    A_2 = \min_{\be \in S_1}\max_{0\leq \beta \leq \beta_{max}} \beta\norm{\frac{1}{\sqrt{n}}\bnu - \frac{1}{\sqrt{np_L}}\norm{\bR^{(L)/2}\be}\bh}_2 \nwl - \frac{\beta}{\sqrt{np_L}}\bg^T\bR^{(L)/2}\be - \frac{\beta^2}{2} + R(\be + \btheta^*).
\end{eqnarray}

Now we note that the first term in the $2-$norm concentrates as $n$ grows large. We prove this in the following lemma

\begin{lemma}\label{app:CGMT:lem:concentration1}
Let $A$ be given by
\begin{eqnarray}
    A(\be, \beta) = \beta\norm{\frac{1}{\sqrt{n}}\bnu - \frac{1}{\sqrt{np_L}}\norm{\bR^{(L)/2}\be}\bh} - \frac{\beta}{\sqrt{np_L}}\bg^T\bR^{(L)/2}\be - \frac{\beta^2}{2} + R(\be + \btheta^*).
\end{eqnarray}
Let $\Tilde{A}(\be, \beta)$ be given by
\begin{eqnarray}
\tilde{A}(\be, \beta) = \beta\sqrt{\sigma_{\bnu}^2 + \frac{1}{p_L}\be^T\bR^{(L)}\be} - \frac{\beta}{\sqrt{np_L}}\bg^T\bR^{(L)/2}\be - \frac{\beta^2}{2} + R(\be + \btheta^*).
\end{eqnarray}
Then, there exists positive constants $C, c$ such that for any $\epsilon > 0$:
\begin{eqnarray}
    \Pr\left(\sup_{\be\in S_1, 0\leq \beta\leq \beta_{max}} |A(\be, \beta) - \Tilde{A}(\be, \beta)| \geq \epsilon \right) \leq Ce^{-cn\epsilon}
\end{eqnarray}
\end{lemma}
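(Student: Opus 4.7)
The key observation is that the difference $A(\be,\beta) - \tilde A(\be,\beta)$ simplifies dramatically: all the $R$, $\bg^T\bR^{(L)/2}\be$, and $\beta^2/2$ terms cancel, leaving only
\begin{eqnarray}
    A(\be,\beta) - \tilde A(\be,\beta) = \beta\left[\frac{1}{\sqrt n}\norm{\bnu - \alpha(\be)\bh}_2 - \sqrt{\sigma_\bnu^2 + \alpha(\be)^2}\right],
\end{eqnarray}
where $\alpha(\be) := \frac{1}{\sqrt{p_L}}\norm{\bR^{(L)/2}\be}_2$ is a scalar. Hence the entire dependence on $\be$ is funnelled through this single scalar, and I will exploit this to reduce the uniform concentration over $\be\in S_1$ to a one-dimensional problem.

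First I would observe that $\alpha(\be)$ is uniformly bounded on $S_1$: by lemma~\ref{lem:BoundingRL}, $\norm{\bR^{(L)/2}}_{\op}\leq C$ with high probability, so $\alpha(\be) \leq \frac{C}{\sqrt{p_L}}\norm{\be}_2 \leq C\,C_{\be}$ for every $\be\in S_1$. Thus we only need to establish uniform concentration of
\begin{eqnarray}
    \varphi(\alpha) := \frac{1}{\sqrt n}\norm{\bnu - \alpha\bh}_2 - \sqrt{\sigma_\bnu^2 + \alpha^2}
\end{eqnarray}
over $\alpha\in[0,A_{\max}]$ for some constant $A_{\max}$, and multiply by $\beta\leq \beta_{\max}$ at the end.

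For fixed $\alpha$, note that $\bnu - \alpha\bh \sim \calN(0,(\sigma_\bnu^2+\alpha^2)\bI_n)$, so $\frac{1}{\sqrt n}\norm{\bnu-\alpha\bh}_2$ is the norm of a Gaussian vector. Writing $\bnu = \sigma_\bnu\tilde\bnu$ with $\tilde\bnu$ standard normal, the map $(\tilde\bnu,\bh)\mapsto \frac{1}{\sqrt n}\norm{\sigma_\bnu\tilde\bnu - \alpha\bh}_2$ is $\frac{\sqrt{\sigma_\bnu^2+\alpha^2}}{\sqrt n}$-Lipschitz, so by the Gaussian Lipschitz concentration inequality,
\begin{eqnarray}
    \Pr\left(|\varphi(\alpha)| > \epsilon\right) \leq 2\exp\!\left(-\frac{cn\epsilon^2}{\sigma_\bnu^2+A_{\max}^2}\right) + O(1/\sqrt n),
\end{eqnarray}
where the $O(1/\sqrt n)$ accounts for the gap between $\bbE\tfrac{1}{\sqrt n}\norm{\cdot}_2$ and $\sqrt{\sigma_\bnu^2+\alpha^2}$ (which is easily controlled by Jensen's inequality or direct chi-square moment computations).

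To lift this to a uniform bound over $\alpha\in[0,A_{\max}]$, I would use a standard $\epsilon$-net argument. The function $\varphi$ is Lipschitz in $\alpha$ with constant at most $\frac{1}{\sqrt n}\norm{\bh}_2 + 1$, and $\frac{1}{\sqrt n}\norm{\bh}_2\leq 2$ with probability at least $1-e^{-cn}$. Take an $\epsilon/3$-net $\calN_\epsilon$ of $[0,A_{\max}]$ of cardinality $O(A_{\max}/\epsilon)$; apply the pointwise bound at each point of $\calN_\epsilon$ together with the union bound; and interpolate via Lipschitz continuity. This gives
\begin{eqnarray}
    \Pr\left(\sup_{\alpha\in[0,A_{\max}]}|\varphi(\alpha)| > \epsilon\right) \leq \frac{CA_{\max}}{\epsilon}\exp(-cn\epsilon^2) + e^{-cn}.
\end{eqnarray}
Multiplying by $\beta\leq\beta_{\max}$ and absorbing constants yields the desired bound with exponential decay rate $cn\epsilon^2$ (which is stronger than, and therefore implies, the $Ce^{-cn\epsilon}$ stated). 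The main obstacle is not any single step but book-keeping: one must simultaneously work on the high-probability events that $\norm{\bR^{(L)/2}}_{\op}$, $\norm{\bh}_2/\sqrt n$, and the $\epsilon$-net concentration all behave well, and show that their intersection still has probability $1 - Ce^{-cn\epsilon}$ uniformly in $\epsilon$.
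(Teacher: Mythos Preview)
Your argument is correct and takes a genuinely different route from the paper's. The paper expands the squared norm inside the radical and writes it as the target $\sigma_{\bnu}^2 + \frac{1}{p_L}\be^T\bR^{(L)}\be$ plus an error term $\delta$ built from three explicit concentration quantities, namely $\frac{1}{n}\norm{\bnu}_2^2 - \sigma_{\bnu}^2$, $\frac{1}{n}\norm{\bh}_2^2 - 1$, and $\frac{1}{n}\bnu^T\bh$; it then bounds $\sup_{\be}|\delta|$ by replacing each $\be$-dependent coefficient with its worst-case constant $C_{\be},C_{\bR^{(L)}}$ and invokes sub-exponential concentration on the resulting $\bar\delta$. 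Your reduction to a one-parameter family through $\alpha(\be)$ is cleaner: the uniformity over $\be\in S_1$ becomes automatic once $\alpha$ is bounded, and Gaussian--Lipschitz concentration plus an $\epsilon$-net is a standard, reusable mechanism, whereas the paper's bound is more ad hoc (and its use of $\sqrt\delta$ is slightly informal since $\delta$ can be negative). Both routes ultimately deliver a tail of order $e^{-cn\epsilon^2}$ for small $\epsilon$.

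One correction: your closing remark that the rate $cn\epsilon^2$ is ``stronger than, and therefore implies, the $Ce^{-cn\epsilon}$ stated'' is backwards for $\epsilon<1$, since then $e^{-cn\epsilon^2} > e^{-cn\epsilon}$. In fact, if you trace the paper's own proof carefully (using either $|\sqrt{a+\delta}-\sqrt a|\le\sqrt{|\delta|}$ or $|\delta|/(2\sigma_{\bnu})$ together with Bernstein-type bounds on $\bar\delta$), it also produces $e^{-cn\epsilon^2}$ rather than the $e^{-cn\epsilon}$ written in the lemma, so the discrepancy appears to be a slip in the statement. Since the lemma is only used downstream to conclude that $\sup|A-\tilde A|\to 0$ in probability for each fixed $\epsilon$, either rate suffices, and your argument has no genuine gap.
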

\begin{proof}
We note that $A(\be, \beta)$ can be expressed as
\begin{eqnarray}
    A = \beta\sqrt{\frac{1}{n}\norm{\bnu}^2_2 + \frac{1}{np_L}\norm{\bR^{(L)/2}\be}_2^2\norm{\bh}_2^2 - \frac{2}{n\sqrt{p_L}}\norm{\bR^{(L)/2}\be}_2\bnu^T\bh} \nwl 
    - \frac{\beta}{\sqrt{np_L}}\bg^T\bR^{(L)/2}\be - \frac{\beta^2}{2} + R(\be + \btheta^*)
\end{eqnarray}
Or equivalently 
\begin{eqnarray}
    A = \beta\left[\left(\frac{1}{n}\norm{\bnu}^2_2 - \sigma_{\bnu}^2 \right) + \sigma_{\bnu}^2 + \frac{1}{p_L}\norm{\bR^{(L)/2}\be}_2^2\left( \frac{1}{n}\norm{\bh}_2^2 - \right) + \frac{1}{p_L}\norm{\bR^{(L)/2}\be}_2^2 \right. \nwl \left.  - \frac{2}{\sqrt{p_L}}\norm{\bR^{(L)/2}\be}_2\frac{\bnu^T\bh}{n} \right]^{1/2} - \frac{\beta}{\sqrt{np_L}}\bg^T\bR^{(L)/2}\be - \frac{\beta^2}{2} + R(\be + \btheta^*)\nwl
    \leq \bar{A} + \beta\sqrt{\delta} \leq \bar{A} + \beta_{max}\sqrt{\bar{\delta}}
\end{eqnarray}
where
\begin{eqnarray}
    \delta = \left(\frac{1}{n}\norm{\bnu}^2_2 - \sigma_{\bnu}^2 \right) + \frac{1}{p_L}\norm{\bR^{(L)/2}\be}_2^2\left( \frac{1}{n}\norm{\bh}_2^2 - \right) - \frac{2}{\sqrt{p_L}}\norm{\bR^{(L)/2}\be}_2\frac{\bnu^T\bh}{n} \nwl
    \leq  \left(\frac{1}{n}\norm{\bnu}^2_2 - \sigma_{\bnu}^2 \right) + C_{\be}^2C_{\bR^{(L)}}\left( \frac{1}{n}\norm{\bh}_2^2 - \right) - 2\sqrt{C_{\bR^{(L)}}}C_{\be}\left|\frac{\bnu^T\bh}{n}\right| \overset{def}{=} \bar{\delta}.
\end{eqnarray}
From the lemmas above we note that $C_{\bR^{(L)}}$ and $C_{\be}$ are universal constants. Furthermore, it can be readily observed that $\Pr(|\bar{\delta}| \geq \epsilon) \leq Ce^{-cn\epsilon}$ for some constants $C, c > 0$. As such, we see that 
\begin{eqnarray}
    \Pr\left(\sup_{\be \in S_1, 0\leq \beta \leq \beta_{max}} |A(\be, \beta) - \bar{A}(\be, \beta)| \geq \epsilon \right) \leq \nwl
    \Pr\left( \sup_{\be\in S_1, 0\leq \beta \leq \beta_{max}} |\delta\beta| \geq \epsilon\right) \leq \Pr\left(|\beta_{max}\bar{\delta}| \geq \epsilon \right) \leq Ce^{-cn\epsilon}
\end{eqnarray}
For some constants $C, c > 0$.
\end{proof}

By means of this lemma we can, with high probability, consider the following problem
\begin{eqnarray}
    \bar{A}_2 = \min_{\be \in S_1}\max_{0\leq \beta \leq \beta_{max}} \beta\sqrt{\sigma_{\bnu}^2 + \frac{1}{p_L}\be^T\bR^{(L)}\be} - \frac{\beta}{\sqrt{np_L}}\bg^T\bR^{(L)/2}\be - \frac{\beta^2}{2} + R(\be + \btheta^*).
\end{eqnarray}
We now note that this optimization problem is convex in $\be$ and  concave in $\beta$. Furthermore, both optimizations are over convex sets. As such we can interchange the order of min and max
\begin{eqnarray}
    \bar{A}_2 = \max_{0\leq \beta \leq \beta_{max}}\min_{\be \in S_1} \beta\sqrt{\sigma_{\bnu}^2 + \frac{1}{p_L}\be^T\bR^{(L)}\be} - \frac{\beta}{\sqrt{np_L}}\bg^T\bR^{(L)/2}\be - \frac{\beta^2}{2} + R(\be + \btheta^*).
\end{eqnarray}
Now we make use of the "square root trick", which notes that for any scalar $c > 0$ we can express $\sqrt{c} = \min_{q>0}\frac{q}{2} + \frac{c}{2q}$. Using this technique we obtain:
\begin{eqnarray}
    \bar{A}_2 = \max_{0\leq \beta \leq \beta_{max}}\min_{q_{min} < q \leq q_{\max}}\frac{\beta\sigma_{\bnu}^2}{2q} + \frac{\beta q}{2} - \frac{\beta^2}{2} \nwl +
    \min_{\be \in S_1} \frac{\beta}{2qp_L}\be^T\bR^{(L)}\be - \frac{\beta}{\sqrt{np_L}}\bg^T\bR^{(L)/2}\be + R(\be + \btheta^*).
\end{eqnarray}
Where we have interchanged the order of the two minimizations, and have noted that $q$ can be both upper bounded and lower bounded, by $q_{min} = \sigma_{\bnu}$, achieved when $\be = 0$ and $q_{max} > \sqrt{\sigma_{\bnu}^2 + C_{\be}^2C_{\bR^{(L)}}}$. 

We now fix the values of $\beta$ and $q$ and focus only on the inner optimization over $\be$. We shall discuss the outer optimizations below. We define

\begin{eqnarray}
    D^{(L)} = D_2^{(L)}(\beta, q) = \min_{\be\in S_1} \frac{c_L}{2p_L}\be^T\bR^{(L)}\be - \frac{d_L}{p_L}\bg^T\bR^{(L)/2}\be + R(\be + \btheta^*).\\
    c_L = \frac{\beta}{q}\qquad d_L = \beta\sqrt{\frac{m}{n}} \qquad T_L(\beta, q) = \frac{\beta\sigma_{\bnu}^2}{2q} + \frac{\beta q}{2} - \frac{\beta^2}{2}
\end{eqnarray}
such that 
\begin{eqnarray}
A_2 = \max_{\beta}\min_{q} T_L(\beta, q) + D_2^{(L)}(\beta, q).
\end{eqnarray}

We shall focus on $D^{(L)}$ for fixed $\beta, q$. We shall now demonstrate that studying $D^{(L)}$ it maybe expressed as another min max problem. Applying the CGMT recursively to the inner problem and simplifying results in a new problem. 

First we recall the definition of $\bR^{(L)}$ and further note that for a Gaussian $\bg$ that
\begin{eqnarray}
    \bR^{(l)/2}\bg = \tilde{\bg} \sim \calN(\bm{0}, \bR^{(l)}) = \calN(\bm{0}, \frac{\rho_{1, l}^2}{p_{l-1}}\bW^{(l)}\bR^{(l-1)/2}\bW^{(l)T} + \rho_{2, l}^2\bI_{p_l})\nwl
     = \rho_{1, l}\bW^{(l)}\bR^{(l-1)/2}\bg_1 + \rho_{2,l}\bg_2\qquad \bg_1\sim\calN(\bm{0}, \bI_{p_{l-1}}), \bg_2\sim \calN(\bm{0}, \bI_{p_l}).
\end{eqnarray}
We can now substitute in this definition. We obtain:

\begin{eqnarray}
    \min_{\be\in S_1^{(l)}} \frac{c_L\rho_{1,L}^2}{2p_Lp_{L-1}}\be^T\bW^{(L)}\bR^{(L-1)}\bW^{(L)T}\be + \frac{d_L\rho_{1,L}}{p_L\sqrt{p_{L-1}}}\bg_1^T\bR^{(L-1)/2}\bW^{(L)T}\be + \frac{c_L\rho_{2, L}^2}{2p_L}\norm{\be}^2 \nwl + \frac{d_L\rho_{2, L}}{p_L}\bg_2^T\be + R(\be + \btheta^*),
\end{eqnarray}
Where $\bg_1\in\bbR^{p_{L-1}}, \bg_2\in\bbR^{p_L}$ are standard normal vectors. We then complete the square over the vector $\bR^{(L-1)/2}\bW^{(L)T}\be$, we obtain
\begin{eqnarray}
        \min_{\be\in S_1^{(l)}} \frac{c_L\rho_{1,L}^2}{2p_Lp_{L-1}}\norm{\bR^{(L-1)/2}\bW^{(L)T}\be + \frac{d_L\sqrt{p_{L}}}{c_L\rho_{1, L}}\bg_1^T}^2 - \frac{d_L^2}{2c_Lp_{L}}\norm{\bg_1}^2 \nwl + \frac{c_L\rho_{2, L}^2}{2p_L}\norm{\be}^2 + \frac{d_L\rho_{2, L}}{p_L}\bg_2^T\be + R(\be + \btheta^*).
\end{eqnarray}
We can then introduce a new variable $\bs \in \bbR^{p_{L-1}}$ and take the Legendre transform of the 2-norm to create a min-max problem

\begin{eqnarray}\label{app:eq:compactsetproblem2P1}
 \min_{\be\in S_1^{(l)}} \max_{\bs} \frac{c_L\rho_{1,L}^2}{p_Lp_{L-1}}\bs^T\bR^{(L-1)/2}\bW^{(L)T}\be + \frac{d_L\rho_{1,L}}{p_L\sqrt{p_{L-1}}}\bs^T\bg_1 - \frac{c_L\rho_{1,L}^2}{2p_Lp_{L-1}}\norm{\bs}^2- \frac{d_L^2}{2c_Lp_{L}}\norm{\bg_1}^2 \nwl + \frac{c_L\rho_{2, L}^2}{2p_L}\norm{\be}^2 + \frac{d_L\rho_{2, L}}{p_L}\bg_2^T\be + R(\be + \btheta^*). 
\end{eqnarray}

We note that $\bW^{(L)}$ is a Random Matrix with i.i.d standard normal entries, as such we if we can restrict the problem over $\bs$ to a compact and convex set we may make use of the CGMT theorem. We show that we make this restriction in Lemma \ref{app:CGMT:lem:compactSetlemma2}. As such we can consider the following problem:

\begin{eqnarray}
 \min_{\be\in S_1^{(l)}} \max_{\bs\in S_2^{(l)}} \frac{c_L\rho_{1,L}^2}{p_Lp_{L-1}}\bs^T\bR^{(L-1)/2}\bW^{(L)T}\be + \frac{d_L\rho_{1,L}}{p_L\sqrt{p_{L-1}}}\bs^T\bg_1 - \frac{c_L\rho_{1,L}^2}{2p_Lp_{L-1}}\norm{\bs}^2 \nwl - \frac{d_L^2}{2c_Lp_{L}}\norm{\bg_1}^2  + \frac{c_L\rho_{2, L}^2}{2p_L}\norm{\be}^2 + \frac{d_L\rho_{2, L}}{p_L}\bg_2^T\be + R(\be + \btheta^*),  
\end{eqnarray}
where the set $S_2^{(l)} = \lbrace \bs\in \bbR^{p_{L-1}}|\ \norm{\bs} \leq C_{\bs}\sqrt{p_Lp_{L-1}} \rbrace$ where $C_\bs$ is a postive constant. We can then apply the CGMT to obtain the following problem

\begin{eqnarray}\label{app:eq:compactsetproblem2P2}
 \min_{\be\in S_1^{(l)}} \max_{\bs\in S_2^{(l)}} \frac{c_L\rho_{1,L}^2}{p_Lp_{L-1}}\norm{\bR^{(L-1)}\bs}\be^T\bg_3 + \frac{c_L\rho_{1,L}^2}{p_Lp_{L-1}}\norm{\be}\bg_4^T\bR^{(L-1)/2}\bs + \frac{d_L\rho_{1,L}}{p_L\sqrt{p_{L-1}}}\bs^T\bg_1 \nwl - \frac{c_L\rho_{1,L}^2}{2p_Lp_{L-1}}\norm{\bs}^2- \frac{d_L^2}{2c_Lp_{L}}\norm{\bg_1}^2  + \frac{c_L\rho_{2, L}^2}{2p_L}\norm{\be}^2 + \frac{d_L\rho_{2, L}}{p_L}\bg_2^T\be + R(\be + \btheta^*)  
\end{eqnarray}
 where $\bg_3 \in \bbR^{p_{L}}$ and $\bg_4 \in \bbR^{p_{L-1}}$ are standard normal vectors. We introduce a new variable $\bv = \bR^{(L-1)/2}\bs$ and note that $\bv$ can be restricted to a compact set, due to the bounds on $\bR^{(L-1)}$ and $\bs$. We can denote this set $S_3^{(l)} =  \lbrace \bv\in \bbR^{p_{L-1}}|\ \norm{\bv} \leq C_{\bv}\sqrt{p_Lp_{L-1}} \rbrace$ where $C_\bv$ is a positive constant. We then reintroduce this constrain with a Lagrange multiplier $\rho_{1, L}^2\bmu/p_{L-1}\sqrt{p_L}\in\bbR^{L-1}$. We obtain
 \begin{eqnarray}
 \min_{\be\in S_1^{(l)}, \bmu} \max_{\bs\in S_2^{(l)}, \bv\in S_3^{(l)}} \frac{c_L\rho_{1,L}^2}{p_Lp_{L-1}}\norm{\bv}\be^T\bg_3 + \frac{c_L\rho_{1,L}^2}{p_Lp_{L-1}}\norm{\be}\bg_4^T\bv + \frac{d_L\rho_{1,L}}{p_L\sqrt{p_{L-1}}}\bs^T\bg_1 \nwl - \frac{c_L\rho_{1,L}^2}{2p_Lp_{L-1}}\norm{\bs}^2- \frac{d_L^2}{2c_Lp_{L}}\norm{\bg_1}^2  + \frac{c_L\rho_{2, L}^2}{2p_L}\norm{\be}^2 + \frac{d_L\rho_{2, L}}{p_L}\bg_2^T\be + R(\be + \btheta^*) \nwl 
 + \frac{\rho_{1, L}^2}{p_{L-1}\sqrt{p_L}}\bmu^T\bv - \frac{\rho_{1, L}^2}{p_{L-1}\sqrt{p_L}}\bmu^T\bR^{(L-1)/2}\bs
\end{eqnarray}
We then let $\xi_L = \frac{\rho_{1,L}}{\sqrt{p_{L}p_{L-1}}}\norm{\bs}$ and $\chi_L = \frac{\rho_{1,L}}{\sqrt{p_{L}p_{L-1}}}\norm{\bv}$ and solve the optimizations over $\bs$ and $\bv$. We obtain the following problem:

\begin{eqnarray}
     \min_{\be\in S_1^{(l)}, \bmu} \max_{0\leq \xi_L \leq \xi_{L,max}, 0\leq \chi_{L}\leq \chi_{L, max}} \nwl
     \frac{c_L\rho_{1,L}\chi}{\sqrt{p_Lp_{L-1}}}\be^T\bg_3 + \chi\norm{\frac{c_L\rho_{1,L}}{\sqrt{p_Lp_{L-1}}}\norm{\be}\bg_4 + \frac{\rho_{1, l}}{\sqrt{p_{L-1}}}\bmu} \nwl
     - \frac{c_L\xi^2}{2} + \xi\norm{\frac{d_L\rho_{1,L}}{\sqrt{p_L}}\bg_1 - \frac{\rho_{1,L}}{\sqrt{p_{L-1}}}\bR^{(L-1)/2}\bmu} \nwl
     - \frac{d_L^2}{2c_Lp_{L}}\norm{\bg_1}^2  + \frac{c_L\rho_{2, L}^2}{2p_L}\norm{\be}^2 + \frac{d_L\rho_{2, L}}{p_L}\bg_2^T\be + R(\be + \btheta^*) 
\end{eqnarray}

We interchange the order of the min and max terms and then make use of the square root trick to get rid of the two norms. We introduce two new variables $t_L$ and $k_L$:

\begin{eqnarray}
     \max_{0\leq \xi_L \leq \xi_{L,max}, 0\leq \chi_{L}\leq \chi_{L, max}} \min_{0\leq t_l \le t_{L, min}, 0\leq k_{L}\leq k_{L, max}}\min_{\be\in S_1^{(l)}, \bmu}  \nwl
     \frac{c_L\rho_{1,L}\chi}{\sqrt{p_Lp_{L-1}}}\be^T\bg_3 + \frac{\chi_L k_L}{2} + \frac{\chi_Lc_L^2\rho_{1,L}^2}{2k_Lp_Lp_{L-1}}\norm{\be}^2\norm{\bg_4}^2 + \frac{\chi_Lc_L\rho_{1, L}^2}{2k_Lp_{L-1}\sqrt{p_L}}\norm{\be}\bg_4^T\bmu + \frac{\chi_L\rho_{1, l}^2}{2k_Lp_{L-1}}\norm{\bmu}^2 \nwl
     - \frac{c_L\xi^2}{2} + \frac{\xi_L t_L}{2} + \frac{\xi_L d_L^2\rho_{1,L}^2}{2t_Lp_L}\norm{\bg_1} - \frac{\xi_Ld_L\rho_{1,L}^2}{\sqrt{2t_Lp_Lp_{L-1}}}\bg_1\bR^{(l-1)/2}\bmu - \frac{\xi_L\rho_{1,L}^2}{2t_Lp_{L-1}}\bmu^T\bR^{(L-1)}\bmu \nwl
     - \frac{d_L^2}{2c_Lp_{L}}\norm{\bg_1}^2  + \frac{c_L\rho_{2, L}^2}{2p_L}\norm{\be}^2 + \frac{d_L\rho_{2, L}}{p_L}\bg_2^T\be + R(\be + \btheta^*) 
\end{eqnarray}
Using the same arguments as in lemma \ref{app:CGMT:lem:concentration1} it can be seen that the problem concentrates on:

\begin{eqnarray}
    \max_{0\leq \xi_L \leq \xi_{L,max}, 0\leq \chi_{L}\leq \chi_{L, max}} \min_{0\leq t_l \le t_{L, min}, 0\leq k_{L}\leq k_{L, max}}\min_{\be\in S_1^{(l)}, \bmu}  \nwl
     \frac{c_L\rho_{1,L}\chi}{\sqrt{p_Lp_{L-1}}}\be^T\bg_3 + \frac{\chi_L k_L}{2} + \frac{\chi_Lc_L^2\rho_{1,L}^2}{2k_Lp_L}\norm{\be}^2 + \frac{\chi_Lc_L\rho_{1, L}^2}{2k_Lp_{L-1}\sqrt{p_L}}\norm{\be}\bg_4^T\bmu + \frac{\chi_L\rho_{1, l}^2}{2k_Lp_{L-1}}\norm{\bmu}^2 \nwl
     - \frac{c_L\xi^2}{2} + \frac{\xi_L t_L}{2} + \frac{\xi_L d_L^2\rho_{1,L}^2p_{L-1}}{2t_Lp_L} - \frac{\xi_Ld_L\rho_{1,L}^2}{2t_L\sqrt{p_Lp_{L-1}}}\bg_1\bR^{(l-1)/2}\bmu + \frac{\xi_L\rho_{1,L}^2}{2t_Lp_{L-1}}\bmu^T\bR^{(L-1)}\bmu \nwl
     - \frac{d_L^2p_{L-1}}{2c_Lp_{L}}  + \frac{c_L\rho_{2, L}^2}{2p_L}\norm{\be}^2 + \frac{d_L\rho_{2, L}}{p_L}\bg_2^T\be + R(\be + \btheta^*)
\end{eqnarray}

We now let 
\begin{eqnarray}
    T_{L-1} = - \frac{d_L^2p_{L-1}}{2c_Lp_{L}} - \frac{c_L\xi^2}{2} + \frac{\xi_L t_L}{2} + \frac{\xi_L d_L^2\rho_{1,L}^2p_{L-1}}{2t_Lp_L} + \frac{\chi_Lk_L}{2}\\
    a = \frac{\chi_Lc_L^2\rho_{1,L}^2}{k_L} + c_L\rho_{2, L}^2 \qquad b = \sqrt{\frac{c_L^2\rho_{1,L}^2\chi^2 p_L}{p_{L-1}} + d_L^2\rho_{2,L}^2}\\
    c_{L-1}= \frac{\chi_L\rho_{1,l}^2}{k_l} \qquad d_{L-1} = \frac{\chi_L c_L\rho_{1,L}^2}{k_L}
    \qquad \bar{c} =  \frac{\xi_L\rho_{1, L}^2}{t_L} \qquad \bar{d} = \frac{\xi_l\rho_{1,L}^2}{t_L}
    \end{eqnarray}

as such we can obtain:
\begin{eqnarray}\label{app:CGMT:eq:intermediateResult}
        \max_{0\leq \xi_L \leq \xi_{L,max}, 0\leq \chi_{L}\leq \chi_{L, max}} \min_{0\leq t_l \le t_{L, min}, 0\leq k_{L}\leq k_{L, max}} T_{L-1} + \min_{\be\in S_1^{(l)}, \bmu}  \nwl \frac{a}{2p_L}\norm{\be}^2 + \frac{b}{p_L}\be^T\bg_1 + \frac{c_{L-1}}{2p_{L-1}}\norm{\bmu}^2 + \frac{d_{L-1}}{p_{L-1}}\frac{\norm{\be}}{\sqrt{p_{L-1}}}\bg_2^T\bmu + \frac{\bar{c}}{2p_{L-1}}\bmu^T\bR^{(L-1)}\bmu \nwl + \frac{\bar{d}}{p_L}\bg_3^T\bR^{(L-1)/2}\bmu + R(\be + \btheta^*)
\end{eqnarray}
Where $\bg_1\in\bbR^{p_L}, \bg_2, \bg_3\in \bbR^{p_{L-1}}$ are standard normal vectors. We now fix all parameters of the optimization except for $\bmu$ and focus specifically on the last four terms terms. We shall note that this can once again be expressed as a min-max optimization amenable to the CGMT. However at this point we enter a recursive structure. We demonstrate in Lemma \ref{app:CGMT:recursiveStepProof} that a problem of the form

\begin{eqnarray}\label{app:eq:genericRecursiveProblem}
\max_{\bmu} \frac{\gamma_1}{2p_{l-1}}\norm{\bmu}^2 + \frac{\gamma_2}{p_{l-1}}\bg_2^T\bmu + \frac{\gamma_3}{2p_{l-1}}\bmu^T\bR^{(l-1)}\bmu + \frac{\gamma_4}{p_l}\bg_3^T\bR^{(l-1)/2}\bmu
\end{eqnarray}
With generic constants $\bgamma_i$ ($i = 1,\ldots 4$) can be expressed by means of the CGMT as:

\begin{eqnarray}\label{app:eq:genericRecursiveProblem2}
   \max_{0\leq \xi_l\leq \xi_{l, max}, 0\leq \chi_l\leq \chi_{l, max}} \min_{0\leq t_l \leq t_{l, max}, 0\leq k_l\leq k_{l, max}} T_l \nwl +  \min_{\bmu} \frac{\bar{\gamma}_1}{2p_{l}}\norm{\bmu}^2 + \frac{\bar{\gamma}_2}{p_{l}}\bg_2^T\bmu + \frac{\bar{\gamma}_3}{2p_{l}}\bmu^T\bR^{(l-1)}\bmu + \frac{\bar{\gamma}_4}{p_l}\bg_3^T\bR^{(l)/2}\bmu
\end{eqnarray}
Where
\begin{eqnarray}
T_l =  \frac{\chi_l k_l}{2} - \frac{\gamma_3\xi^2_l}{2} + \frac{\xi_lt_l}{2} + \frac{\xi_l\gamma_4^2p_{l-1}}{2t_lp_l} - \frac{\gamma_4^2p_{l-1}}{2\gamma_3p_l} \nwl  - \left(\gamma_1 + \frac{\gamma_3^2\rho_{1,l}^2\chi_l}{k_l} +  \gamma_3\rho_{2, l}^2\right)^{-1}\left(\gamma_4^2\rho_{2, l}^2 + \frac{\gamma_3^2\rho_{1,l}^2\chi_l^2p_l}{p_{l-1}} + \gamma_2 \right)\\
\bar{\gamma_1} = \frac{\xi_l\rho_{1,l}^2}{k_l} -\left(\gamma_1 + \frac{\gamma_3^2\rho_{1,l}^2\chi_l}{k_l} +  \gamma_3\rho_{2, l}^2\right)^{-1}\frac{\gamma_3^2\rho_{1,l}^4\xi_l^2}{2k_l^2}\\
\bar{\gamma_2} = - \left(\gamma_1 + \frac{\gamma_3^2\rho_{1,l}^2\chi_l}{k_l} +  \gamma_3\rho_{2, l}^2\right)^{-1}\left(\gamma_4^2\rho_{2, l}^2 + \frac{\gamma_3^2\rho_{1,l}^2\chi_l^2p_l}{p_{l-1}} + \gamma_2\right)^{1/2}\frac{\gamma_3\rho_{1,l}^2\chi_l}{2k_l}\\
\bar{\gamma_3} = \frac{\xi_l\rho_{1,l}^2}{t_l} \qquad \bar{\gamma}_4 = \frac{\xi_l\gamma_4\rho_{1,l}\sqrt{p_l}}{2t_l\sqrt{p_l-1}}
\end{eqnarray}

We can also note that the termination of the recursion is given by the optimization problem where $\bR^{(0)} = \bI$, in this case

\begin{eqnarray}
\min_{\bmu} \frac{\bar{\gamma}_1}{2p_{l}}\norm{\bmu}^2 + \frac{\bar{\gamma}_2}{p_{l}}\bg_2^T\bmu + \frac{\bar{\gamma}_3}{2p_{l}}\bmu^T\bmu + \frac{\bar{\gamma}_4}{p_l}\bg_3^T\bmu \nwl
 = -\frac{\bar{\gamma}_2^2 + \bar{\gamma}_4^2}{\bar{\gamma_1} + \bar{\gamma}_3} \overset{def}{=}F_0
\end{eqnarray}

As such we can express the final result for the $L-$layer deep RF model as being given by

\begin{eqnarray}
    \max_{\beta >0 }\min_{q} T_L + \max_{\xi_L > 0, \chi_L > 0} \min_{t_L > 0, k_L > 0} T_{L-1} + \min_{\be} \frac{a}{2pl}\norm{\be} + \frac{b}{p_L}\be^T\bg_1 + + R(\btheta + \btheta^*) + \nwl   \max_{\xi_{L-1} > 0, \chi_{L-1} > 0} \min_{t_{L-1} > 0, k_{L-1} > 0} \cdots \max_{\xi_0 \geq 0, \chi_0 \geq 0} \min_{t_0 > 0, k_0 > 0} \sum_{i=1}^{L-2} T_l(\be) 
\end{eqnarray}
Where
\begin{eqnarray}\label{CGMTFullEquations}
    T_L = \frac{\beta q}{2} + \frac{\beta \sigma_{\bnu^2}}{2q} - \frac{\beta^2}{2}\\
    T_{L-1} = - \frac{d_L^2p_{L-1}}{2c_Lp_{L}} - \frac{c_L\xi^2}{2} + \frac{\xi_L t_L}{2} + \frac{\xi_L d_L^2\rho_{1,L}^2p_{L-1}}{2t_Lp_L} + \frac{\chi_Lk_L}{2}\\
    T_l =  \frac{\chi_l k_l}{2} - \frac{\gamma_3\xi^2_l}{2} + \frac{\xi_lt_l}{2} + \frac{\xi_l\gamma_4^2p_{l-1}}{2t_lp_l} - \frac{\gamma_4^2p_{l-1}}{2\gamma_3p_l} \nwl  - \left(\gamma_1 + \frac{c_l^2\rho_{1,l}^2\chi_l}{k_l} +  \gamma_3\rho_{2, l}^2\right)^{-1}\left(d_l^2\rho_{2, l}^2 + \frac{c_l^2\rho_{1,l}^2\chi_l^2p_l}{p_{l-1}} + \bar{d} \right) \qquad 1\leq l \leq L-2\\ 
    T_0 = \frac{d_0^2 + \bar{d}_0^2}{c_0 + \bar{c}_0}\\
    a = \frac{\chi_Lc_L^2\rho_{1,L}^2}{k_L} + c_L\rho_{2, L}^2 \qquad b = \sqrt{\frac{c_L^2\rho_{1,L}^2\chi^2 p_L}{p_{L-1}} + d_L^2\rho_{2,L}^2}
\end{eqnarray}
and the constants $c_i, d_i, \bar{c}_i, \bar{d}_i$ are given by
\begin{eqnarray}
c_L = \frac{\beta}{q} \qquad d_{\beta}\sqrt{\frac{p_L}{n}}
c_{L-1}= \frac{\chi_L\rho_{1,l}^2}{k_l} \qquad d_{L-1} = \frac{\chi_L c_L\rho_{1,L}^2}{k_L}\frac{\norm{\be}}{\sqrt{p_{l-1}}}
\\ \bar{c}_L =  \frac{\xi_L\rho_{1, L}^2}{t_L} \qquad \bar{d}_L = \frac{\xi_l\rho_{1,L}^2}{t_L}\\
c_l = \frac{\xi_l\rho_{1,l}^2}{k_l} -\left(c_{l+1} + \frac{\bar{c}_{l+1}^2\rho_{1,l}^2\chi_l}{k_l} +  \bar{c}_{l+1}\rho_{2, l}^2\right)^{-1}\frac{\bar{c}_l^2\rho_{1,l}^4\xi_l^2}{2k_l^2}\\
d_l = - \left(c_{l+1} + \frac{\bar{c}_{l+1}^2\rho_{1,l}^2\chi_l}{k_l} +  \bar{c}_{l+1}\rho_{2, l}^2\right)^{-1}\left(\bar{d}_{l+1}^2\rho_{2, l}^2 + \frac{\bar{c}_{l+1}^2\rho_{1,l}^2\chi_l^2p_l}{p_{l-1}} + d_{l+1}\right)^{1/2}\frac{\bar{c}_{l+1}\rho_{1,l}^2\chi_l}{2k_l}\\
\bar{c}_{l} = \frac{\xi_l\rho_{1,l}^2}{t_l} \qquad \bar{d}_l = \frac{\xi_l\bar{d}_{l+1}\rho_{1,l}\sqrt{p_l}}{2t_l\sqrt{p_{l-1}}}
\end{eqnarray}

For the final step of the proof we note that for each successive application of the CGMT we froze all previous values of $\beta, q$ as well as $\xi_l, \chi_l, t_l, k_l$ for $l \leq L$. By the properties of the CGMT we know that for these fixed values we have pointwise convergence. However, we wish to demonstrate uniform convergence for the properties that we are interested in. This however this is simple to see in this case.

There are two problems we need to consider. We need to show that  Eq \eqref{app:CGMT:eq:intermediateResult} converges uniformly to \eqref{app:eq:compactsetproblem2P1} For each value of $\beta, q$ and that for each problem \eqref{app:eq:genericRecursiveProblem2} converges uniformly to \eqref{app:eq:genericRecursiveProblem}.  We can see that all optimization variablse $\beta, q, \xi_l, \chi_l, t_l, k_l$ exist in bounded regions. For example $\beta \in [0, \beta_{max}]$. Our goal is to show that each problem is Lipschitz continuous on these regions with some Lipschitz constant K. As each problem is strongly convex it has a unique solution, and all are continuously differentiable on the existing region. As such to show Lipschitz continuity one has to show that each of the partial derivatives is bounded, calculation is tedious but can be completed readily. By bounding the derivatives we can show that all problems are Lipschitz. Uniform convergence can then be demonstrated by means of a simple $\epsilon$-net argument. For an application of this to a recursive CGMT problem, see \cite{Bosch2022DoubleDescent}[Appendix B]. This completes the proof of part 1 of the theorem.

\subsection{Proof of Part 2 of the Theorem}
The proof of part 2 is the same as the proof of part 2 of theorem \ref{thm:GenericUniversality} given in Appendix \ref{app:sec:UniversalityTheoremProof}. A Regularization function $R_{\epsilon}(\be + \btheta^*) = R(\be + \btheta^*) \pm \epsilon h(\be + \btheta^*)$ with $\epsilon$ chosen sufficiently small for $R_{\epsilon}$ to remain strongly convex. As such the first part of the theorem holds. Then by bounding the difference and making use of the bounds on $h(\be)$ the proof can be obtained.

\subsection{Auxiliary Lemmas}
\begin{lemma}\label{app:CGMT:lem:compactSetlemma2}
    Consider the following two problems given in equations \eqref{app:eq:compactsetproblem2P1}, \eqref{app:eq:compactsetproblem2P2} that correspond to a problem and the alternative problem given by the CGMT 
\begin{eqnarray}
 P_1 = \min_{\be\in S_1^{(l)}} \max_{\bs\in S_2^{(l)}} \frac{c_L\rho_{1,L}^2}{p_Lp_{L-1}}\bs^T\bR^{(L-1)/2}\bW^{(L)T}\be + \frac{d_L\rho_{1,L}}{p_L\sqrt{p_{L-1}}}\bs^T\bg_1 - \frac{c_L\rho_{1,L}^2}{2p_Lp_{L-1}}\norm{\bs}^2 \nwl- \frac{d_L^2}{2c_Lp_{L}}\norm{\bg_1}^2  + \frac{c_L\rho_{2, L}^2}{2p_L}\norm{\be}^2 + \frac{d_L\rho_{2, L}}{p_L}\bg_2^T\be + R(\be + \btheta^*)\\
 P_2 =  \min_{\be\in S_1^{(l)}} \max_{\bs\in S_2^{(l)}} \frac{c_L\rho_{1,L}^2}{p_Lp_{L-1}}\norm{\bR^{(L-1)}\bs}\be^T\bg_3 + \frac{c_L\rho_{1,L}^2}{p_Lp_{L-1}}\norm{\be}\bg_4^T\bR^{(L-1)/2}\bs + \frac{d_L\rho_{1,L}}{p_L\sqrt{p_{L-1}}}\bs^T\bg_1 \nwl - \frac{c_L\rho_{1,L}^2}{2p_Lp_{L-1}}\norm{\bs}^2- \frac{d_L^2}{2c_Lp_{L}}\norm{\bg_1}^2  + \frac{c_L\rho_{2, L}^2}{2p_L}\norm{\be}^2 + \frac{d_L\rho_{2, L}}{p_L}\bg_2^T\be + R(\be + \btheta^*) 
\end{eqnarray}
Where $\bg_i$ are standard normal vectors. Denote $\hat{\be}_1$ and $\hat{\be}_2$ as the optimal points of the two problems and let $\hat{s}_1(\be)$ and $\hat{s}_2(\be)$ be the optimal points of the inner optimizations as functions of a fixed $\be$. Recall that $R$ is $\mu-$strongly convex and that $\norm{\nabla R(\bm{0})}= \mathcal{O}(\sqrt{p_L})$. Then there exists positive constants $C_{\be}$ and $C_{\bs}$ depending only on $\mu$ such that
\begin{eqnarray}
    \lim_{p_L \rightarrow \infty }\Pr\left(\norm{\hat{\be}_i}_2 \leq C_{\be}\sqrt{m} \right) = 1\qquad i =1, 2
\end{eqnarray}
and 
\begin{eqnarray}
    \lim_{p_L\rightarrow\infty}\Pr\left(\sup_{\be| \norm{\be} \leq C_\be\sqrt{m}} \norm{\hat{\bs}_i(\be)} \leq C_{\bs}\sqrt{p_Lp_{L-1}}\right) = 1 \qquad i = 1, 2 
\end{eqnarray}
\end{lemma}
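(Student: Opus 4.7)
The plan is to follow the structure of Lemma~\ref{app:CGMT:compactsetlemma1}: bound $\hat{\be}_i$ first via the strong convexity of $R$, and then bound $\hat{\bs}_i(\be)$ uniformly over the resulting compact ball of $\be$ by exploiting the strong concavity of the $\bs$-objective. The two problems $P_1$ and $P_2$ share the quadratic penalty $-\tfrac{c_L\rho_{1,L}^2}{2p_Lp_{L-1}}\|\bs\|^2$ and the same strongly convex $R$, so one template handles both.

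For the $\hat{\be}_i$ bound, I solve the inner maximization in $\bs$ explicitly---for $P_1$ in closed form, and for $P_2$ using strong concavity to guarantee a unique maximizer---and denote the resulting profile function by $F_i(\be)$. Evaluating at $\be=\bm{0}$, the bilinear pieces vanish and the $\|\bg_1\|^2$ terms cancel, producing $F_i(\bm{0}) = R(\btheta^*)$. For general $\be$, dropping the non-negative squared-norm term that comes out of the inner maximum yields the lower bound
\begin{equation*}
F_i(\be) \;\geq\; -\tfrac{d_L^2}{2c_Lp_L}\|\bg_1\|^2 + \tfrac{d_L\rho_{2,L}}{p_L}\bg_2^T\be + R(\be+\btheta^*).
\end{equation*}
Chaining $F_i(\hat{\be}_i)\leq F_i(\bm{0})$ with strong convexity $R(\be+\btheta^*)\geq R(\btheta^*)+\bd^T\be + \mu\|\be\|^2$, where $\bd=\nabla R(\btheta^*)$, produces a quadratic inequality in $\|\hat{\be}_i\|$. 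Plugging in $\|\bd\|=\mathcal{O}(\sqrt{p_L})$, $\|\bg_1\|^2\leq Cp_{L-1}$ and $\|\bg_2\|\leq C\sqrt{p_L}$ (all with high probability), together with assumption A4 so that $p_{L-1}\asymp p_L$, one extracts $\|\hat{\be}_i\|\leq C_{\be}\sqrt{p_L}$ with $C_{\be}$ depending only on $\mu$.

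For the $\hat{\bs}_i(\be)$ bound I use the first-order condition over $\bs$ on the ball $\{\|\be\|\leq C_{\be}\sqrt{p_L}\}$. In $P_1$ the map $\hat{\bs}_1(\be)$ is linear, with a closed form involving $\bR^{(L-1)/2}\bW^{(L)T}\be$ and $\bg_1$; the triangle inequality together with Lemma~\ref{lem:BoundingRL} for $\|\bR^{(L-1)/2}\|$ and the standard operator-norm estimate $\|\bW^{(L)}\|\leq C(\sqrt{p_L/p_{L-1}}+1)$ yields $\|\hat{\bs}_1(\be)\|\leq C_{\bs}\sqrt{p_Lp_{L-1}}$ w.h.p. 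In $P_2$ the first-order condition is nonlinear in $\bs$ but rearranges into an expression of the form $\hat{\bs}_2 = \bA(\hat{\bs}_2)\,\be^T\bg_3 + \|\be\|\bR^{(L-1)/2}\bg_4 + \tfrac{d_L\sqrt{p_{L-1}}}{c_L\rho_{1,L}}\bg_1$ with $\|\bA(\hat{\bs}_2)\|\leq \|\bR^{(L-1)/2}\|$ (this uses $\bR^{(L-1)}\succeq \rho_{2,L-1}^2\bI$, which guarantees a uniform lower bound on $\|\bR^{(L-1)/2}\bs\|/\|\bs\|$). Taking norms and reapplying the same Gaussian-concentration and operator-norm bounds delivers the matching uniform estimate.

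The principal obstacle is the nonlinear term $\|\bR^{(L-1)/2}\bs\|\be^T\bg_3$ in $P_2$: depending on $\mathrm{sign}(\be^T\bg_3)$ it can render the inner objective either convex or concave in $\bs$, so one cannot simply differentiate and invert a quadratic form. The saving grace is that the quadratic penalty provides $\tfrac{c_L\rho_{1,L}^2}{p_Lp_{L-1}}$-strong concavity of $J_2(\be,\cdot)$ at every fixed $\be$, which dominates the at-most-linear growth contributed by the nonlinear term in $\|\bs\|$ and forces the maximizer into a ball of the correct order. Uniformity over the compact ball $\{\|\be\|\leq C_{\be}\sqrt{p_L}\}$ then follows from exponential concentration of all random quantities that enter the estimates, combined with a brief $\epsilon$-net/continuity argument that mirrors the closing step of Lemma~\ref{app:CGMT:compactsetlemma1}.
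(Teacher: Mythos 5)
Your proposal follows essentially the same route as the paper's proof: lower-bound the profile $F_i(\be)$ by setting $\bs=\bm{0}$ and invoking strong convexity of $R$, upper-bound $F_i(\bm{0})$ by solving the unconstrained inner maximization (your exact cancellation is in fact slightly cleaner than the paper's $O(1)$-loose bound), chain the two to get the quadratic inequality in $\norm{\hat\be_i}$, and then bound $\hat\bs_i(\be)$ via the optimality conditions, the triangle inequality, and the operator-norm estimates on $\bW^{(L)}$ and $\bR^{(L-1)/2}$. Your handling of the non-concave term in $P_2$ (coercivity of the quadratic penalty against linear growth in $\norm{\bs}$, rather than genuine concavity) is the same device the paper implements by reducing to the scalar variable $\xi=\norm{\bs}$, so the argument is correct and matches the paper's.
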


\begin{proof}
    We note that $\be$ in problem $P_1$ is already bounded to a compact set. For both optimizations, we solve the inner optimization over $\bs$ and denote this solution as
    \begin{eqnarray}
        \min_{\be} F_i(\be) \qquad i =1,2
    \end{eqnarray}
    Such that $F_i$ is the optimal value over $\bs$. When we set $\bs = 0$ in both optimizations we see that
    \begin{eqnarray}
        F(\be) \geq T(\be) \overset{def}{=}  - \frac{d_L^2}{2c_Lp_{L-1}}\norm{\bg_1}_2^2   + \frac{c_L\rho_{2,L}^2}{2p_L}\norm{\be}_2^2 + \frac{d_L\rho_{2,L}}{p_L}\bg_2^T\be + R(\be + \btheta^*)
    \end{eqnarray}
    We can note readily that $T(\be)$ is $\nu$-strongly convex, for some constant $\nu$ with respect to $\be$. We see that 
    \begin{eqnarray}
        T(\be) \geq T(\bm{0}) + \bd^T\be + \frac{\nu}{2}\norm{\be}_2^2
    \end{eqnarray}
    where $\bd = \nabla T(\bm{0})$. By assumption we note that $\bd = \mathcal{O}(\sqrt{p_L})$. For problem $p_2$ we now note the following:
    \begin{eqnarray}
    F_2(\be) = \max_{\bs} \frac{c_L\rho_{1,L}^2}{p_lp_{L-1}}\norm{\bR^{(L-1)/2}\bs}\bg_3^T\be + \frac{c_L\rho_{1,L}^2}{p_Lp_{L-1}}\norm{\be}_2\bg_4^T\bR^{(L-1)/2}\bs + \frac{d_L\rho_{1,L}}{p_L\sqrt{p_{L-1}}}\bs^T\bg_1 \nwl - \frac{c_L\rho_{1,L^2}}{2p_Lp_{L-1}}\norm{\bs}_2^2 + T(\be)\nwl
    \leq \max_{\bs} \frac{c_L\rho_{1,L}^2}{p_lp_{L-1}}\norm{\bR^{(L-1)/2}}\norm{\bs}\bg_3^T\be + \frac{c_L\rho_{1,L}^2}{p_Lp_{L-1}}\norm{\be}_2\bg_4^T\bR^{(L-1)/2}\bs + \frac{d_L\rho_{1,L}}{p_L\sqrt{p_{L-1}}}\bs^T\bg_1 \nwl - \frac{c_L\rho_{1,L^2}}{2p_Lp_{L-1}}\norm{\bs}_2^2 + T(\be)
    \end{eqnarray}
    Then letting $\xi = \norm{\bs}$ the optimization over $\bs$ may be solved to find that
    \begin{eqnarray}
        F_2(\be) \leq \max_{\xi > 0} \frac{c_L\rho_{1,L}^2\xi}{p_lp_{L-1}}\norm{\bR^{(L-1)/2}}\bg_3^T\be + \xi\norm{\frac{c_L\rho_{1,L}^2}{p_Lp_{L-1}}\norm{\be}_2\bR^{(L-1)/2}\bg_4 + \frac{d_L\rho_{1,L}}{p_L\sqrt{p_{L-1}}}\bg_1} \nwl - \frac{c_L\rho_{1,L^2}\xi^2}{2p_Lp_{L-1}} + T(\be)
    \end{eqnarray}
    We now note that this value will only be increased if the constraint over $\xi$ is dropped, as such
    \begin{eqnarray}
        F_2(\be) \leq \max_{\xi} \frac{c_L\rho_{1,L}^2\xi}{p_lp_{L-1}}\norm{\bR^{(L-1)/2}}\bg_3^T\be + \xi\norm{\frac{c_L\rho_{1,L}^2}{p_Lp_{L-1}}\norm{\be}_2\bR^{(L-1)/2}\bg_4 + \frac{d_L\rho_{1,L}}{p_L\sqrt{p_{L-1}}}\bg_1} \nwl - \frac{c_L\rho_{1,L^2}\xi^2}{2p_Lp_{L-1}} + T(\be)
    \end{eqnarray}
    solving this optimization we see that 
    \begin{eqnarray}
        F_2(\bm{0}) \leq \frac{d_L^2}{c_Lp_{L}}\norm{\bg_1^2}^2 + T(\bm{0})
    \end{eqnarray}
    As such we can see that

    \begin{eqnarray}
      \frac{d_L^2}{c_Lp_{L}}\norm{\bg_1^2}^2 + T(\bm{0}) \geq F(\bm{0}) \geq F(\hat{\be}) \geq T(\bm{0}) + \bd^T\be + \frac{\nu}{2}\norm{\be}  
    \end{eqnarray}
    Hence,
    \begin{eqnarray}
        \frac{\nu}{2}\norm{\be  + \frac{1}{\nu}\bd}^2 \leq \frac{1}{\nu}\norm{\bd}_2^2 + \frac{d_L^2}{c_Lp_{L}}\norm{\bg_1^2}^2
    \end{eqnarray}
    and as such
    \begin{eqnarray}
        \norm{\be}_2 \leq \frac{1}{\nu}\norm{\bd}_2 + \sqrt{\frac{2}{\nu}\norm{\bd}_2^2 + \frac{d_L^2}{c_Lp_{L}}\norm{\bg_1^2}^2}
    \end{eqnarray}
    We recall that with high probability $\norm{\bg_1^2} < C\sqrt{p_{L-1}}$. Recalling the assumptions on $\bd$ and that all contants $p_0\cdots, p_L$ grow at constant ratios we see that there must exist a constant $C_{\be}$ such that
    \begin{eqnarray}
        \Pr(\norm{\hat{\be}} > C_{\be}\sqrt{m}) \rightarrow 0
    \end{eqnarray}

    We now consider the bounds on $\bs$. For problem $P_1$ we can note from the optimality condition over $\bs$ that
    \begin{eqnarray}
        \hat{\bs}_1(\be) = \bR^{(L-1)/2}\bW^{(l)T}\be + \frac{d_l\sqrt{p_{L-1}}}{c_1\rho_{1,l }}\bg
    \end{eqnarray}
    As such for all $\be\in S_1^{(l)}$ we can see that
    \begin{eqnarray}
        \norm{\hat{\bs}(\be)}_2 \leq \norm{\bR^{(L-1)/2}}_2\norm{\bW}\norm{\be}_2 + \frac{d_l\sqrt{p_{l-1}}}{c_1\rho_{1, L}}\norm{\bg}
    \end{eqnarray}
    From Standard results we know that $\norm{\bW^{(l)}}_2 < C\sqrt{p_{L-1}}$ and that $\norm{\bg}_2 \leq C\sqrt{p_{L-1}}$. Using the bounds on $\be$ and $\bR^{(l)}$ we and recalling that $p_L\sim p_{L-1}$ we note that there exists a constant $C_{\bs_1}$ exists. 

    Now noting that $\hat{\xi}$ is an upper bound for $\norm{\hat{\bs}_2}$ in problem for problem $P_2$ we can note from its optimality condition that
    \begin{eqnarray}
        \norm{\hat{\bs}_2(\be)} \leq \hat{\xi} = \norm{\bR^{(L-1)/2}}\bg_3^T\be+ \norm{\norm{\be}\bR^{(L-1)/2}\bg_4 + \frac{d_L\sqrt{p_{L-1}}}{\rho_{1,l}c_L}\bg_1}_2\nwl 
        \leq \norm{\bR^{(L-1)/2}}\norm{\bg_3}\norm{\be} + \norm{\be}\norm{\bR^{(L-1)/2}}\norm{\bg_4} + \frac{d_L\sqrt{p_{L-1}}}{\rho_{1,l}c_L}\norm{\bg_1}
    \end{eqnarray}
    Which making use of the bounds used above we can once again determine that there exists a constant $C_{\bs_2}$. Choosing $C_{\bs}$ to be the maximum of $\bC_{\bs_1}, C_{\bs_2}$ we can then construct the set $S_2^{(l)} = \lbrace \bs\in \bbR^{p_{L-1}}|\ \norm{\bs}\leq C_{\bs}\sqrt{p_{L-1}p_{L}} \rbrace$
\end{proof}

\begin{lemma}\label{app:CGMT:recursiveStepProof}
Consider the following optimization problem given in \eqref{app:eq:genericRecursiveProblem}
\begin{eqnarray}
    \min_{\bmu} \frac{\gamma_1}{2p_{l}}\norm{\bmu}^2 + \frac{\gamma_2}{p_{l}}\bg_2^T\bmu + \frac{\gamma_3}{2p_{l}}\bmu^T\bR^{(l-1)}\bmu + \frac{\gamma_4}{p_l}\bg_3^T\bR^{(l)/2}\bmu
\end{eqnarray}
This problem is asymptotically equivalent to the following problem
\begin{eqnarray}
   \max_{0\leq \xi_l\leq \xi_{l, max}, 0\leq \chi_l\leq \chi_{l, max}} \min_{0\leq t_l \leq t_{l, max}, 0\leq k_l\leq k_{l, max}} T_l \nwl +  \min_{\bmeta} \frac{\bar{\gamma}_1}{2p_{l}}\norm{\bmu}^2 + \frac{\bar{\gamma}_2}{p_{l}}\bg_2^T\bmu + \frac{\bar{\gamma}_3}{2p_{l}}\bmu^T\bR^{(l-1)}\bmu + \frac{\bar{\gamma}_4}{p_l}\bg_3^T\bR^{(l)/2}\bmu
\end{eqnarray}
Where
\begin{eqnarray}
T_l =  \frac{\chi_l k_l}{2} - \frac{\gamma_3\xi^2_l}{2} + \frac{\xi_lt_l}{2} + \frac{\xi_l\gamma_4^2p_{l-1}}{2t_lp_l} - \frac{\gamma_4^2p_{l-1}}{2\gamma_3p_l} \nwl  - \left(\gamma_1 + \frac{\gamma_3^2\rho_{1,l}^2\chi_l}{k_l} +  \gamma_3\rho_{2, l}^2\right)^{-1}\left(\gamma_4^2\rho_{2, l}^2 + \frac{\gamma_3^2\rho_{1,l}^2\chi_l^2p_l}{p_{l-1}} + \gamma_2 \right)\\
\bar{\gamma_1} = \frac{\xi_l\rho_{1,l}^2}{k_l} -\left(\gamma_1 + \frac{\gamma_3^2\rho_{1,l}^2\chi_l}{k_l} +  \gamma_3\rho_{2, l}^2\right)^{-1}\frac{\gamma_3^2\rho_{1,l}^4\xi_l^2}{2k_l^2}\\
\bar{\gamma_2} = - \left(\gamma_1 + \frac{\gamma_3^2\rho_{1,l}^2\chi_l}{k_l} +  \gamma_3\rho_{2, l}^2\right)^{-1}\left(\gamma_4^2\rho_{2, l}^2 + \frac{\gamma_3^2\rho_{1,l}^2\chi_l^2p_l}{p_{l-1}} + \gamma_2\right)^{1/2}\frac{\gamma_3\rho_{1,l}^2\chi_l}{2k_l}\\
\bar{\gamma_3} = \frac{\xi_l\rho_{1,l}^2}{t_l} \qquad \bar{\gamma}_4 = \frac{\xi_l\gamma_4\rho_{1,l}\sqrt{p_l}}{2t_l\sqrt{p_{l-1}}}
\end{eqnarray}

\end{lemma}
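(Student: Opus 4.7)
My plan is to mimic, at a generic recursion depth, the derivation already carried out on the outermost layer in the main body (the step leading from $D^{(L)}$ to equation (\ref{app:CGMT:eq:intermediateResult})). The starting point is the defining recursion $\bR^{(l-1)} = \rho_{1,l-1}^2\,\bW^{(l-1)}\bR^{(l-2)}\bW^{(l-1)T} + \rho_{2,l-1}^2\bI$ together with its consequence for a standard Gaussian $\bg_3$, namely that $\bR^{(l-1)/2}\bg_3$ has the same law as $\rho_{1,l-1}\bW^{(l-1)}\bR^{(l-2)/2}\bg_a + \rho_{2,l-1}\bg_b$ for fresh independent Gaussians $\bg_a,\bg_b$. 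Substituting these into $\bmu^T\bR^{(l-1)}\bmu$ and $\bg_3^T\bR^{(l-1)/2}\bmu$ produces four pieces: a contribution proportional to $\norm{\bmu}^2$, a term $\propto \norm{\bR^{(l-2)/2}\bW^{(l-1)T}\bmu}^2$, a bilinear coupling $\bg_a^T\bR^{(l-2)/2}\bW^{(l-1)T}\bmu$, and additional Gaussian linear terms in $\bmu$ alone.

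Next I would complete the square in the variable $\bR^{(l-2)/2}\bW^{(l-1)T}\bmu$ and introduce a dual vector $\bs$ through the Legendre identity $\tfrac{1}{2}\norm{\bu}^2 = \max_{\bs}\bigl(\bs^T\bu - \tfrac{1}{2}\norm{\bs}^2\bigr)$, converting the square into the bilinear form $\bs^T\bR^{(l-2)/2}\bW^{(l-1)T}\bmu$. After restricting $\bmu$ and $\bs$ to compact convex balls of radii $O(\sqrt{p_{l-1}})$ and $O(\sqrt{p_{l-1}p_{l-2}})$ respectively, by an analogue of Lemma~\ref{app:CGMT:lem:compactSetlemma2} that exploits the $\mu$-strong convexity inherited from $R$ together with the high-probability operator-norm bound on $\bR^{(l-2)}$ from Lemma~\ref{lem:BoundingRL}, the problem is in the convex-concave form required by Theorem~\ref{app:thm:CGMT} for a CGMT application in $\bW^{(l-1)}$. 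This replaces $\bs^T\bR^{(l-2)/2}\bW^{(l-1)T}\bmu$ by $\norm{\bR^{(l-2)/2}\bs}\,\bg_3^T\bmu + \norm{\bmu}\,\bg_4^T\bR^{(l-2)/2}\bs$ with fresh Gaussians.

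I would then change variables $\bv = \bR^{(l-2)/2}\bs$, reintroduce the constraint by a Lagrange multiplier which plays the role of the next invocation's optimization variable, and define the scalar length variables $\xi_l,\chi_l$ proportional to $\norm{\bs}$ and $\norm{\bv}$. The direction optimizations in $\bs$ and $\bv$ are then closed-form, and the residual Euclidean norms are linearized by the square-root trick $\sqrt{x} = \min_{k>0}\bigl(k/2 + x/(2k)\bigr)$ with dual scalars $t_l,k_l$. Concentration of $\norm{\bg_a}^2/p_{l-2}$ and $\norm{\bg_b}^2/p_{l-1}$ around their expectations (analogue of Lemma~\ref{app:CGMT:lem:concentration1}) produces an $O(e^{-cn\epsilon})$ deterministic replacement, after which the scalar quadratic in the new $\bmu$-length not coupled through $\bR^{(l-2)}$ is solved explicitly. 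This closed-form solution yields $T_l$, while the leftover terms that remain coupled through $\bR^{(l-2)}$ and $\bR^{(l-2)/2}$ reproduce the original functional form, with the coefficients read off as the stated $\bar\gamma_i$.

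The main obstacle is verifying the CGMT hypotheses uniformly at a generic layer: for each frozen choice of outer scalars the bilinear problem in $(\bmu,\bs)$ must be jointly convex-concave on compact sets, and convexity must transfer down from the $\mu$-strong convexity of the original $R$ through the first-order growth inequality $F(\bmu)\geq F(\bzero) + \bd^T\bmu + \tfrac{\mu}{2}\norm{\bmu}^2$, which is also what supplies the a priori $O(\sqrt{p_{l-1}})$ bound used in the compact-set reduction. The remaining algebraic bookkeeping to match the precise formulas for $T_l$ and $\bar\gamma_1,\ldots,\bar\gamma_4$ is mechanical — it amounts to solving a scalar quadratic after the direction of $\bmu$ is optimized out — and the final step of upgrading pointwise convergence in the outer scalars $(\xi_l,\chi_l,t_l,k_l)$ to uniform convergence follows, as in the outer-layer argument, from Lipschitz continuity on their compact ranges together with a standard $\epsilon$-net argument.
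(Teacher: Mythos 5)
Your proposal follows essentially the same route as the paper's proof: expand the covariance recursion inside the quadratic and linear terms, complete the square, dualize via the Legendre transform of the squared norm, restrict to compact sets, apply the CGMT to the fresh Gaussian weight matrix, substitute $\bv=\bR^{(\cdot)/2}\bs$ with a Lagrange multiplier that becomes the next level's optimization variable, reduce $\bs,\bv$ to their scalar lengths $\xi_l,\chi_l$, linearize the residual norms with the square-root trick ($t_l,k_l$), invoke concentration, and solve the scalar quadratic in $\norm{\bmu}$ to read off $T_l$ and the updated coefficients $\bar\gamma_1,\ldots,\bar\gamma_4$. This matches the paper's argument step for step (modulo the same one-level index bookkeeping the paper itself is loose about), so the approach is correct.
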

\begin{proof}
    We first substitute in the value of $\bR^{(l)}$. From this we obtain
    \begin{eqnarray}
    \min_{\bmu} \frac{\gamma_1}{2p_{l}}\norm{\bmu}^2 + \frac{\gamma_2}{p_{l}}\bg_2^T\bmu + \frac{\gamma_3\rho_{1,l}^2}{2p_{l}p_{l-1}}\bmu^T\bW^{(l)}\bR^{(l-1)}\bW^{(l)}\bmu +  \frac{\gamma_3\rho_{2, l}^2}{2p_{l}}\norm{\bmu}^2 \nwl + \frac{\gamma_4\rho_{1,l}}{p_l\sqrt{p_l}}\bg_2^T\bR^{(l)/2}\bmu + \frac{\gamma_4\rho{2, l}}{p_l}\bg_3^T\bmu.
    \end{eqnarray}
We then complete the square over the vector $\bR^{(l-1)/2}\bW^{(l)T}\bmu$ from which we obtain
\begin{eqnarray}
    \min_{\bmu}\frac{\gamma_3\rho_{1,l}^2}{2p_lp_{l-1}}\norm{\bR^{(l-1)/2}\bW^{(l)T}\bmu + \frac{\gamma_4\sqrt{p_{l-1}}}{\gamma_3\rho_{1,l}}\bg_2}^2 - \frac{\gamma_4^2}{2\gamma_3p_{l}}\norm{\bg_2}\nwl
    \frac{\gamma_1}{2p_{l}}\norm{\bmu}^2 + \frac{\gamma_2}{p_{l}}\bg_2^T\bmu +  \frac{\gamma_3\rho_{2, l}^2}{2p_{l}}\norm{\bmu}^2 + \frac{\gamma_4\rho_{2, l}}{p_l}\bg_3^T\bmu.
\end{eqnarray}
We then take the Legendre transform of the $2$-norm and introduce a new variable $\bs$
\begin{eqnarray}
    \min_{\bmu}\max_{\bs}\frac{\gamma_3\rho_{1,l}^2}{p_lp_{l-1}}\bs^T\bR^{(l-1)/2}\bW^{(l)T}\bmu + \frac{\gamma_4\rho_{1, l}}{p_l\sqrt{p_{l-1}}}\bs^T\bg_2 - \frac{\gamma_3\rho_{1,l}^2}{2p_lp_{l-1}}\norm{\bs}^2 - \frac{\gamma_4^2}{2\gamma_3p_{l}}\norm{\bg_2}\nwl
    \frac{\gamma_1}{2p_{l}}\norm{\bmu}^2 + \frac{\gamma_2}{p_{l}}\bg_2^T\bmu +  \frac{\gamma_3\rho_{2, l}^2}{2p_{l}}\norm{\bmu}^2 + \frac{\gamma_4\rho_{2, l}}{p_l}\bg_3^T\bmu  
\end{eqnarray}
Using the same argument as lemmas \ref{app:CGMT:compactsetlemma1} and \ref{app:CGMT:lem:compactSetlemma2} we can show that these problems can be bounded to compact sets $\bS_1^{(l)}, \bS_2^{(l)}$. As such we can consider the problem
\begin{eqnarray}
    \min_{\bmu\in S_1^{(l)}}\max_{\bs \in S_{2}^{(l)}}\frac{\gamma_3\rho_{1,l}^2}{p_lp_{l-1}}\bs^T\bR^{(l-1)/2}\bW^{(l)T}\bmu + \frac{\gamma_4\rho_{1, l}}{p_l\sqrt{p_{l-1}}}\bs^T\bg_2 - \frac{\gamma_3\rho_{1,l}^2}{2p_lp_{l-1}}\norm{\bs}^2 - \frac{\gamma_4^2}{2\gamma_3p_{l}}\norm{\bg_2}\nwl
    \frac{\gamma_1}{2p_{l}}\norm{\bmu}^2 + \frac{\gamma_2}{p_{l}}\bg_2^T\bmu +  \frac{\gamma_3\rho_{2, l}^2}{2p_{l}}\norm{\bmu}^2 + \frac{\gamma_4\rho_{2, l}}{p_l}\bg_3^T\bmu.
\end{eqnarray}

We can now apply the CGMT obtaining:
\begin{eqnarray}
    \min_{\bmu\in S_1^{(l)}}\max_{\bs \in S_{2}^{(l)}}\frac{\gamma_3\rho_{1,l}^2}{p_lp_{l-1}}\norm{\bmu}\bs^T\bR^{(l-1)/2}\bg_4 + \frac{\gamma_3\rho_{1,l}^2}{p_lp_{l-1}}\norm{\bR^{(l-1)}\bs}\bmu^T\bg_5 + \frac{\gamma_4\rho_{1, l}}{p_l\sqrt{p_{l-1}}}\bs^T\bg_2 \nwl- \frac{\gamma_3\rho_{1,l}^2}{2p_lp_{l-1}}\norm{\bs}^2 - \frac{\gamma_4^2}{2\gamma_3p_{l}}\norm{\bg_2}
    \frac{\gamma_1}{2p_{l}}\norm{\bmu}^2 + \frac{\gamma_2}{p_{l}}\bg_2^T\bmu +  \frac{\gamma_3\rho_{2, l}^2}{2p_{l}}\norm{\bmu}^2 + \frac{\gamma_4\rho_{2, l}}{p_l}\bg_3^T\bmu. 
\end{eqnarray}
We introduce a new variable $\bv = \bR^{(l-1)/2}\bs$ and note that it can be restricted to compact and convex set by means of the bounds on $\bs$ and $\bR^{(l-1)}$. We reintroduce the constraint with a Lagrange multiplier $\frac{\rho_{1,l}}{\sqrt{p_l}p_{l-1}}\bmeta$
\begin{eqnarray}
\min_{\bmu\in S_1^{(l)}}\max_{\bs \in S_{2}^{(l)}, \bv\in S_3^{(l)}}\frac{\gamma_3\rho_{1,l}^2}{p_lp_{l-1}}\norm{\bmu}\bv^T\bg_4 + \frac{\gamma_3\rho_{1,l}^2}{p_lp_{l-1}}\norm{\bv}\bmu^T\bg_5 + \frac{\gamma_4\rho_{1, l}}{p_l\sqrt{p_{l-1}}}\bs^T\bg_2 \nwl- \frac{\gamma_3\rho_{1,l}^2}{2p_lp_{l-1}}\norm{\bs}^2 - \frac{\gamma_4^2}{2\gamma_3p_{l}}\norm{\bg_2}
+ \frac{\gamma_1}{2p_{l}}\norm{\bmu}^2 + \frac{\gamma_2}{p_{l}}\bg_2^T\bmu +  \frac{\gamma_3\rho_{2, l}^2}{2p_{l}}\norm{\bmu}^2 + \frac{\gamma_4\rho_{2, l}}{p_l}\bg_3^T\bmu  \nwl
+ \frac{\rho_{1,l}}{\sqrt{p_l}p_{l-1}}\bmeta^T\bv - \frac{\rho_{1,l}}{\sqrt{p_l}p_{l-1}}\bmeta^T\bR^{(l-1)/2}\bs.
\end{eqnarray}
We then let $\xi_l = \rho_{1, l}\norm{\bs}/\sqrt{p_lp_{l-1}}$ and let $\chi_l = \rho_{1, l}\norm{\bv}/\sqrt{p_lp_{l-1}}$ and solve the optimizations over $\bs$ and $\bv$, from which we obtain:

\begin{eqnarray}
 \min_{\bmu\in S_1^{(l)}}\max_{0\leq \xi_l\leq \xi_{l, max}, 0\leq \chi_l\leq \chi_{l, max}}   \frac{\gamma_3\rho_{1,l}\chi_l}{\sqrt{p_lp_{l-1}}}\bg_5^T\bmu + \chi_l \norm{\frac{\gamma_3\rho_{1,l}}{\sqrt{p_lp_{l-1}}}\norm{\bmu}\bg_4 + \frac{\rho_{1,l}}{\sqrt{p_{l-1}}}\bmeta} \nwl
    - \frac{\gamma_3\xi^2}{2} + \xi_l\norm{\frac{\gamma_4}{\sqrt{p_l}}\bg_2 - \frac{\rho_{1,l}}{\sqrt{p_{l-1}}}\bR^{(l-1)/2}\bmeta}\nwl
    - \frac{\gamma_4^2}{2\gamma_3p_{l}}\norm{\bg_2}
+ \frac{\gamma_1}{2p_{l}}\norm{\bmu}^2 + \frac{\gamma_2}{p_{l}}\bg_2^T\bmu +  \frac{\gamma_3\rho_{2, l}^2}{2p_{l}}\norm{\bmu}^2 + \frac{\gamma_4\rho_{2, l}}{p_l}\bg_3^T\bmu.
\end{eqnarray}
We interchange the order of the min and max and then make use of the square root trick twice introducing new variables $t_l, k_l$. We obtain
\begin{eqnarray}
     \max_{0\leq \xi_l\leq \xi_{l, max}, 0\leq \chi_l\leq \chi_{l, max}} \min_{0\leq t_l \leq t_{l, max}, 0\leq k_l\leq k_{l, max}}\min_{\bmu\in S_1^{(l)}, \bmeta}\nwl
     \frac{\gamma_3\rho_{1,l}\chi_l}{\sqrt{p_lp_{l-1}}}\bg_5^T\bmu + \frac{\chi_l k_l}{2} + \frac{\gamma_3^2\rho_{1,l}^2\chi_l}{2k_l p_lp_{l-1}}\norm{\bmu}^2\norm{\bg_4}^2 + \frac{\gamma_3\rho_{1,l}^2\chi_l}{2k_lp_{l-1}\sqrt{p_l}}\norm{\bmu}\bg_4^T\bmeta + \frac{\chi_l\rho_{1,l}^2}{2k_lp_{l-1}}\norm{\bmeta}^2 \nwl
    - \frac{\gamma_3\xi^2}{2} + \frac{\xi_lt_l}{2} + \frac{\xi_l \gamma_4^2}{2t_lp_l}\norm{\bg_2}^2 -\frac{\xi_l\gamma_4\rho_{1,l}}{2t_l\sqrt{p_lp_{l-1}}}\bg_2^T\bR^{(l-1)/2}\bmeta + \frac{\xi_l\rho_{1,l}^2}{2t_lp_{l-1}}\bmeta^T\bR^{(l-1)}\bmeta\nwl
    - \frac{\gamma_4^2}{2\gamma_3p_{l}}\norm{\bg_2}
+ \frac{\gamma_1}{2p_{l}}\norm{\bmu}^2 + \frac{\gamma_2}{p_{l}}\bg_2^T\bmu +  \frac{\gamma_3\rho_{2, l}^2}{2p_{l}}\norm{\bmu}^2 + \frac{\gamma_4\rho_{2, l}}{p_l}\bg_3^T\bmu.
\end{eqnarray}
Now let $\balpha = \norm{\bmu}/\sqrt{p_l}$ and solve over $\bmu$, from this we obtain
\begin{eqnarray}
      \max_{0\leq \xi_l\leq \xi_{l, max}, 0\leq \chi_l\leq \chi_{l, max}} \min_{0\leq t_l \leq t_{l, max}, 0\leq k_l\leq k_{l, max}}\min_{\alpha\leq \alpha_{max}, \bmeta}\nwl
      \frac{\chi_l k_l}{2} + \frac{\gamma_3^2\rho_{1,l}^2\chi_l\alpha^2}{2k_lp_{l-1}}\norm{\bg_4}^2 + \frac{\gamma_3\rho_{1,l}^2\chi_l\alpha}{2k_lp_{l-1}}\bg_4^T\bmeta + \frac{\chi_l\rho_{1,l}^2}{2k_lp_{l-1}}\norm{\bmeta}^2 \nwl
    - \frac{\gamma_3\xi^2}{2} + \frac{\xi_lt_l}{2} + \frac{\xi_l \gamma_4^2}{2t_lp_l}\norm{\bg_2}^2 -\frac{\xi_l\gamma_4\rho_{1,l}}{2t_l\sqrt{p_lp_{l-1}}}\bg_2^T\bR^{(l-1)/2}\bmeta + \frac{\xi_l\rho_{1,l}^2}{2t_lp_{l-1}}\bmeta^T\bR^{(l-1)}\bmeta\nwl
    - \frac{\gamma_4^2}{2\gamma_3p_{l}}\norm{\bg_2}
+ \frac{\gamma_1\alpha^2}{2} +  \frac{\gamma_3\rho_{2, l}^2\alpha^2}{2} + \alpha\norm{\frac{\gamma_4\rho_{2, l}}{\sqrt{p_l}}\bg_3 + \frac{\gamma_3\rho_{1,l}\chi_l}{\sqrt{p_{l-1}}}\bg_5 +  \frac{\gamma_2}{\sqrt{p_{l}}}\bg_2}.
\end{eqnarray}

This now using the same arguments as lemma \ref{app:CGMT:lem:concentration1} this problem concentrates on:

\begin{eqnarray}
  \max_{0\leq \xi_l\leq \xi_{l, max}, 0\leq \chi_l\leq \chi_{l, max}} \min_{0\leq t_l \leq t_{l, max}, 0\leq k_l\leq k_{l, max}}\min_{\alpha\leq \alpha_{max}, \bmeta}\nwl
      \frac{\chi_l k_l}{2} + \frac{\gamma_3^2\rho_{1,l}^2\chi_l\alpha^2}{2k_l} + \frac{\gamma_3\rho_{1,l}^2\chi_l\alpha}{2k_lp_{l-1}}\bg_4^T\bmeta + \frac{\chi_l\rho_{1,l}^2}{2k_lp_{l-1}}\norm{\bmeta}^2 \nwl
    - \frac{\gamma_3\xi^2}{2} + \frac{\xi_lt_l}{2} + \frac{\xi_l \gamma_4^2p_{l-1}}{2t_lp_l} -\frac{\xi_l\gamma_4\rho_{1,l}}{2t_l\sqrt{p_lp_{l-1}}}\bg_2^T\bR^{(l-1)/2}\bmeta + \frac{\xi_l\rho_{1,l}^2}{2t_lp_{l-1}}\bmeta^T\bR^{(l-1)}\bmeta\nwl
    - \frac{\gamma_4^2p_{l-1}}{2\gamma_3p_{l}}
+ \frac{\gamma_1\alpha^2}{2} +  \frac{\gamma_3\rho_{2, l}^2\alpha^2}{2} + \alpha\left(\gamma_4^2\rho_{2, l}^2 + \frac{\gamma_3^2\rho_{1,l}^2\chi_l^2p_l}{p_{l-1}} + \gamma_2\right)^{1/2}.      
\end{eqnarray}
Examining just the optimization over $\alpha$ we see that this may be solved explicitly:

\begin{eqnarray}
    \min_{\alpha} \left(\frac{\gamma_1}{2} + \frac{\gamma_3^2\rho_{1,l}^2\chi_l}{2k_l} +  \frac{\gamma_3\rho_{2, l}^2}{2}\right)\alpha^2 + \left(\left(\gamma_4^2\rho_{2, l}^2 + \frac{\gamma_3^2\rho_{1,l}^2\chi_l^2p_l}{p_{l-1}} + \gamma_2\right)^{1/2} +  \frac{\gamma_3\rho_{1,l}^2\chi_l}{2k_lp_{l-1}}\bg_4^T\bmeta\right) \alpha
\end{eqnarray}
Which has optimal value
\begin{eqnarray}
 - \left(\gamma_1 + \frac{\gamma_3^2\rho_{1,l}^2\chi_l}{k_l} +  \gamma_3\rho_{2, l}^2\right)^{-1}\left(\gamma_4^2\rho_{2, l}^2 + \frac{\gamma_3^2\rho_{1,l}^2\chi_l^2p_l}{p_{l-1}} + \gamma_2 \right. \nwl \left.+  \left(\gamma_4^2\rho_{2, l}^2 + \frac{\gamma_3^2\rho_{1,l}^2\chi_l^2p_l}{p_{l-1}} + \gamma_2\right)^{1/2}\frac{\gamma_3\rho_{1,l}^2\chi_l}{2k_lp_{l-1}}\bg_4^T\bmeta + \frac{\gamma_3^2\rho_{1,l}^4\xi_l^2}{4k_l^22p_{l-1}}\norm{\bmeta}^2\right)
\end{eqnarray}

As such we can collect all of the terms together. Making the following definitions:

\begin{eqnarray}
T_l =  \frac{\chi_l k_l}{2} - \frac{\gamma_3\xi^2_l}{2} + \frac{\xi_lt_l}{2} + \frac{\xi_l\gamma_4^2p_{l-1}}{2t_lp_l} - \frac{\gamma_4^2p_{l-1}}{2\gamma_3p_l} \nwl  - \left(\gamma_1 + \frac{\gamma_3^2\rho_{1,l}^2\chi_l}{k_l} +  \gamma_3\rho_{2, l}^2\right)^{-1}\left(\gamma_4^2\rho_{2, l}^2 + \frac{\gamma_3^2\rho_{1,l}^2\chi_l^2p_l}{p_{l-1}} + \gamma_2 \right)\\
\bar{\gamma_1} = \frac{\xi_l\rho_{1,l}^2}{k_l} -\left(\gamma_1 + \frac{\gamma_3^2\rho_{1,l}^2\chi_l}{k_l} +  \gamma_3\rho_{2, l}^2\right)^{-1}\frac{\gamma_3^2\rho_{1,l}^4\xi_l^2}{2k_l^2}\\
\bar{\gamma_2} = - \left(\gamma_1 + \frac{\gamma_3^2\rho_{1,l}^2\chi_l}{k_l} +  \gamma_3\rho_{2, l}^2\right)^{-1}\left(\gamma_4^2\rho_{2, l}^2 + \frac{\gamma_3^2\rho_{1,l}^2\chi_l^2p_l}{p_{l-1}} + \gamma_2\right)^{1/2}\frac{\gamma_3\rho_{1,l}^2\chi_l}{2k_l}\\
\bar{\gamma_3} = \frac{\xi_l\rho_{1,l}^2}{t_l} \qquad \bar{\gamma}_4 = \frac{\xi_l\gamma_4\rho_{1,l}\sqrt{p_l}}{2t_l\sqrt{p_{l-1}}}.
\end{eqnarray}
As such we find that the optimization is equal to

\begin{eqnarray}
   \max_{0\leq \xi_l\leq \xi_{l, max}, 0\leq \chi_l\leq \chi_{l, max}} \min_{0\leq t_l \leq t_{l, max}, 0\leq k_l\leq k_{l, max}} T_l \nwl +  \min_{\bmeta} \frac{\bar{\gamma}_1}{2p_{l}}\norm{\bmu}^2 + \frac{\bar{\gamma}_2}{p_{l}}\bg_2^T\bmu + \frac{\bar{\gamma}_3}{2p_{l}}\bmu^T\bR^{(l-1)}\bmu + \frac{\bar{\gamma}_4}{p_l}\bg_3^T\bR^{(l)/2}\bmu.
\end{eqnarray}
\end{proof}

\subsection{All Layers of Same Size}

Consider the case that the input dimension is $d$ and all subsequent hidden layers are of dimension $p$. In this case we note that $\bR^{(l)}\in\bbR^{p\times p}$ for all $l > 1$. In this case the recursive application of the CGMT analysis simplifies considerably. The recursion is given in the following lemma.

\begin{theorem} \label{app:CGMT:allLayersSameSize}
Consider the problem $P_2$ given in \eqref{eq:CGTMP2} and assume that the layers $p_1 = p_2 = \cdots p_L = p$, ie all layers are of the same size. Let the input dimension be of size $p_0$ which is not necessarily the same as $p$. In this case the alternative optimization problem may be given by:
\begin{eqnarray}
    \max_{\beta > 0}\min_{q>0}\max_{\xi_1 > 0} \min_{t_1 > 0}\cdots\max_{\xi_L > 0} \min_{t_L > 0} \calM_{\frac{p}{C}\ R(\cdot + \btheta^*)}\left(-\frac{D}{C}\bg\right) + T_L
\end{eqnarray}
Where
\begin{eqnarray}
    c_0 = \frac{\beta}{q} \qquad d_0 = \beta\sqrt{\frac{n}{p}}\qquad T_0 = \frac{\beta q}{2} + \frac{\beta\sigma_{\bnu}^2}{2q} - \frac{\beta^2}{q}\\
    c_{l+1} = \frac{\xi_lc_{l}^2\rho_{1,l}^2}{t_l} \qquad d_{l+1} = c_{l}^2\xi_l^2\rho_{1,l}^2\frac{p_{L-l-1}}{p_{L-l}}\\
    C = c_{L} + \sum_{l=0^{L-1}} \rho_{2,L-l}^2c_{L} \qquad D = \sqrt{d_{L}^2 + \sum_{l = 0}^{L-1}\rho_{2, L-l}^2d_l}\nwl 
    T_{l+1} = T_{l} + \frac{d_l^2\rho_{1,l}^2\xi_l}{2t_l}\frac{p_{L-l-1}}{p_{L-l}} - \frac{c_l\xi_l^2}{2} + \frac{\xi_lt_l}{2} - \frac{d_l^2}{2c_l}\frac{p_{L-l-1}}{p_{L-l}}
\end{eqnarray}
Note that as $p_1 = p_2 = \cdots p_L = p$ the value of $\frac{p_{L-l-1}}{p_{L-l}} = 1$ except in the case of $p_0 = d$.
\end{theorem}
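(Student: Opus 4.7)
The plan is to specialize the recursive CGMT argument used for Theorem~\ref{thm:CGMTsquareloss} to the case in which all hidden layers share the same width $p$. The simplification hinges on two observations: first, the Lyapunov recursion \eqref{eq:LlayerRmatrixDefinition} preserves the matrix dimension when $p_l = p_{l-1}$, so the quadratic form $\be^T\bR^{(l)}\be$ and the linear form $\bg^T\bR^{(l)/2}\be$ can each be expanded into a ``Wishart-like'' piece plus an isotropic piece without introducing an auxiliary dual variable of mismatched size; second, the isotropic pieces $\rho_{2,l}^2\bI$ in $\bR^{(l)}$ simply accumulate linearly (resp.\ in squared sum) into the final coefficients $C$ and $D$.

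First I would execute exactly the outer CGMT step from the proof of Theorem~\ref{thm:CGMTsquareloss}: the change of variable $\be=\btheta-\btheta^*$, the factorization $\tilde{\bX}^{(L)} = \bU^{(L)}(\bR^{(L)})^{1/2}$ with $\bU^{(L)}$ having i.i.d.\ standard Gaussian entries, Lagrangian dualization of the quadratic loss by $\blambda$, and compactification via Lemma~\ref{app:CGMT:compactsetlemma1}. Applying the CGMT and the square-root trick with scalars $\beta,q$ then yields the outer layer $T_0 = \tfrac{\beta q}{2} + \tfrac{\beta\sigma_{\bnu}^2}{2q} - \tfrac{\beta^2}{2}$, and leaves the inner problem
\begin{equation*}
D^{(L)} = \min_{\be}\frac{c_0}{2p}\,\be^T\bR^{(L)}\be \;-\; \frac{d_0}{p}\,\bg^T\bR^{(L)/2}\be \;+\; R(\be+\btheta^*),
\end{equation*}
with $c_0=\beta/q$ and $d_0 = \beta\sqrt{n/p}$, matching the stated initialization.

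Next I would iterate layer by layer. At step $l$, the recursion $\bR^{(L-l)} = \rho_{1,L-l}^2 \bW^{(L-l)}\bR^{(L-l-1)}\bW^{(L-l)T} + \rho_{2,L-l}^2\bI$ splits the quadratic term as $c_l\,\rho_{1,L-l}^2/(2p)\cdot \bmu^T \bW^{(L-l)}\bR^{(L-l-1)}\bW^{(L-l)T}\bmu + c_l\,\rho_{2,L-l}^2/(2p)\cdot\norm{\bmu}^2$, and the linear term decomposes analogously via an independent Gaussian. The isotropic $\rho_{2,L-l}^2$ components are absorbed directly into the quadratic coefficient (explaining the cumulative sums in $C$ and $D$), while the Wishart-like piece is handled by completing the square, introducing a dual variable via Legendre, compactifying as in Lemma~\ref{app:CGMT:lem:compactSetlemma2}, and applying the CGMT. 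Because $p_{L-l}=p_{L-l-1}=p$ for $l<L$, the resulting dual vector has the same ambient dimension as $\bmu$, so a single square-root trick with one pair $(\xi_{l+1},t_{l+1})$ suffices---this is why the general $(\chi_l,k_l)$ parameters from Theorem~\ref{thm:CGMTsquareloss} collapse and do not appear here. Tracking the parameter updates then gives precisely $c_{l+1} = \xi_l c_l^2\rho_{1,l}^2/t_l$ and $d_{l+1} = c_l^2\xi_l^2\rho_{1,l}^2\,p_{L-l-1}/p_{L-l}$, with the constant remainder $T_{l+1}-T_l = d_l^2\rho_{1,l}^2\xi_l/(2t_l)\cdot p_{L-l-1}/p_{L-l} - c_l\xi_l^2/2 + \xi_l t_l/2 - d_l^2/(2c_l)\cdot p_{L-l-1}/p_{L-l}$.

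Finally, after $L$ iterations the recursion hits $\bR^{(0)} = \bI$, and the innermost problem reduces to
\begin{equation*}
\min_{\bmu}\ \frac{C}{2p}\norm{\bmu}^2 + \frac{D}{p}\bg^T\bmu + R(\bmu+\btheta^*)
= \calM_{\frac{p}{C} R(\cdot+\btheta^*)}\!\left(-\frac{D}{C}\bg\right),
\end{equation*}
by definition of the Moreau envelope. Combining with the accumulated $T_L$ yields the claimed nested formula. The main obstacle, as in the general proof, is ensuring uniform (not merely pointwise) convergence across all layers of the nested min-max so that the interchange of limits and optima is justified; this will follow exactly as in \cite{Bosch2022DoubleDescent}[Appendix~B] by showing strong convexity-concavity at each level (which bounds partial derivatives in the scalar variables) and invoking an $\epsilon$-net argument. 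A secondary bookkeeping challenge is verifying that the compactification lemmas remain valid uniformly in the layer index; this is straightforward because each layer introduces only bounded multiplicative factors $\rho_{1,l},\rho_{2,l}$ and because Lemma~\ref{lem:BoundingRL} bounds all $\bR^{(l)}$ simultaneously with high probability.
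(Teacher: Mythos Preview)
Your proposal is correct and follows essentially the same approach as the paper: the paper's own proof simply states that ``the proof is the same as the one given for the CGMT analysis for layers of different sizes'' and refers back to the argument for Theorem~\ref{thm:CGMTsquareloss}. Your plan to rerun that recursive CGMT argument, specializing at each step to the equal-width case so that the auxiliary $(\chi_l,k_l)$ variables collapse and the isotropic $\rho_{2,l}^2\bI$ contributions accumulate into $C$ and $D$, is exactly that specialization carried out in more detail than the paper itself provides.
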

\begin{proof}
The proof is the same as the one given for the CGMT analysis for layers of different sizes. We therefore do not give it here in full.
\end{proof}

\section{Lyapunov Recursions} \label{app:sec:LyapanovProofs}
Let $\bA$ be a $n\times m$ matrix with random entries. Consider the function $f_A(\lambda)$ with gives the probability distribution, or \textit{eigendistribution}, of the eigenvalues of the matrix $A$, defined to be
\begin{eqnarray}
    f_\bA(\lambda) = \frac{1}{n}\sum_{i=1}^n \delta_{\lambda_i}
\end{eqnarray}
where $\lambda_i$ is the $i$th eigenvalue of $A$ and $\delta$ is the dirac measure .

To analyze this distribution, we may instead consider the Stieltjes transform of the distribution $f_{A}$, this transform is defined by 
\begin{eqnarray}\label{app:eq:FormalStieltjesTransform}
    S_{\bA}(z) = \bbE\left[\frac{1}{\lambda - z} \right] = \int \frac{f_\bA(\lambda)}{\lambda - z}d\lambda.
\end{eqnarray}
Here $z$ is a complex number. The original distribution may be recovered by means of the inverse transform

\begin{eqnarray}
    f_{\bA}(\lambda) = \lim_{\omega \rightarrow 0^+} \frac{1}{\pi}\mathrm{Im}\left[S_{\bA}(\lambda + i\omega) \right]
\end{eqnarray}
where $i$ is the imaginary unit. The Stieltjes transform can also be compute directly from the random matrix $\bA$ instead of using equation \eqref{app:eq:FormalStieltjesTransform}. We give the following lemma

\begin{lemma}
    The Stieltjes transform of the expected eigendistribution of a Hermitian random $n\times n$ matrix $\bA$ may be expressed as
    \begin{eqnarray}
        S_{\bA}(z) = \frac{1}{n}\bbE\ \tr(\bA - z\bI)^{-1}
    \end{eqnarray}
\end{lemma}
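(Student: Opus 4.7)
The plan is to unfold the definition of the Stieltjes transform, substitute in the empirical spectral measure, and then recognize the resulting sum as the trace of the resolvent. First I would start from the defining identity $S_{\bA}(z) = \int \frac{f_{\bA}(\lambda)}{\lambda - z}\,d\lambda$, where here $f_{\bA}$ is understood as the expected empirical eigendistribution $\bbE[\frac{1}{n}\sum_{i=1}^n \delta_{\lambda_i}]$, with $\lambda_i$ the (random) eigenvalues of $\bA$. By Fubini's theorem (justified below), this integral equals $\frac{1}{n}\bbE\bigl[\sum_{i=1}^n \int \frac{\delta_{\lambda_i}(\lambda)}{\lambda - z}\,d\lambda\bigr]$, and the sifting property of the Dirac measure collapses the integral to $\frac{1}{n}\bbE\bigl[\sum_{i=1}^n \frac{1}{\lambda_i - z}\bigr]$.

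The remaining step is a purely algebraic spectral identification. Since $\bA$ is Hermitian, it admits an eigendecomposition $\bA = \bU\bLambda\bU^{T}$ with $\bLambda = \mathrm{diag}(\lambda_1,\ldots,\lambda_n)$, and so $(\bA - z\bI)^{-1} = \bU(\bLambda - z\bI)^{-1}\bU^{T}$. Its eigenvalues are $\frac{1}{\lambda_i - z}$, and by unitary invariance of the trace we obtain $\tr(\bA - z\bI)^{-1} = \sum_{i=1}^n \frac{1}{\lambda_i - z}$. Dividing by $n$ and pulling this inside the expectation gives the claimed identity $S_{\bA}(z) = \frac{1}{n}\bbE\,\tr(\bA - z\bI)^{-1}$.

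The only point requiring a brief justification is the use of Fubini and the well-definedness of the resolvent. For $z\in\bbC$ with $\mathrm{Im}(z)\neq 0$, the integrand $(\lambda - z)^{-1}$ is bounded in modulus by $|\mathrm{Im}(z)|^{-1}$ uniformly in $\lambda\in\bbR$, so the interchange of expectation and integral is immediate, and moreover $\bA - z\bI$ is invertible with probability one. This is the standard regime where the Stieltjes transform is analyzed, and the identity then extends to the real axis away from the spectrum by analytic continuation. No part of the argument is technically delicate; the step most worth stating carefully is the trace identity, since the rest is essentially bookkeeping on the definitions.
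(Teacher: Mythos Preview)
Your proof is correct and complete; the paper itself does not give a proof but simply cites \cite{vakili2011RandomMatrix}, Lemma~2.3.1. Your argument is the standard elementary one---unfold the definition of $S_{\bA}$, insert the empirical measure, apply the Dirac sifting property, and identify the resulting sum with the trace of the resolvent via the spectral theorem---and the Fubini justification for $\mathrm{Im}(z)\neq 0$ is the right remark to include.
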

\begin{proof}
For a proof see \cite{vakili2011RandomMatrix} [lemma 2.3.1]
\end{proof}

Another transform that we will make use of in our analysis of the recursively defined matrix $\bR$ is the $S$-transform may be expressed in terms of the Stieltjes transform by means of 
\begin{eqnarray}
    \Sigma_{\bA}(z) = \frac{z+1}{z}\left(-\frac{1}{z}S_{\bA}\left(\frac{1}{z}\right) - 1 \right)^{\lbrace-1\rbrace} = \frac{z+1}{z}\left(\sum_{i=1}^{\infty} m_iz^i \right)^{\lbrace -1\rbrace}
\end{eqnarray}
Here $\lbrace -1 \rbrace$ denotes the functional inverse, and $m_i$ is the $i$th moment of the distribution $f_{\bA}$. The S-transform has two properties that are instrumental for our analysis. Firstly, the S-transform and the Stieltjes transform satisfy the following relation:
\begin{eqnarray}\label{app:eq:S-StieltjesRelation}
    \Sigma_{\bA} = -\frac{1}{z}S_{\bA}\left(\frac{1+z}{z\Sigma_{\bA}(z)} \right)
\end{eqnarray}

The second key property of the S-transform relates it how it behaves with respect to matrix product. For this we introduce the following lemma

\begin{lemma}
Let $\bA, \bB$ be two non negative unitarily invariant matrices, and let $\bC = \bA\bB$, then the S transform of the eigendistribution of $\bC$ satisfies 
\begin{eqnarray}
    \Sigma_{\bC}(z) = \Sigma_{\bA}(z)\Sigma_{\bB}(z) 
\end{eqnarray}
\end{lemma}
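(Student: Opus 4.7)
My plan is to separate the claim into two purely algebraic steps. The outer identity
\begin{equation*}
S_{l+1}(z) = \rho_{1,l+1}^{-2}\, \Omega_l\!\left(\frac{z - \rho_{2,l+1}^2}{\rho_{1,l+1}^2}\right)
\end{equation*}
follows immediately by a change of variable: since $\bR^{(l+1)} = \rho_{1,l+1}^2 \bA_l + \rho_{2,l+1}^2 \bI$ with $\bA_l := \bW^{(l+1)} \bR^{(l)} \bW^{(l+1)T}$, the spectrum of $\bR^{(l+1)}$ is an affine image of the spectrum of $\bA_l$, and the Stieltjes transform transforms accordingly. This identifies $\Omega_l$ as the Stieltjes transform of $\bA_l$, so the substantive content of the theorem is a Marchenko--Pastur-type implicit equation relating $\Omega_l$ to $S_l$.

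For this equation I would use a cavity (leave-one-out plus Sherman--Morrison) argument rather than the S-transform. Rotational invariance of the Gaussian rows of $\bW^{(l+1)}$ lets me diagonalize $\bR^{(l)} = \bU \Lambda \bU^T$ and absorb $\bU$ into the weights, giving $\bA_l \overset{d}{=} \sum_{i=1}^{p_l} \lambda_i \tilde{\bv}_i \tilde{\bv}_i^T$ with $\tilde{\bv}_i$ i.i.d.\ $\calN(\bm{0}, \tfrac{1}{p_l} \bI_{p_{l+1}})$ and $\lambda_i$ the eigenvalues of $\bR^{(l)}$. Writing $\bQ(z) = (\bA_l - z \bI)^{-1}$, the identity $\bA_l \bQ = \bI + z \bQ$ gives
\begin{equation*}
1 + z \Omega_l(z) = \frac{1}{p_{l+1}} \sum_{i=1}^{p_l} \lambda_i\, \tilde{\bv}_i^T \bQ(z) \tilde{\bv}_i .
\end{equation*}
Sherman--Morrison expresses $\tilde{\bv}_i^T \bQ \tilde{\bv}_i$ in terms of the leave-one-out resolvent $\bQ_{-i}$; Hanson--Wright concentration plus the rank-one trace perturbation bound then yield $\tilde{\bv}_i^T \bQ_{-i} \tilde{\bv}_i = \beta \Omega_l(z) + o(1)$ almost surely, with $\beta = p_{l+1}/p_l$. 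Substituting and recognizing $\tfrac{1}{p_l} \sum_i (1 + \lambda_i \beta \Omega_l)^{-1} = -(\beta \Omega_l)^{-1} S_l(-1/(\beta \Omega_l))$ produces an implicit fixed-point relation for $\Omega_l$. The base case $\Omega_0$ comes out by plugging $S_0(w) = 1/(1-w)$ (since $\bR^{(0)} = \bI$) into the equation, reducing it to the quadratic $\beta_1 z \Omega_0^2 - (1 - \beta_1 - z) \Omega_0 + 1 = 0$; picking the branch analytic in the upper half-plane with $\Omega_0(z) \sim -1/z$ at infinity recovers the stated closed form.

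The main obstacle is reconciling the output of the cavity computation with the exact form displayed in the theorem. My derivation naturally lands on $\beta \Omega_l (1 - \beta - \beta z \Omega_l) = S_l(-1/(\beta \Omega_l))$, whereas the statement evaluates $S_l$ at $z/(1 - \beta - \beta z \Omega_l)$. Both implicit equations must define the same analytic function $\Omega_l(z)$, but the equivalence is not obvious term by term. I would establish it by introducing the companion Stieltjes transform $\underline{\Omega}_l(z) := -(1-\beta)/z + \beta \Omega_l(z)$ and rewriting both equations as a single Silverstein-type fixed point in $\underline{\Omega}_l$, from which both forms follow simultaneously. A consistency check on the cases $\bR^{(l)} = \bI$ and $\bR^{(l)} = a \bI$ already confirms that the two equations produce the same quadratic, which is strong evidence for the equivalence in the general case.
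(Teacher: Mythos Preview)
Your proposal does not address the stated lemma at all. The statement you were asked to prove is the multiplicativity of the S-transform: for non-negative unitarily invariant $\bA,\bB$ and $\bC=\bA\bB$, one has $\Sigma_{\bC}(z)=\Sigma_{\bA}(z)\Sigma_{\bB}(z)$. Instead, every line of your proposal is aimed at the \emph{theorem that uses this lemma}, namely the recursion $S_{l+1}(z)=\rho_{1,l+1}^{-2}\Omega_l\bigl((z-\rho_{2,l+1}^2)/\rho_{1,l+1}^2\bigr)$ together with the implicit equation for $\Omega_l$ in terms of $S_l$. Those are different statements; the lemma is a tool, the theorem is the result built with it.

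For the record, the paper's proof of the lemma itself is a two-line appeal to the literature: the S-transform is multiplicative for asymptotically free matrices (Voiculescu's original result, see e.g.\ the free-probability lecture notes cited), and unitarily invariant ensembles are asymptotically free of each other. There is nothing to compute; it is a citation.

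If your intention was in fact to prove the recursion theorem rather than the lemma, then your cavity/Sherman--Morrison route is a genuinely different and self-contained alternative to the paper's argument. The paper factors $\bA^{(l)}$ as a product of a Wishart and $\bR^{(l-1)}$, invokes the S-transform multiplicativity lemma, and then converts back to the Stieltjes transform via the relation $\Sigma_{\bA}(z)=-z^{-1}S_{\bA}\bigl((1+z)/(z\Sigma_{\bA}(z))\bigr)$. Your approach bypasses free probability entirely at the cost of the concentration and leave-one-out bookkeeping; the paper's approach is shorter but imports heavier machinery. Your observation that the two implicit equations look different term by term is correct, and the reconciliation via the companion transform $\underline{\Omega}_l$ is the standard way to see they agree. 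But none of this is a proof of the lemma you were assigned.
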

\begin{proof}
The S-transform is multiplicative for matrix product that are asymptotically free \cite{emery2007lectures}. To see that unitarily invariant matrices are free see \cite{voiculescu1991limit}.
\end{proof}

Finally, we note that if $\bH$ is a $m\times n$ matrix with element distributed as $\calN(0, 1)$, then the matrix $\bA = \frac{1}{n}\bH\bH^T$ is a Wishart matrix. We note that the Stieltjes transform of a Wishart matrix is given by the Marcenko-Pastur Law
\begin{eqnarray}
    S_{\bA}(z) = \frac{1 - \frac{m}{n} - z + \sqrt{z^2 - 2\left(\frac{m}{n} + 1\right)z + \left(\frac{m }{n} - 1\right)^2}}{\frac{2m}{n}z}
\end{eqnarray}

and the $S-$transform of a Wishart is given by
\begin{eqnarray}\label{app:eq:StransformWishart}
    \Sigma_{\bA}(z) = \frac{1}{1+\frac{m}{n}z}
\end{eqnarray}

\subsection{Analysis of the Covariance Matrix \textbf{R}}
In this section we adopt an approach for studying Stieltjes transforms of Lyapanov Recursions of Random matrices discussed by \cite{vakili2011RandomMatrix}[Section 3].

We recall that $\bR^{(l)}$ is given by

\begin{eqnarray}
    \bR^{(l)} = \frac{\rho_{1,l}^2}{p_{l-1}}\bW^{(l)}\bR^{(l-1)}\bW^{(l)T} + \left(\rho_{2,l} \right)^2\bI.
\end{eqnarray}
where we recall that $\bR^{(0)} = \bI$ and that the rows of $\bW^{(l)}$, $\bw_i^{(l)}$ are distributed as $\calN(0, \bI)$. We can note that $\bW^{(l)T}\bW^{(l)}/p_{l-1}$ is a Wishart matrix.
We now wish to compute the Stieltjes transform of $\bR^{(l)}$. The Stieltjes transform is given by
\begin{eqnarray}
    S_{\bR^{(l)}}(z) = \frac{1}{p_l}\bbE\tr \left(\frac{\rho_{1,l}^2}{p_{l-1}}\bW^{(l)}\bR^{(l-1)}\bW^{(l)T} + (\rho_{2,l}^2-z)\bI \right)^{-1}
\end{eqnarray}

We now let the matrix $\bA^{(l)} = \bW^{(l)T}\bR^{(l-1)}\bW^{(l)T}/p_{l-1}$. We can then note that
\begin{eqnarray}
    S_{\bR^{(l)}}(z) = \frac{1}{p_{l}}\bbE\tr\left( \rho_{1, l}^2\bA^{(l)} + (\rho_{2,l}^2 -z)\bI\right)^{-1} = \frac{1}{\rho_{1,l}^2}\frac{1}{p_{l}}\bbE\tr\left(\bA^{(l)} + \frac{(\rho_{2,l}^2 -z)}{\rho_{1,l}^2}\bI\right)^{-1}\nwl
    =  \frac{1}{\rho_{1,l}^2}S_{\bA^{(l)}}\left(\frac{z-\rho_{2,l}^2}{\rho_{1,l}^2}\right)
\end{eqnarray}
Our goal is to now find an expression for the Stieltjes transform of $\bA^{(l)}$. We note that $\bW^{(l)T}\bR^{(l-1)}\bW^{(l)T}/p_{l-1}$ has the same eigenvalues as $\bW^{(l)T}\bW^{(l)T}/p_{l-1}\bR^{(l-1)}$. we recall that  $\bW^{(l)T}\bW^{(l)T}/p_{l-1}$ is Wishart and unitarily Invariant, and similarly is $\bR^{(l-1)}$. As such we can make use of the properties of $S-$transforms to note that:
\begin{eqnarray}
    \Sigma_{\bA^{(l)}}(z) = \Sigma_{\bW^{(l)T}\bW^{(l)T}/p_{l}}(z)\Sigma_{\bR^{(l-1)}}(z).
\end{eqnarray}
We can then make use of equation \eqref{app:eq:S-StieltjesRelation} to obtain
\begin{eqnarray}
    S_{\bA^{(l)}}\left(\frac{1+z}{z\Sigma_{\bA^{(l)}}}(z)\right) = \Sigma_{\bW^{(l)T}\bW^{(l)T}/p_{l-}}(z)S_{\bR^{(l-1)}}\left(\frac{1+z}{z\Sigma_{\bR^{(l-1)}}(z)}\right) \nwl = \Sigma_{\bW^{(l)T}\bW^{(l)T}/p_{l-1}}(z)S_{\bR^{(l-1)}}\left(\frac{1+z}{z\Sigma_{\bA^{(l)}}(z)}\Sigma_{\bW^{(l)T}\bW^{(l)T}/p_{l-1}}(z)\right)
\end{eqnarray}

We now let 
\begin{eqnarray}
    x = \frac{1+z}{z\Sigma_{\bA^{(l)}}(z)},
\end{eqnarray}
and then note that
\begin{eqnarray}
    x\Sigma_{\bA^{(l)}}(z) = \frac{1+z}{z} \Rightarrow x\left(\frac{-1}{z}\right)S_{\bA^{(l)}}(x) = \frac{1+z}{z}\nwl
    \Rightarrow z = -1-xS_{\bA^{(l)}}(x)
\end{eqnarray}

By substituting in this expression we obtain
\begin{eqnarray}
    S_{\bA^{(l)}}(x) =  \Sigma_{\bW^{(l)T}\bW^{(l)T}/p_{l-1}}(-1 - xS_{\bA^{(l)}}(x)) S_{\bR^{(l-1)}}\left(x \Sigma_{\bW^{(l)T}\bW^{(l)T}/p_{l-1}}(-1-xS_{\bA^{(l)}}(x)) \right)
\end{eqnarray}
Finally, we recall equation \eqref{app:eq:StransformWishart}. Letting $\beta_l = \frac{p_l}{p_{l-1}}$  we use this property to simplify the relation to:

\begin{eqnarray}
    S_{\bA^{(l)}}(x) = \frac{1}{1 - \beta_l - \beta_l xS_{\bA^{(l)}}(x)}S_{\bR^{(l-1)}}\left(\frac{x}{1 - \beta_l - \beta_lxS_{\bA^{(l)}}(x)} \right)
\end{eqnarray}
Finally, letting $\Omega_{l-1}(\cdot) = S_{\bA^{(l)}}(\cdot)$. We can conclude that

\begin{eqnarray}
  S_{\bR^{(l+1)}}(z) = \frac{1}{\rho_{1,l}^2} \Omega_{l}\left( \frac{z- \rho_{2,l}^2}{\rho_{1,l}^2} \right)\\
  \Omega_l(z) = \frac{1}{1 - \beta_l - z\beta_l\Omega_{l}(z)}S_{\bR^{(l)}}\left(\frac{z}{1 - \beta_l - \beta_lz\Omega_l(z)} \right)
\end{eqnarray}
Which concludes the proof.

\end{document}